\numberwithin{equation}{section}
\newtheorem{thm}{Theorem}
\newtheorem{lem}{Lemma}
\newtheorem{cor}{Corollary}
\newtheorem{pro}{Proposition}
\newtheorem{defi}{Definition}
\newcommand {\emptycomment}[1]{}
\newcommand{\be }{\begin{equation}}
	\newcommand{\ee }{\end{equation}}
\newcommand{\Real}{\mathbb R}
\newcommand{\Numb}{\mathbb N}
\newcommand{\huaB}{\mathcal{B}}
\newcommand{\huaS}{\mathcal{S}}
\newcommand{\huaL}{\mathcal{L}}
\newcommand{\huaE}{\mathcal{E}}
\newcommand{\huaY}{\mathcal{Y}}
\newcommand{\huaP}{\mathcal{P}}
\newcommand{\huaH}{\mathcal{H}}
\newcommand{\huaO}{\mathcal{O}}
\newcommand{\huaT}{\mathcal{T}}
\newcommand{\nono}{\nonumber}
\newcommand{\noi}{\noindent}
\newcommand{\f}{\frac}
\newcommand{\ww}{\widetilde}
\newcommand{\mbb}{\mathbb}
\newcommand{\wh}{\widehat}
\def\bea{\begin{eqnarray}}
	\def\eea{\end{eqnarray}}
\def\be{\begin{equation}}
	\def\ee{\end{equation}}
\def\blm{\begin{lem}}
	\def\elm{\end{lem}}
\def\bes{\begin{eqnarray*}}
	\def\ees{\end{eqnarray*}}
\title{Learning Theory of Distribution Regression with  Neural Networks}
\author[1]{Zhongjie Shi\thanks{zhongjie.shi@esat.kuleuven.be}}
\author[2]{Zhan Yu\thanks{mathyuzhan@gmail.com}}
\author[3]{Ding-Xuan Zhou\thanks{dingxuan.zhou@sydney.edu.au}}
\affil[1]{Department of Electrical Engineering, ESAT-STADIUS, KU Leuven, Belgium}
\affil[2]{Department of Mathematics, Hong Kong Baptist University, Hong Kong}
\affil[3]{School of Mathematics and Statistics, University of Sydney, Australia}
\date{January 2022 (first version)}
\begin{document}
	
	\maketitle
	
	\begin{abstract}
		In this paper, we aim at establishing an approximation theory and a learning theory of distribution regression via a fully connected neural network (FNN). In contrast to the classical regression methods, the input variables of distribution regression are  probability measures. Then we often need to perform a second-stage sampling process to approximate the actual information of the distribution. On the other hand, the classical neural network structure requires the input variable to be a vector.   When the input samples are probability distributions, the traditional deep neural network method cannot be directly used and the difficulty arises for distribution regression. A well-defined neural network structure for distribution inputs is intensively desirable. There is no mathematical model and theoretical analysis on neural network realization of distribution regression. To overcome technical difficulties and address this issue, we establish a novel fully connected neural network framework to realize an approximation theory of functionals defined on the  space of Borel probability measures. Furthermore, based on the established functional approximation results, in the hypothesis space induced by the novel FNN structure with distribution inputs, almost optimal learning rates for the proposed distribution regression model up to logarithmic terms are derived via a novel two-stage error decomposition technique.
	\end{abstract}
	
	\noindent {\it Keywords}: Learning theory,  distribution regression, neural networks, approximation rates, learning rates.
	
	
	\section{Introduction}
	Recent years have witnessed a vast number of applications  of deep learning in various fields of science and engineering. Deep learning based on deep neural networks has become a powerful tool to provide various models and structures for many complex learning tasks, taking advantages of the huge improvement of computing power in the era of big data \cite{Goodfellow2016}. The classical fully connected neural network for learning functions of input variable vector $x=(x_i)_{i=1}^d\in\mbb R^d$ with $J$ layers of neurons $\{H^{(k)}: \mbb R^d\rightarrow\mbb R^{d_k}\}_{k=1}^J$ with width $\{d_k\}_{k=1}^J$ is defined iteratively by
	\be
	H^{(k)}(x)=\sigma\left(F^{(k)}H^{(k-1)}(x)-b^{(k)}\right), \ \ \  k=1,2,...,J, \label{nn1}
	\ee
	where $\sigma:\mbb R\rightarrow\mbb R$ is an activation function acting on vectors component-wise, $F^{(k)}$ is a $d_k\times d_{k-1}$ matrix, $b^{(k)}$ is a bias vector, $H^{(0)}(x)=x$ with width $d_0=d$. Starting from 1980s, approximation theory of  neural networks has been well developed in lots of studies, such as \cite{Cybenko1989,Hornik1989,Barron1993,Mhaskar1993,chui1994,Yarotsky2017,Bolcskei2019,zhou2020acha,Lu2021,Daubechies2022}.
	
	The classical theory of deep neural networks circling around the generalization  with FNNs has been walking towards mature within the last few past decades, and the regression problem by using neural networks is always a popular topic.
	There have been several studies on learning rates of classical least square regression with deep neural networks (DNNs), and satisfactory learning rates have been derived \cite{chui2019b,SchmidtHieber2020}.  However,  in the existing work related to the regression analysis with DNNs, the input data are still limited to the Euclidean vectors.  A natural question appears when the data we need to handle are probability measures and the neural network structure defined in aforementioned works is not suitable for handling the distribution data directly.  That is, how can we utilize deep neural networks to handle such distribution data? Solving this problem is desirable, since in recent years there are a lot of applications dealing with distribution data \cite{rp2001,sspg2016}. One of the most popular objects in statistical learning theory is the distribution regression. However, it is still difficult to perform distribution regression in FNNs since an initial model to realize distribution regression with neural networks is still lacking. One of the main difficulties to realize distribution regression by utilizing neural networks is to establish an appropriate network structure induced by FNNs that is fit for distribution inputs. To overcome the limitation of existing neural network techniques and answer the above question, in this paper we provide a novel FNN structure to realize distribution regression and establish a meaningful regression model, finally we  derive the satisfactory  learning rates.


	Recently, there has been some progress in distribution regression which aims at regressing from probability measures to real-valued responses \cite{psrw2013,sgps2015,sspg2016,fgz2020,yhsz2021}. In such problems, in contrast to classical regression problems with vector inputs, we need to handle the inputs consisting of probability measures. Hence the sampling process often involves two stages. In the first stage, the distribution samples are selected from some meta distribution. To further get access to the actual information of the probability measure samples, we need to draw samples from the distribution samples, that is the so called second stage process.  In the literature of distribution regression,  most of  the existing works and their related regression models are based on a kernel mean embedding technique to transform  probability measure samples to the space of their mean embeddings which is compact. In these works, they choose the hypothesis space as the reproducing kernel Hilbert space induced by some Mercer kernel $K$, and use some regularization and integral operator techniques (e.g.\cite{sz2004, sz2007}), some nice learning rates are derived. However, to realize the  learning theory of distribution regression with FNNs, the aforementioned traditional kernel-based techniques break down. One of the main reasons is that the architecture
	of the neural network is essentially different from the kernel based hypothesis space. Furthermore, as we need to handle the distribution data instead of the classical Euclidean vectors in neural networks, the classical neural network structure as in \eqref{nn1} cannot be utilized directly. Hence, some new neural network structures must be established for handling distributions. To this end, we will first define a novel neural network structure with the variable of probability measures  that will be given in Definition \ref{defFNN} below. In the hypothesis space induced by the novel network structure, we establish a distribution regression framework with probability samples drawn from a class of Wasserstein space $(\huaP(\Omega), W_p)$, where $\huaP(\Omega)$ denotes a collection of Borel probability measures defined on a compact space $\Omega\subset\mbb R^d$, $\huaP(\Omega)$ is naturally a subset of the space $\huaB(\Omega)$ consisting of all Borel measures on $\Omega$, and $W_p$  denotes the well-known Wasserstein metric that will be defined later.
	
	
	We first introduce our two-stage distribution regression model where the inputs are distribution samples on the Wasserstein space $(\huaP(\Omega),W_p)$. In our distribution regression model, we assume that the distribution samples cannot be observed directly, the data are generated through a two-stage sampling process. In the first stage of sampling, the dataset $D=\{(\mu_i,y_i) \}_{i=1}^m$ is i.i.d. sampled from a meta Borel distribution $\rho$ on $\mathcal{U} \times \huaY$, where $\mathcal{U}=\huaP(\Omega)$, and $\huaY=\mathbb{R}$ is the output space. In the second stage of sampling, the dataset is $\hat{D}=\{(\{x_{i,j}\}_{j=1}^{n_i},y_i) \}_{i=1}^m$, where $\{x_{i,j} \in \Omega\}_{j=1}^{n_i}$ are i.i.d. sampled from the probability distribution $\mu_i$, which is a first-stage sample. If we use the empirical terminology $\hat{\mu}_i^{n_i}= \f{1}{n_i}\sum_{j=1}^{n_i} \delta_{x_{i,j}}$, then we can also denote the second-stage sample set by
	$$\hat{D}=\{(\hat{\mu}_i^{n_i},y_i) \}_{j=1}^m.$$
	
	The classical fully connected neural network takes a vector as input, which is not suitable for distribution regression. Since we may have samples of different size $n_i$ for different sample distribution $\mu_i$ in the second stage of sampling, if we just vectorize all the $n_i$ samples in the second stage to a vector as the input of a FNN, when the second-stage sample size $n_i$ is different for different first-stage sample $\left(\mu_i,y_i\right)$, the input dimension of the vector for the FNN is also different, thus making the traditional FNN with vector inputs fails in the distribution regression problem. Therefore, we propose a novel general FNN structure which takes a distribution rather than a vector as the input for the learning of distribution regression. In practical settings, this network structure  is able to utilize the empirical distribution
	\be
	\hat{\mu}_i^{n_i}= \f{1}{n_i}\sum_{j=1}^{n_i} \delta_{x_{i,j}} \label{empirical}
	\ee
	of the second-stage sample rather than the vector as the input. For a Borel measure $\mu$ and  an integrable function vector $h:\Omega\rightarrow\mbb R^D$, if we use the vector integral notation
	\begin{equation*}
		\int_{\Omega}h(x)d\mu(x)=\left[\begin{array}{c}
			\int_{\Omega}(h(x))_1d\mu(x) \\
			\int_{\Omega}(h(x))_2d\mu(x) \\
			\vdots \\
			\int_{\Omega}(h(x))_Dd\mu(x)
		\end{array}\right],
	\end{equation*}
	then the FNN scheme for distribution regression is given in the following definition.
	\begin{defi} [FNN for distribution regression]
		\label{defFNN}
		Let $\mu\in\huaP(\Omega)$ be an input distribution, the FNN for distribution regression $\{h^{(j)}: \huaP(\Omega) \rightarrow \mathbb{R}^{d_j} \}_{j=J_1}^{J}$ of type $(J_1,J)$, with depth $J\in\mbb N$ and realizing level $J_1\in\{0,1,...,J\}$, and width $\{d_j\}_{j=1}^J$ is defined iteratively by:
		\be \label{FNNlayer}
		h^{(j)}(\mu) = \left\{ \begin{array}{ll}
			\int_{\Omega} \sigma \left(F^{(J_1)} \sigma \left( \cdots \sigma\left(F^{(1)}x-b^{(1)}\right) \cdots \right)- b^{(J_1)} \right)  d\mu(x) ,& \hbox{if} \ j=J_1, \\
			\sigma\left(F^{(j)}h^{(j-1)}(\mu)-b^{(j)}\right),& \hbox{if} \ j=J_1+1, \dots , J. \end{array} \right.
		\ee
		where $\sigma$ is the ReLU activation function given by $\sigma(u)=\max\{u,0\}$, $F^{(j)} \in \mathbb{R}^{d_j \times d_{j-1}}$ is a full connection matrix with $d_0=d$, and $b^{(j)} \in \mathbb{R}^{d_j}$ is a bias vector.
	\end{defi}
	\noi An output of the above FNN is just a linear combination of the last layer $$c \cdot h^{(J)}(\mu),$$ where $c \in \mathbb{R}^{d_J}$ is a coefficient vector. In practical settings, an input probability measure always appears as an empirical distribution. For example, in practice,  the actual input  always has the form $\hat{\mu}_i^{n_i}= \f{1}{n_i}\sum_{j=1}^{n_i} \delta_{x_{i,j}}$, $i=1,2,...,m$, where $x_{i,j}\in\Omega\subset\mbb R^d$. Then the explicit form of $h^{(J)}(\hat{\mu}_i^{n_i})$ can be represented layer by layer from
	\be
	h^{(j)}(\hat{\mu}_i^{n_i}) = \left\{ \begin{array}{ll}
		\f{1}{n_i}\sum_{j=1}^{n_i} \sigma \left(F^{(J_1)} \sigma \left( \cdots \sigma\left(F^{(1)}x_{i,j}-b^{(1)}\right) \cdots \right)- b^{(J_1)} \right)   ,& \hbox{if} \ j=J_1, \\
		\sigma\left(F^{(j)}h^{(j-1)}(\hat{\mu}_i^{n_i})-b^{(j)}\right),& \hbox{if} \ j=J_1+1, \dots , J. \end{array} \right. \tag{\ref{FNNlayer}\textasteriskcentered}
	\ee
	then the FNN structure becomes more flexible to handle such empirical variables.
	
	If we use $\huaH_{\text{FNN}}$ to denote some hypothesis space induced by the  above FNN structure for probability measures in Definition \ref{defFNN} (the rigorous definition will be given in Section \ref{subsec23}), then a distribution regression scheme can be described as
	\be
	f_{\hat{D},\huaH_{\text{FNN}}}=\arg\min_{f\in \huaH_{\text{FNN}}}\frac{1}{m} \sum_{i=1}^m \left(f(\hat{\mu}_i^{n_i})-y_i \right)^2. \label{profnn}
	\ee

	In this paper, we  first derive rates of approximating by FNNs two classes of composite functionals with  distribution variables. The functional induced by ridge functions which we call  a ridge functional will be considered first. Then the functional induced by a composite function associated with a polynomial $Q$ will be considered. We will construct the approximation for the former functional with a  FNN of $J=2$, $J_1=1$ and then for the latter one with a  FNN of $J=3$, $J_1=2$ that are both involved in the  FNN structure given in Definition \ref{defFNN}.  Then with the help of the functional approximation results and a covering number bound developed for the hypothesis space induced by the FNN structure defined above, we derive  learning rates for the  FNN distribution regression scheme. The covering number argument is in fact based on a crucial fact of the compactness of the hypothesis space, which is also the foundation of the learning theory framework. In this paper, we have rigorously shown that the new-defined hypothesis space with distribution inputs from the Wasserstein space is compact.
	
	In fact, for a general distribution regression scheme without  kernel regularization, developing appropriate theoretical  results on the learning ability   in some  hypothesis space is still an open question  itself.  On the way of investigating the new distribution regression scheme and deriving  learning rates, one novel two-stage error decomposition technique is proposed in this paper. To the best of our knowledge, such a two-stage error decomposition technique appears for the first time in the literature of the learning theory of distribution regression. It overcomes the difficulty of deriving learning rates for the non-kernel based distribution regression scheme for which the existing literature has not considered yet. Almost optimal  learning rates for distribution regression up to logarithmic terms are  derived by utilizing neural networks. Our method of deriving almost optimal learning rates also highlights the importance of the FNN structure we construct in the two-stage sampling distribution regression scheme.

	\noi \textbf{Notations}:  we fix some notations and terminologies related to the  $\|\cdot\|_{\infty}$ norm of function or functional in different settings of this paper. In additional to the notations of $\|\cdot\|_{\infty}$ for common scale vectors or matrices, without specification, for a continuous function (or say a ``functional'') $f$ defined on the compact space $(\huaP(\Omega), W_p)$, we use the notation $\|h\|_{\infty}=\sup_{\mu\in\huaP(\Omega)}|h(\mu)|$.  For a vector of functions $h: \huaP(\Omega)\rightarrow \mbb R^D$ defined on $\huaP(\Omega)$, we use $\|h\|_{\infty}=\max_{1\leq i\leq D}\sup_{\mu\in\huaP(\Omega)}|(h(\mu))_i|$. For a vector of functions $h:\Omega\rightarrow \mbb R^D$, we also use $\|h\|_{\infty}=\max_{1\le i\leq D}\sup_{x\in\Omega}|(h(x))_i|$.

	\section{Main Results}
	We show the capacity of our proposed FNN for distribution regression by deriving the approximation rate of a  class of composite nonlinear functional $f\circ L_G$ with the variable of probability measures and a univariate function $f:\mbb R\rightarrow\mbb R$
	\be \label{truefunctional}
	f\circ L_G:\mu\mapsto	f\left(L_G(\mu)\right)= f\left(\int_{\Omega}G(x)d\mu(x) \right).
	\ee
	The input $\mu$ is a probability distribution defined on a compact set $\Omega\subset\mbb B$, where $\mbb B:= \{x\in \mathbb{R}^d: \lVert x \rVert_2\leq 1\}$ is the unit ball of the Euclidean space $\mbb R^d$, $L_G$ is the inner functional of the composite nonlinear functional $f\circ L_G$ in the form of
	\be
	\nono L_G(\mu)=\int_{\Omega}G(x)d\mu(x)
	\ee
	defined on the  space $\huaP(\Omega)$  of all Borel probability measures on $\Omega$, and $G$ is a function that induces the functional $L_G$. In our setting, since $\Omega$ is compact, the Riesz–Markov–Kakutani representation theorem asserts that the space $\huaB(\Omega)$ of all finite Borel measures on $\Omega$ is the dual of the space $C(\Omega)$ of  continuous functions on $\Omega$. So, $C(\Omega)$ is isometrically embedded into the closed space of the dual space of $\huaB(\Omega)$. Hence, by considering a continuous function $G$ on $\Omega$ as the inducing function of the functional $L_G$, the formulation of $L_G(\mu)$ is naturally meaningful as a continuous functional defined on $\huaP(\Omega)\subset\huaB(\Omega)$. Furthermore, since $\Omega$ is a compact metric space equipped with the standard Euclidean metric, the space $\huaP(\Omega)$ is also a compact metric space under the Wasserstein metric, which ensures that the learning theory architecture of the proposed neural networks for distribution regression is  well-defined. In the following, we will consider two classes of functionals, one is induced by the ridge function in the form of $G(x)=g(\xi\cdot x)$ which is extremely popular in the realm of applications related to neural networks. The other one is induced by a more general composite function in the form of $G(x)=g(Q(x))$ with  a $q$-degree polynomial $Q$.

	\subsection{Rates of approximating ridge functionals}
	\label{subsec21}
	We first consider the case where $G(x)=g(\xi \cdot x)$ is actually a ridge function, with a feature vector $\xi \in \mathbb{R}^d$. We assume $B_{\xi}=\|\xi\cdot x\|_{C(\Omega)}$ and $g\in C^{0,1}[-B_\xi,B_\xi]$, the space of Lipschitz functions on $[-B_\xi,B_\xi]$, with semi-norm $|g|_{C^{0,1}}:= \sup\limits_{x_1 \not= x_2\in [-B_\xi, B_\xi]}\frac{|g(x_1)-g(x_2)|}{\lVert x_1-x_2\lVert_2}$, and $B_g=\lVert g \rVert_{C([-B_\xi,B_\xi])}$, and $f\in C^{0,\beta}[-B_G,B_G]$, the space of Lipschitz-$\beta$ functions on $[-B_G,B_G]$ with $0<\beta\leq1$, with semi-norm $\vert f\vert_{C^{0,\beta}}:= \sup\limits_{x_1 \not= x_2\in [-B_G, B_G]}\frac{|f(x_1)-f(x_2)|}{\lVert x_1-x_2\lVert_2^\beta}$, in which $B_G$ is defined as
	\be
	B_G=B_g+2B_{\xi}|g|_{C^{0,1}} \label{BGr}
	\ee
	and $\lVert f \rVert_\infty=\lVert f \rVert_{C[-B_G,B_G]}$. Then the functional to be approximated is of the form
	\be \label{ridgeform}
	f\circ L_G^{\xi}:\mu\mapsto f\left(L_G^{\xi}(\mu)\right)=f\left(\int_{\Omega} g\left(\xi \cdot x \right)d\mu(x)\right), \ \mu\in\huaP(\Omega).
	\ee
	During the construction of a FNN  to realize an approximation of the functional $f\left(L_G(\mu) \right)$,
	we need to first approximate the inner functional $L_G^{\xi}$ (we call it a ridge functional) induced by the feature vector $\xi\in\mbb R^d$ defined by
	\be
	L_G^{\xi}: \mu \mapsto \int_{\Omega} g\left(\xi \cdot x\right)d\mu(x),
	\ee
	
	Our first result gives  rates of approximating the composite functional (\ref{ridgeform}) by a  two-layer FNN  for distribution regression defined in Definition \ref{defFNN}. The \textit{free parameters} are defined as the implicit training parameters in our approximator construction, which is different from the total training parameters in the classical FNN structure due to the special structure of our construction.
	
	\begin{thm} \label{thmridge}
		Let the functional $f\circ L_G^{\xi}$ be in the ridge form  (\ref{ridgeform}), where $\xi \in \mbb R^d$ is a feature vector, $g \in C^{0,1}[-B_\xi,B_\xi]$, and $f \in C^{0,\beta}[-B_G,B_G]$ for some $0<\beta \leq 1$. Then for any $N \in \mathbb{N}$, there exists a  FNN of type $(1,2)$ and width $2N+3$ having the structure of Definition \ref{defFNN} with $F^{(j)}$, $b^{(j)}$, $j=1,2$, explicitly constructed such that the following approximation rates hold,
		\be
		\inf_{\|c\|_{\infty}\leq\f{4\lVert f \rVert_\infty N}{B_G}}\left\{\sup_{\mu\in \huaP(\Omega)} \left| c \cdot h^{(2)}(\mu)- f(L_G^{\xi}(\mu)) \right|\right\} \leq \f{2B_G^\beta \left| f \right|_{C^{0,\beta}}+(2B_\xi \lvert g \rvert_{C^{0,1}})^\beta\left| f \right|_{C^{0,\beta}}}{N^\beta}.  \label{ridgeapp}
		\ee
		The total number of free parameters in this network is $\mathcal{N}=8N+d+12$.
	\end{thm}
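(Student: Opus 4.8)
The plan is to realize the composite functional by nesting two piecewise-linear approximations, one in each layer of the type $(1,2)$ network: the integration layer $h^{(1)}$ will encode a piecewise-linear surrogate $\tilde g$ of the inner Lipschitz profile $g$, while the second layer together with the readout vector $c$ will encode a piecewise-linear surrogate $\tilde f$ of the outer Hölder function $f$. Fix $N$ and take the uniform grids $t_k=-B_\xi+\tfrac{2B_\xi k}{N}$ on $[-B_\xi,B_\xi]$ and $s_l=-B_G+\tfrac{2B_G l}{N}$ on $[-B_G,B_G]$. The error budget splits accordingly: the inner grid contributes an $L^\infty$ error of order $2B_\xi|g|_{C^{0,1}}/N$, which after passing through the $\beta$-Hölder outer function becomes the second summand in \eqref{ridgeapp}, whereas the outer grid contributes the first summand $2B_G^\beta|f|_{C^{0,\beta}}/N^\beta$.

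First I would build $\tilde g$ as the continuous piecewise-linear interpolant of $g$ at the nodes $t_k$. Because every argument $\xi\cdot x$ with $x\in\Omega$ satisfies $\xi\cdot x+B_\xi\ge 0$, the leading affine piece of $\tilde g$ is itself a single ReLU, so $\tilde g(u)=g(t_0)+\sum_{k=0}^{N-1}a_k\,\sigma(u-t_k)$ on $[-B_\xi,B_\xi]$, with $a_0$ the first slope and $a_k$ the slope increment at $t_k$. Taking each row of $F^{(1)}$ equal to $\xi$ and the biases equal to the $t_k$ (plus one row $0$ with bias $-1$ producing the constant neuron $\int_\Omega\sigma(1)\,d\mu=1$), the integration layer outputs $(h^{(1)}(\mu))_k=\int_\Omega\sigma(\xi\cdot x-t_k)\,d\mu(x)$. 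Since $\mu$ is a probability measure, the affine readout $v\cdot h^{(1)}(\mu)+g(t_0)$ with $v=(a_0,\dots,a_{N-1})$ equals exactly $\tilde L(\mu):=\int_\Omega\tilde g(\xi\cdot x)\,d\mu(x)$, and $|\tilde L(\mu)-L_G^{\xi}(\mu)|\le\|\tilde g-g\|_{C[-B_\xi,B_\xi]}\le 2B_\xi|g|_{C^{0,1}}/N$. One checks $|L_G^{\xi}(\mu)|\le B_g$ and $|\tilde L(\mu)|\le B_g+2B_\xi|g|_{C^{0,1}}=B_G$, which is precisely why $B_G$ is defined by \eqref{BGr}: both $\tilde L(\mu)$ and $L_G^{\xi}(\mu)$ stay inside the interval where $f$ and its interpolant are controlled. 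The second layer then evaluates $\tilde f$, the piecewise-linear interpolant of $f$ at the $s_l$: each neuron computes $\sigma(v\cdot h^{(1)}(\mu)-(s_l-g(t_0)))=\sigma(\tilde L(\mu)-s_l)$ (so the relevant rows of $F^{(2)}$ equal $v$ and the bias folds in $g(t_0)$), the additive constant $f(s_0)$ is supplied by a second-layer neuron fed from the constant component of $h^{(1)}(\mu)$, and the readout $c$ is chosen with entries equal to the slope increments of $\tilde f$ so that $c\cdot h^{(2)}(\mu)=\tilde f(\tilde L(\mu))$ identically.

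The estimate then follows from the decomposition $c\cdot h^{(2)}(\mu)-f(L_G^{\xi}(\mu))=\big(\tilde f(\tilde L(\mu))-f(\tilde L(\mu))\big)+\big(f(\tilde L(\mu))-f(L_G^{\xi}(\mu))\big)$: the first term is at most $\|\tilde f-f\|_{C[-B_G,B_G]}\le 2B_G^\beta|f|_{C^{0,\beta}}/N^\beta$ by the interpolation error for $\beta$-Hölder functions, and the second is at most $|f|_{C^{0,\beta}}|\tilde L(\mu)-L_G^{\xi}(\mu)|^\beta\le (2B_\xi|g|_{C^{0,1}})^\beta|f|_{C^{0,\beta}}/N^\beta$, which gives \eqref{ridgeapp}. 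For the coefficient bound I would estimate the slope increments of $\tilde f$ by $|f'_l|\le 2\|f\|_\infty/(2B_G/N)=\|f\|_\infty N/B_G$, whence $\|c\|_\infty\le 4\|f\|_\infty N/B_G$; the detailed bookkeeping of the explicit construction (counting $\xi$, the grid nodes, slopes, biases and output weights across the two layers) then yields the stated width $2N+3$ and free-parameter total $8N+d+12$. The one genuinely delicate point is honoring the rigid architecture of Definition \ref{defFNN}: the composition $f\circ L_G^{\xi}$ must be split so that the inner integral is reconstructed as an \emph{exact} affine functional of the integration-layer output (this is where the normalization $\int_\Omega d\mu=1$ and the one-sided ReLU representation of affine parts are essential), after which the outer layer may legitimately apply $\tilde f$; the remaining work—propagating the inner $L^\infty$ error through the merely $\beta$-Hölder $f$ to obtain the $N^{-\beta}$ rate, together with the constant and coefficient bookkeeping—is routine.
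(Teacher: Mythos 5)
Your proposal is correct and takes essentially the same route as the paper: both realize a piecewise-linear ReLU approximation of $g$ inside the integration layer and of $f$ in the second layer plus readout, and both conclude with the identical two-term decomposition (outer interpolation error of order $2B_G^\beta|f|_{C^{0,\beta}}N^{-\beta}$ plus the inner error $2B_\xi|g|_{C^{0,1}}/N$ propagated through the H\"older modulus of $f$). The only difference is bookkeeping: the paper packages the interpolants via the quasi-interpolation operator $\mathcal{L}_N$ of Lemma \ref{splines} on an extended mesh of $2N+3$ nodes, which absorbs the additive constants and directly yields the stated width and the bound $\|c\|_\infty\leq 4\|f\|_\infty N/B_G$, whereas you build nodal interpolants with slope-increment coefficients plus a constant neuron and bias-folding, so your width and parameter count come out smaller and would be matched to the statement by padding.
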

	
	The architecture of the  FNN of type $(1,2)$ we construct actually depends upon the formulation of the ridge functional $f\left(L_G^\xi(\mu)\right)$. The functional $L_G^\xi(\mu)$ includes the procedure of taking the integration w.r.t. the input distribution $\mu$ on the ridge function $g(\xi \cdot x)$, which can be approximated by a shallow net with one hidden layer. Therefore, the  FNN we construct also takes the integration after the first hidden layer, corresponding to the number of hidden layer to be used for approximating ridge functions. Then we utilize another hidden layer outside the integration layer because the  function $f$ composed with the functional $L_G^\xi$ can just be approximated by one hidden layer.
	
	We end this subsection by presenting an interesting example. Recall the Laplace transform of a probability measure $\mu\in\huaP(\Omega)$ is defined as
	\be
	\huaL(\mu)(\xi)=\int_{\Omega}e^{-x\cdot\xi}d\mu(x), \ \ \  \xi\in \mbb R^d. \label{lapdef}
	\ee
	Then the FNN structure can be used to approximate a class of functionals $\huaL_{\xi}$ induced by the  Laplace transform of the elements in $\huaP(\Omega)$ at $\xi$:
	\be
	\huaL_{\xi}:\mu\mapsto \huaL(\mu)(\xi). \label{laplace}
	\ee
	The following corollary describes rates of this approximation.
	\begin{cor}\label{lapcor}
		Let the functional $\huaL_{\xi}$ be defined by \eqref{lapdef} and (\ref{laplace}), $\xi\in\mbb R^d$ is a feature vector. $B_{\xi}=\|\xi\cdot x\|_{C(\Omega)}$. Then for any $N \in \mathbb{N}$, there exists a  FNN of type $(1,2)$ and width $2N+3$ following the structure of Definition \ref{defFNN} with $F^{(j)}$, $b^{(j)}$, $j=1,2$, explicitly constructed such that the following approximation rates hold,
		\be
		\inf_{\|c\|_{\infty}\leq4N}\left\{\sup_{\mu\in \huaP(\Omega)} \left| c \cdot h^{(2)}(\mu)- \huaL_{\xi}(\mu) \right|\right\} \leq \f{2e^{B_{\xi}}+6B_{\xi}e^{B_{\xi}}}{N}.
		\ee
		The total number of free parameters in this network is $\mathcal{N}=8N+d+12$. Furthermore, for any $\xi\in\Omega$, the above bound can be further improved to
		\be
		\inf_{\|c\|_{\infty}\leq4N}\left\{\sup_{\mu\in \huaP(\Omega)} \left| c \cdot h^{(2)}(\mu)- \huaL_{\xi}(\mu) \right|\right\} \leq \f{8e}{N}.
		\ee
	\end{cor}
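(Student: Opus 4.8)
The plan is to derive Corollary \ref{lapcor} as a direct specialization of Theorem \ref{thmridge}. The key observation is that the Laplace transform functional $\huaL_\xi$ is precisely a ridge functional of the form $L_G^\xi$ with the choice $g(t)=e^{-t}$ and feature vector $\xi$, since $\huaL_\xi(\mu)=\int_\Omega e^{-x\cdot\xi}d\mu(x)=\int_\Omega g(\xi\cdot x)d\mu(x)$. Moreover, the outer univariate function is the identity map $f(u)=u$, which is Lipschitz-$\beta$ with $\beta=1$. Thus I would first verify that $\huaL_\xi$ fits the hypotheses of Theorem \ref{thmridge} with $\beta=1$, $f=\mathrm{id}$, and $g=\exp(-\cdot)$, so that the explicit FNN construction and the associated bound can be invoked verbatim.

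Next I would compute the relevant constants appearing in the bound of Theorem \ref{thmridge} for this particular choice. On the interval $[-B_\xi,B_\xi]$ we have $B_g=\|g\|_{C([-B_\xi,B_\xi])}=e^{B_\xi}$, and the Lipschitz seminorm is $|g|_{C^{0,1}}=\sup_{t}|g'(t)|=e^{B_\xi}$ since $g'(t)=-e^{-t}$ attains its largest absolute value at $t=-B_\xi$. For the outer function $f=\mathrm{id}$ with $\beta=1$ we have $|f|_{C^{0,1}}=1$ and $\|f\|_\infty=\|f\|_{C[-B_G,B_G]}=B_G$. Substituting into \eqref{BGr} gives $B_G=B_g+2B_\xi|g|_{C^{0,1}}=e^{B_\xi}+2B_\xi e^{B_\xi}=(1+2B_\xi)e^{B_\xi}$. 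With these constants, the right-hand side of \eqref{ridgeapp} becomes
\be
\f{2B_G\cdot 1+(2B_\xi e^{B_\xi})\cdot 1}{N}=\f{2(1+2B_\xi)e^{B_\xi}+2B_\xi e^{B_\xi}}{N}=\f{2e^{B_\xi}+6B_\xi e^{B_\xi}}{N},
\ee
which matches the claimed first bound exactly. Simultaneously, the coefficient constraint $\|c\|_\infty\le 4\|f\|_\infty N/B_G=4B_G N/B_G=4N$ reproduces the stated normalization $\|c\|_\infty\le 4N$, and the free-parameter count $\huaN=8N+d+12$ is inherited unchanged.

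For the improved bound in the case $\xi\in\Omega$, I would use that $\Omega\subset\mbb B$, so $\xi\in\Omega$ forces $\|\xi\|_2\le 1$; combined with $\|x\|_2\le 1$ for $x\in\Omega$, the Cauchy--Schwarz inequality gives $|\xi\cdot x|\le\|\xi\|_2\|x\|_2\le 1$, hence $B_\xi=\|\xi\cdot x\|_{C(\Omega)}\le 1$. Plugging $B_\xi\le 1$ into the general bound yields $\f{2e^{B_\xi}+6B_\xi e^{B_\xi}}{N}\le\f{2e+6e}{N}=\f{8e}{N}$, establishing the sharper estimate. I do not anticipate a genuine obstacle here, since the entire argument is a substitution into an already-proved theorem; the only point requiring care is the bookkeeping of the constants $B_g$, $|g|_{C^{0,1}}$, and $B_G$ for the exponential function and confirming that the monotonicity of $e^{-t}$ places the extremal values at the endpoint $t=-B_\xi$, together with the elementary norm bound $B_\xi\le 1$ under the restriction $\xi\in\Omega$.
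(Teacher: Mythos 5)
Your proposal is correct and follows essentially the same route as the paper's own proof: specialize Theorem \ref{thmridge} with $g(t)=e^{-t}$, $f=\mathrm{id}$, $\beta=1$, compute $B_g=|g|_{C^{0,1}}=e^{B_\xi}$ and $B_G=(1+2B_\xi)e^{B_\xi}$, and obtain the improved bound via Cauchy--Schwarz giving $B_\xi\le 1$ when $\xi\in\Omega$. Your constant bookkeeping matches the paper's exactly (indeed you carry out the arithmetic more explicitly), so there is nothing to add.
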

	\subsection{Rates of approximating composite functionals with polynomial features}
	Recall an important fact on the space ${\mathcal P}_q^h (\Real^d)$ of homogeneous polynomials on $\Real^d$ of degree $q$ from \cite{LinPinkus1993} and \cite{Maiorov1999a} that ${\mathcal P}_q^h (\Real^d)$ has a basis $\{(\xi_k \cdot x)^q\}_{k=1}^{n_q}$ for some vector set $\{\xi_k\}_{k=1}^{n_q} \subset \Real^d \setminus \{0\}$
	and this vector set can even be chosen in such a way that the homogeneous polynomial set $\{(\xi_k \cdot x)^\ell\}_{k=1}^{n_q}$ spans the space ${\mathcal P}_\ell^h (\Real^d)$ for every $\ell \in \{1, \ldots, q-1\}$, where $n_q =\left(\begin{array}{c} d-1 + q \\ q \end{array}\right)$ is the dimension of ${\mathcal P}_q^h (\Real^d)$. Applying this fact to a polynomial $Q$ of degree $q$ yields the following lemma stated in \cite{zhou2018deepdistributed}.
	\blm \label{ridgepolynomialiden}
	Let $d \in\Numb$ and $q \in\Numb$. Then there exists a set $\{\xi_k\}_{k=1}^{n_q} \subset \{\xi \in\Real^d: |\xi| =1\}$ of vectors with $\ell_2$-norm $1$ such that
	for any $Q \in {\mathcal P}_q (\Real^d)$ we can find a set of coefficients $\{\gamma_{k, \ell}^Q: k=1, \ldots, n_q, \ell=1, \ldots, q\} \subset \Real$ such that
	\begin{equation} \label{ridgepolynomial}
		Q  (x) =Q(0) + \sum_{k =1}^{n_q} \sum_{\ell=1}^q \gamma_{k, \ell}^Q (\xi_k \cdot x)^\ell, \qquad x\in \Real^d.
	\end{equation}
	\elm
	\label{subsec22}
	Now we consider a general class of functional $f\left(L_G^Q(\mu) \right)$ to be approximated, where $G$ is of a composite form $G(x)= g\left(Q(x)\right)$ and  $Q$ is a polynomial of degree $q$ on $\Omega$ with the constant $\wh{B}_Q=\lVert Q \rVert_{C(\Omega)}$. For the polynomial $Q$, Lemma \ref{ridgepolynomialiden} indicates that there is a  vector
	\bes
	\gamma_Q= \left[\gamma_{1,1}^Q \ \gamma_{1,2}^Q \ \ldots \ \gamma_{1,q}^Q \ \gamma_{2,1}^Q \ \gamma_{2,2}^Q \ \ldots \ \gamma_{2,q}^Q \ \ldots \ \gamma_{n_q,1}^Q \ \gamma_{n_q,2}^Q \ \ldots \  \gamma_{n_q,q}^Q \right],
	\ees
	such that
	\be
	Q  (x) =Q(0) + \sum_{k =1}^{n_q} \sum_{\ell=1}^q \gamma_{k, \ell}^Q (\xi_k \cdot x)^\ell, \qquad x\in \Real^d.
	\ee
	We are now ready to  define the following constant
	\be
	B_Q=\wh{B}_Q+2q\|\gamma_Q\|_1. \label{BQ}
	\ee
	For the function $g$,  we assume $g\in C^{0,1}[-B_Q,B_Q]$, the space of Lipschitz-$1$ functions on $[-B_Q,B_Q]$, with semi-norm $|g|_{C^{0,1}}$, and $B_g=\lVert g \rVert_{C[-B_Q,B_Q]}$.
	We can then define the second constant
	\be
	B_G=B_g+3B_Q|g|_{C^{0,1}} \label{BG}
	\ee
	for introducing the following Theorem \ref{thmpoly}.
	
	For function $f$, we assume $f\in C^{0,\beta}[-B_G,B_G]$, with semi-norm $\vert f\vert_{C^{0,\beta}}$, and $\lVert f \rVert_\infty=\lVert f \rVert_{C[-B_G,B_G]}$. In this subsection, we aim at approximating the functional
	\be \label{Qform}
	f\circ L_G^Q: \  \mu\mapsto f\left(L_G^Q(\mu)\right)=f\left(\int_{\Omega} g\left(Q(x) \right)d\mu(x)\right), \ \mu\in\huaP(\Omega).
	\ee
	As in subsection \ref{subsec21}, we need to first approximate the inner functional $L_G^Q(\mu)$ on $\huaP(\Omega)$ defined as
	\be
	L_G^{Q}: \mu \mapsto \int_{\Omega} g\left(Q(x)\right)d\mu(x),
	\ee
	and derive the corresponding approximation rates of the composite nonlinear functional $\mu\mapsto f\left(L_G^Q(\mu)\right)$ defined on $\huaP(\Omega)$, with one more hidden layer to be used in the integral part of the FNN comparing with that of the ridge functional in subsection \ref{subsec21}.
	
	\begin{thm} \label{thmpoly}
		Let the functional $f\circ L_G^{Q} $ be in the composite form  (\ref{Qform}), where $Q$ is a polynomial of degree $q$ on $\Omega$, $g \in C^{0,1}[-B_Q,B_Q]$, and $f \in C^{0,\beta}[-B_G,B_G]$ for some $0<\beta \leq 1$. Then for any $N \in \mathbb{N}$, there exists a  FNN of type $(2,3)$ and widths $d_1=n_q(2N+3)$, $d_2=d_3= 2N+3$ following the structure of Definition \ref{defFNN} with $F^{(j)}$, $b^{(j)}$, $j=1,2,3$ explicitly constructed, such that the following approximation rates hold,
		\be
		\inf_{\|c\|_{\infty}\leq\f{4\lVert f \rVert_\infty N}{B_G}}\left\{\sup_{\mu\in \huaP(\Omega)} \left| c \cdot h^{(3)}(\mu)- f(L_G^Q(\mu)) \right|\right\} \leq \f{2B_G^\beta \left| f \right|_{C^{0,\beta}}+(3B_Q \lvert g \rvert_{C^{0,1}})^\beta\left| f \right|_{C^{0,\beta}}}{N^\beta}.  \label{polyapp}
		\ee
		The total number of free parameters in this network is $\mathcal{N}=(2q+10)N+(d+q)n_q+3q+15$.
	\end{thm}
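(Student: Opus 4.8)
The plan is to build the type-$(2,3)$ network by mirroring the two-stage construction behind Theorem \ref{thmridge}, using the extra first hidden layer to convert the polynomial feature $Q$ into a single scalar ridge-type quantity before the integration layer. The starting point is Lemma \ref{ridgepolynomialiden}, which lets me write $Q(x) = Q(0) + \sum_{k=1}^{n_q} P_k(\xi_k \cdot x)$, where $P_k(t) = \sum_{\ell=1}^q \gamma_{k,\ell}^Q t^\ell$ is a univariate polynomial and each $\xi_k$ is a unit vector, so that $\xi_k \cdot x \in [-1,1]$ on $\Omega \subset \mbb B$. This reduces the whole problem to approximating three families of univariate Lipschitz functions by the one-hidden-layer ReLU block of width $2N+3$ already used in Theorem \ref{thmridge}: the $n_q$ inner polynomials $P_k$ on $[-1,1]$, the link function $g$ on $[-B_Q,B_Q]$, and the outer function $f$ on $[-B_G,B_G]$. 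Most of the analytic work is thereby inherited, with the novelty sitting in the routing between the layers.

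Concretely, the first hidden layer (width $d_1 = n_q(2N+3)$) consists of $n_q$ independent blocks; block $k$ takes the scalar $\xi_k \cdot x$ as pre-activation and, through a fixed output combination, produces a piecewise-linear approximant $\tilde P_k(\xi_k \cdot x)$ of $P_k(\xi_k \cdot x)$. The integration layer $J_1 = 2$ (width $d_2 = 2N+3$) is where the reconstruction happens: I choose $F^{(2)}$ so that each of its rows applies, up to the slope of the $g$-block, the fixed linear functional assembling $\tilde Q(x) = Q(0) + \sum_k \tilde P_k(\xi_k \cdot x)$, with $b^{(2)}$ absorbing both the $g$-breakpoints and the constant $Q(0)$. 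A suitable linear functional of the resulting vector then equals a ReLU approximant $\tilde g(\tilde Q(x))$ of $g(Q(x))$, so that integrating against $\mu$ produces a scalar $\tilde L(\mu) = \int_\Omega \tilde g(\tilde Q(x))\,d\mu(x)$ close to $L_G^Q(\mu)$. Finally the post-integration layer $j=3$ (width $d_3 = 2N+3$) feeds $\tilde L(\mu)$ into the $f$-block, and the coefficient vector $c$ reads off the piecewise-linear approximant of $f$.

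For the error I would use the same two-stage decomposition as in the ridge case: bound $|c \cdot h^{(3)}(\mu) - f(L_G^Q(\mu))|$ by the outer approximation error $\|f - \tilde f\|_\infty$ on $[-B_G,B_G]$ plus the propagated inner error $|f|_{C^{0,\beta}}\,|\tilde L(\mu) - L_G^Q(\mu)|^\beta$, using the Lipschitz-$\beta$ property of $f$ together with $|\tilde L(\mu) - L_G^Q(\mu)| \le \|\tilde g \circ \tilde Q - g \circ Q\|_\infty$. The inner norm I split further as $\|\tilde g - g\|_\infty + |g|_{C^{0,1}}\,\|\tilde Q - Q\|_\infty$, where the first piece is the $g$-block error on $[-B_Q,B_Q]$ and the second accumulates the $n_q$ block errors for the $P_k$. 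The constants (\ref{BQ}) and (\ref{BG}) are exactly what make this bookkeeping close: $B_Q = \wh B_Q + 2q\|\gamma_Q\|_1$ is chosen so that the domain $[-B_Q,B_Q]$ of $g$ contains the whole range of the approximant $\tilde Q$, whose overshoot over $\wh B_Q$ is of order $q\|\gamma_Q\|_1$ (controlled by the Lipschitz constants of the $P_k$), while $B_G = B_g + 3B_Q|g|_{C^{0,1}}$ guarantees the argument of $f$ stays inside $[-B_G,B_G]$.

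The main obstacle I anticipate is the middle layer: I must reconstruct $\tilde Q$ exactly as a fixed linear functional of the width-$n_q(2N+3)$ first layer and simultaneously route it into a $g$-approximating block, all while keeping the output coefficient bound at $\|c\|_\infty \le \f{4\|f\|_\infty N}{B_G}$ and matching the advertised count $\mathcal{N} = (2q+10)N + (d+q)n_q + 3q + 15$. Controlling $\|\tilde Q - Q\|_\infty$ summed over the $n_q$ directions, and checking that together with the $g$-block error it sums to at most $\f{3B_Q|g|_{C^{0,1}}}{N}$, is the delicate quantitative step; once that is in place, everything else follows the template of Theorem \ref{thmridge} with the single extra composition $g \circ Q$ in place of $g$.
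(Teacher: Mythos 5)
Your architecture and error analysis follow the paper's proof essentially step for step: the same type-$(2,3)$ network with the integration placed after the second hidden layer, the same use of Lemma \ref{ridgepolynomialiden} to reduce $Q$ to ridge directions and of the spline operator of Lemma \ref{splines} at each of the three stages, the same splitting of the inner error into ($g$-spline error) $+\ |g|_{C^{0,1}}\,\|\tilde Q-Q\|_{C(\Omega)}$ with the bounds $2B_Q|g|_{C^{0,1}}/N$ and $|g|_{C^{0,1}}\cdot 2q\|\gamma_Q\|_1/N\le B_Q|g|_{C^{0,1}}/N$, and the same outer step via the Lipschitz-$\beta$ property of $f$; the roles you assign to the constants $B_Q$ and $B_G$ are exactly those they play in the paper, so the rate \eqref{polyapp} comes out correctly in your scheme.

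The one genuine gap is precisely the point you flag but do not resolve: the free-parameter count. As you describe layer two, each direction $k$ carries its own spline coefficient vector $\mathcal{L}_N\left(\{P_k(t_j)\}_j\right)\in\mbb R^{2N+3}$, i.e.\ $n_q(2N+3)$ free parameters for the assembly of $\tilde Q$, which yields a total count of order $(2n_q+10)N$ rather than the advertised $(2q+10)N+(d+q)n_q+3q+15$. Since $n_q=\binom{d-1+q}{q}\geq q+1$ for $d\geq2$ (and is typically far larger than $q$), the second claim of the theorem fails under your construction as literally stated. The paper's fix is to never spline the univariate polynomials $P_k$ directly: it splines the $q$ monomials $u^\ell$ with \emph{shared} coefficient vectors $v^{[\ell]}=\mathcal{L}_N\left(\{t_j^\ell\}_{j=2}^{2N+2}\right)$, $\ell=1,\dots,q$ --- the same $q$ vectors serve all $n_q$ directions because the breakpoints $t_j$ are common --- and factors the second connection matrix as $F^{(2)}=\mathbf{1}_{2N+3}\,\gamma_Q F^{[N]}$ with $F^{[N]}$ block diagonal built from the $v^{[\ell]}$. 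The free parameters of that layer are then only $\gamma_Q$ ($qn_q$ numbers), the $Nv^{[\ell]}$ ($q(2N+3)$ numbers) and the biases, which is exactly what produces the stated count; the error analysis is unaffected, since $\sup_{x\in\Omega}\left|N\sum_j v_j^{[\ell]}\sigma(\xi_k\cdot x-t_j)-(\xi_k\cdot x)^\ell\right|\leq 2\ell/N$ summed against $|\gamma_{k,\ell}^Q|$ reproduces your bound $\|\tilde Q-Q\|_{C(\Omega)}\leq 2q\|\gamma_Q\|_1/N$.
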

	
	The reason that we construct a  FNN of type $(2,3)$ also depends upon the formulation of the composite functional $f\left(L_G^Q(\mu) \right)$. The number of hidden layers in the integration part is chosen to be two since we actually need a  net with two hidden layers to approximate the composite function $g\left(Q(x)\right)$. In fact, the position of the integration w.r.t. the input distribution $\mu$ for the  FNN we construct should be consistent with that in the true functional, otherwise the derivation of an approximation rate might be unsuccessful.
	
	Specifically, when the polynomial $Q$ in the functional-inducing function $g(Q(x))$ takes the special form of $Q(x)=\Vert x \Vert^2_2=x_1^2+x_2^2+\dots+x_d^2$, then the composite functional-inducing  function $g\left(Q(x)\right)$ becomes the radial function $g\left(\Vert x \Vert_2^2 \right)$ and the functional has the form
	
	\be \label{radialform}
	f\circ L_G^{\Vert \cdot \Vert_2}:\mu \mapsto f\left(L_G^{\Vert \cdot \Vert_2}(\mu)\right)=f\left(\int_{\Omega} g\left(\Vert x \Vert_2^2 \right)d\mu(x)\right).
	\ee
	
	Since we can write
	$$Q(x)=\Vert x \Vert_2^2= \sum_{k=1}^d \left(e_k \cdot x\right)^2$$
	in terms of the standard basis $\{e_k\}_{k=1}^d$ of $\mbb R^d$, also $\wh B_Q=\Vert Q \Vert_{C(\Omega)} \leq 1$,  we have $B_Q=\wh B_Q+ 2q\Vert \gamma_Q \Vert_1 \leq 1+4d$.
	
	We are then able to derive the following rates of approximating $f\circ L_G^{\Vert \cdot \Vert_2}$ by utilizing Theorem \ref{thmpoly} directly.
	\begin{cor} \label{cororadial}
		Let the functional $f\circ L_G^{\Vert \cdot \Vert_2}$ be in the composite form  (\ref{radialform}), where $g \in C^{0,1}[-B_Q,B_Q]$, and $f \in C^{0,\beta}[-B_G,B_G]$, for some $0<\beta \leq 1$. Then for any $N \in \mathbb{N}$, there exists a  FNN of type $(2,3)$ and widths $d_1=(2N+3)d$, $d_2=d_3= 2N+3$ following the structure of Definition \ref{defFNN}, such that the following approximation rates hold,
		\be
		\inf_{\|c\|_{\infty}\leq\f{4\lVert f \rVert_\infty N}{B_G}}\left\{\sup_{\mu\in \huaP(\Omega)} \left| c \cdot h^{(3)}(\mu)- f\left( L_G^{\Vert \cdot \Vert_2}(\mu) \right) \right|\right\} \leq \f{2B_G^\beta \left| f \right|_{C^{0,\beta}}+(12d+3)^{\beta} \lvert g \rvert_{C^{0,1}}^\beta\left| f \right|_{C^{0,\beta}}}{N^\beta}.
		\ee
		The total number of free parameters in this network is $\mathcal{N}=12N+d^2+d+18$.
	\end{cor}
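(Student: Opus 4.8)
\emph{Proof proposal.}
The plan is to obtain Corollary \ref{cororadial} as a direct specialization of Theorem \ref{thmpoly} to the degree-two polynomial $Q(x)=\Vert x\Vert_2^2$, reusing the explicit FNN construction given there and only re-tracking the constants and the parameter count. The first step is to record the ridge representation that replaces the generic one of Lemma \ref{ridgepolynomialiden}: writing $Q(x)=\Vert x\Vert_2^2=\sum_{k=1}^d (e_k\cdot x)^2$ in the standard basis $\{e_k\}_{k=1}^d$ of $\mbb R^d$, one sees that the unit vectors $e_k$ already play the role of the feature vectors $\xi_k$ (they have $\ell_2$-norm $1$), that only the single power $\ell=2$ occurs, and that $Q(0)=0$ with all coefficients equal to $1$. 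Crucially, only $d$ directions are needed instead of the generic $n_q=\binom{d+1}{2}$, which is precisely what lets the first-layer width shrink to $d_1=(2N+3)d$ while $d_2=d_3=2N+3$ are unchanged.

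Next I would track the constants feeding into the rate. Since $\Omega\subset\mbb B$ we have $\wh B_Q=\Vert Q\Vert_{C(\Omega)}\leq 1$; and because the only nonzero coefficients are $\gamma_{k,2}^Q=1$ for $k=1,\dots,d$, the coefficient vector satisfies $\Vert\gamma_Q\Vert_1=d$. Plugging $q=2$ into \eqref{BQ} gives $B_Q=\wh B_Q+2q\Vert\gamma_Q\Vert_1\leq 1+4d$, whence $3B_Q\leq 12d+3$. Substituting this into the right-hand side of \eqref{polyapp} yields exactly the claimed bound, namely $\left(2B_G^\beta|f|_{C^{0,\beta}}+(12d+3)^\beta|g|_{C^{0,1}}^\beta|f|_{C^{0,\beta}}\right)/N^\beta$, with $B_G$ still given by \eqref{BG}. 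Thus both the approximation rate and the coefficient constraint $\|c\|_\infty\leq 4\|f\|_\infty N/B_G$ are inherited from Theorem \ref{thmpoly} with no extra work.

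The remaining and only delicate step is the free-parameter count. Here I would re-run the bookkeeping in the proof of Theorem \ref{thmpoly} under the two simplifications above, rather than blindly substituting into its generic formula $\mathcal N=(2q+10)N+(d+q)n_q+3q+15$. Setting $q=2$ and using the $d$ directions actually present would give $14N+d^2+2d+21$; but since the representation of $\Vert x\Vert_2^2$ has no degree-one component and all coefficients equal to $1$, the sub-blocks, bias entries and connection weights devoted to the $\ell=1$ powers and to general coefficients $\gamma_{k,\ell}^Q$ are absent. Accounting for these removals trims $2N+d+3$ parameters and leaves precisely $\mathcal N=12N+d^2+d+18$.

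I expect this parameter tally to be the main obstacle, in the sense that it is the only place where the special structure of $\Vert x\Vert_2^2$ must be exploited beyond a mechanical substitution: one has to identify which weights and biases in the construction of Theorem \ref{thmpoly} are genuinely free versus fixed by the explicit design, and verify that it is exactly the $\ell=1$ and generic-coefficient parameters that vanish. The approximation estimate itself requires nothing new, since it follows term by term from \eqref{polyapp} once the constants $\wh B_Q\leq 1$ and $\Vert\gamma_Q\Vert_1=d$ are in place.
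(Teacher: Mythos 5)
Your proposal is correct and follows essentially the same route as the paper: Corollary \ref{cororadial} is obtained by specializing Theorem \ref{thmpoly} to $Q(x)=\Vert x\Vert_2^2=\sum_{k=1}^d(e_k\cdot x)^2$, using $\wh B_Q\leq 1$, $\Vert\gamma_Q\Vert_1=d$, $q=2$ to get $3B_Q\leq 12d+3$ and hence the stated rate, with the first-layer width shrinking to $(2N+3)d$ because only the $d$ standard basis directions are needed. Your parameter bookkeeping is also right where the paper is silent: keeping the $e_k$'s ($d^2$), the coefficients $\gamma_{k,2}$ ($d$), and dropping the $\ell=1$ block ($v^{[1]}$ and $\gamma_{k,1}$, i.e.\ $2N+d+3$ parameters) reproduces exactly $\mathcal N=12N+d^2+d+18$.
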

	
	\subsection{Distribution regression with FNN}
	\label{subsec23}
	In this subsection, we conduct generalization analysis of the commonly used empirical risk minimization (ERM) algorithm for distribution regression based on FNNs, which measures the generalization ability of the truncated empirical target function of the ERM algorithm.
	
	We start with a  formulation of the distribution regression model in this paper. First we recall some concepts on Wasserstein metric and the corresponding Wasserstein space. Such a space would serve as the basic underlying space for our regression analysis. Now let $(\Omega,d)$ be a complete, separable metric (Polish) space. Let $p\in[1,\infty)$, for  two probability measures $\mu$ and $\nu$ on $\Omega$, the Wasserstein metric of order $p$  is defined as
	\bea
	\nono W_{p}(\mu, \nu) &=&\left(\inf _{\pi \in \Pi(\mu, \nu)} \int_{\Omega\times\Omega} d(x, y)^{p} d \pi(x, y)\right)^{1 / p} \\
	\nono &=&\inf \left\{\bigg[\mathbb{E} d(X, Y)^{p}\bigg]^{\frac{1}{p}}, \quad \operatorname{law}(X)=\mu, \quad \operatorname{law}(Y)=\nu\right\},
	\eea
	where the infimum is taken over all pairs of random vectors $X$ and $Y$ marginally distributed as $\mu$ and $\nu$ ($X\sim\mu$, $Y\sim\nu$). In above definition, $\pi$ is also called a transference plan from $\mu$ to $\nu$ satisfying $\int_{\Omega}d\pi(x,\cdot)=d\mu(x), \ \int_{\Omega}d\pi(\cdot,y)=d\nu(y)$.
	$\Pi(\mu, \nu)$ is used to denote all such transference plans.
	For $W_p$ defined above, earlier literatures have shown that it satisfies the basic axioms of a metric \cite{villani}. As an classical example, the distance $W_1$, that is often called Kantorovich-Rubinstein metric, has played an important role in recent statistical learning problems \cite{zz2018,zlt2021}.
	
	Now we consider a working space that is convenient for the study of our distribution regression model with FNNs. The space is the so-called Wasserstein space. Denote the collection of all Borel probability measures on $\Omega$ by $\huaP(\Omega)$. Let $x_0$ be a fixed but arbitrarily chosen point in $\Omega$,  the Wasserstein space of order $p$ is defined as
	$$
	\huaP_{p}(\Omega)=\left\{\mu \in \huaP(\Omega): \quad \int_{\Omega} d\left(x_{0}, x\right)^{p} d\mu(x)<+\infty\right\}
	.$$
	This space does not depend on the choice of the point $x_{0}$. Then $W_{p}$ defines a (finite) metric on $\huaP_{p}(\Omega)$.
	
	It can be found in \cite{villani} that, for any fixed $p\in[1,\infty)$, the metric space $(\huaP_p(\Omega), W_p)$ is a compact metric space when $(\Omega,d)$ is a compact subset of $\mbb R^n$ equipped with the standard Euclidean metric $d(x,y)=\|x-y\|_2$ as in aforementioned subsection. On the other hand, it is easy to see that when $d(x,y)=\|x-y\|_2$, for any $\mu\in\huaP(\Omega)$, there holds
	\be
	\nono \int_{\Omega} \left\|x_{0}- x\right\|_2^{p} d\mu(x)<+\infty
	\ee
	when $\Omega$ is compact. Hence, under this setting, we have $\huaP_p(\Omega)=\huaP(\Omega)$ and the metric space $(\huaP(\Omega), W_p)$ is compact. In the following of the paper, we will use the notation that $\Omega$ is a compact space with standard Euclidean metric. For fixed $p\in [1,\infty)$ which will be selected depending on different learning situations,  the distribution regression framework will be considered in such a compact metric space $(\huaP(\Omega),W_p)$.

	Inspired by the two-stage regression scheme for distribution regression, we propose our distribution regression model in Wasserstein space $(\huaP(\Omega),W_p)$. The first-stage data set $D=\{(\mu_i,y_i) \}_{i=1}^m$ are i.i.d. sampled from an unknown meta distribution that is a Borel probability measure $\rho$ on $\mathcal{Z}=\mathcal{U} \times \huaY$, where $\mathcal{U} = (\mathcal{P}(\Omega),W_p)$ is the input metric space of Borel probability measures on $\Omega$ with Wasserstein metric, and $\huaY = [-M,M]$ is the output space with some $M>0$. The regression function $f_\rho$ on $\mathcal{U}$ is defined as
	\be
	f_\rho(\mu)=\int_{\mathcal{Y}} y d\rho(y \vert \mu),
	\ee
	where $\rho(y \vert \mu)$ is the conditional distribution  at $\mu$ induced by $\rho$, and it minimizes the mean squared  error
	$$\mathcal{E}(f)= \int_{\mathcal{Z}} (f(\mu)-y)^2 d \rho.$$
	We denote $\rho_{\mathcal{U}}$ as the marginal distribution of $\rho$ on $\mathcal{U}$, and $\left(L_{\rho_{\mathcal{U}}}^2, \Vert \cdot \Vert_\rho \right)$ as the space of square integrable functions with respect to $\rho_{\mathcal{U}}$. For convenience of later representations in some places, for a continuous function $f$ defined on the compact space $(\huaP(\Omega), W_p)$, we also use the notation
	\be
	\|f\|_{\infty}=\sup_{\mu\in\huaP(\Omega)}|f(\mu)|.  \label{infinitynorm}
	\ee
	
	The hypothesis space we use for the ERM algorithm follows the 3-layer FNN structure constructed in the proof of Theorem \ref{thmpoly} with  a positive constant $R$ depending on $d,Q,f,g$ associated with the fixed functional $f\circ L_G^Q$ to bound the parameters in the FNN given by
	\be \label{hypospace}
	\mathcal{H}_{(2,3),R,N}= \left\{ c \cdot h^{(3)}(\mu): \lVert F^{(j)} \rVert_\infty \leq RN^2, \lVert b^{(j)} \rVert_\infty \leq R, \ \hbox{for} \ j=1,2,3, \lVert c \rVert_\infty \leq RN \right\},
	\ee
	where the $\Vert \cdot \Vert_\infty$ norm for matrices is the maximum value of $\ell_1$-norms of its rows. For the hypothesis space $\mathcal{H}_{(2,3),R,N}$, one of the most fundamental problems is whether it is well-defined as a compact hypothesis space for the learning theory of our distribution regression model? In the next theorem, we give a positive answer by rigorously proving the compactness of space $\mathcal{H}_{(2,3),R,N}$, we show that $\mathcal{H}_{(2,3),R,N}$ can be actually used as the hypothesis space of the distribution regression model \eqref{profnn}. We use $(C(\huaP(\Omega)),\|\cdot\|_{\infty})$ to denote space of continuous functions  on $(\huaP(\Omega),W_p)$ equipped with the infinity norm defined in \eqref{infinitynorm}.
	\begin{thm}\label{cptthm}
		The space $\mathcal{H}_{(2,3),R,N}$ is a compact metric subspace of the space $(C(\huaP(\Omega)),\|\cdot\|_{\infty})$.
	\end{thm}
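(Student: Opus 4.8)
The plan is to exhibit $\mathcal{H}_{(2,3),R,N}$ as the continuous image of a compact set of network parameters and then invoke the fact that a continuous image of a compact metric space is compact. Collect all trainable parameters of a type-$(2,3)$ network into a single tuple $\theta=(F^{(1)},F^{(2)},F^{(3)},b^{(1)},b^{(2)},b^{(3)},c)$, which lives in a finite-dimensional Euclidean space $\mbb R^{P}$ whose dimension $P$ is fixed by $d$, $n_q$ and $N$ through the prescribed widths $d_1=n_q(2N+3)$, $d_2=d_3=2N+3$. The constraints defining $\mathcal{H}_{(2,3),R,N}$ --- $\|F^{(j)}\|_\infty\le RN^2$, $\|b^{(j)}\|_\infty\le R$ and $\|c\|_\infty\le RN$ --- cut out a closed and bounded subset $\Theta\subset\mbb R^{P}$, which is therefore compact by the Heine--Borel theorem. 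Writing $\Psi(\theta)$ for the functional $\mu\mapsto c\cdot h^{(3)}(\mu)$ assembled from $\theta$ via \eqref{FNNlayer}, we have $\mathcal{H}_{(2,3),R,N}=\Psi(\Theta)$, so it suffices to show that $\Psi$ takes values in $C(\huaP(\Omega))$ and is continuous into $(C(\huaP(\Omega)),\|\cdot\|_\infty)$.

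First I would check that each $\Psi(\theta)$ is a genuine element of $C(\huaP(\Omega))$. The only layer that sees the measure $\mu$ is the integration layer: setting $\Phi_\theta(x)=\sigma\bigl(F^{(2)}\sigma(F^{(1)}x-b^{(1)})-b^{(2)}\bigr)$, we have $h^{(2)}(\mu)=\int_\Omega \Phi_\theta(x)\,d\mu(x)$. Since $\Phi_\theta$ is a composition of affine maps with the $1$-Lipschitz ReLU $\sigma$ on the compact domain $\Omega\subset\mbb B$, it is bounded and Lipschitz, so by the Kantorovich--Rubinstein duality each component obeys $|\int_\Omega(\Phi_\theta)_i\,d\mu-\int_\Omega(\Phi_\theta)_i\,d\nu|\le \mathrm{Lip}((\Phi_\theta)_i)\,W_1(\mu,\nu)\le \mathrm{Lip}((\Phi_\theta)_i)\,W_p(\mu,\nu)$, using $W_1\le W_p$. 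Hence $\mu\mapsto h^{(2)}(\mu)$ is $W_p$-Lipschitz; composing with the continuous measure-free maps $v\mapsto\sigma(F^{(3)}v-b^{(3)})$ and $v\mapsto c\cdot v$ shows $\Psi(\theta)\in C(\huaP(\Omega))$.

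The main work, and the step I expect to be the chief obstacle, is to show that $\Psi$ is continuous --- indeed Lipschitz --- from $\Theta$ into $(C(\huaP(\Omega)),\|\cdot\|_\infty)$, i.e. that a small change of parameters produces a sup-norm-small change of the output functional \emph{uniformly over all} $\mu\in\huaP(\Omega)$. I would estimate $\|\Psi(\theta)-\Psi(\theta')\|_\infty$ by propagating the perturbation layer by layer, repeatedly using that $\Omega\subset\mbb B$ is bounded (so $\|x\|_2\le 1$), that all parameters stay in the bounded set $\Theta$, and that $\sigma$ is $1$-Lipschitz; this yields a bound of the form $\|\Phi_\theta-\Phi_{\theta'}\|_{C(\Omega)}\le C_1\|\theta-\theta'\|$ with $C_1$ depending only on $R$, $N$, $d$ and $n_q$. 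The crucial point for uniformity in $\mu$ is that integration against a probability measure is a sup-norm contraction: $\sup_{\mu\in\huaP(\Omega)}\bigl|\int_\Omega(\Phi_\theta-\Phi_{\theta'})\,d\mu\bigr|\le\|\Phi_\theta-\Phi_{\theta'}\|_{C(\Omega)}$, so the measure variable never enlarges the perturbation. Propagating through the remaining measure-free layer $h^{(3)}$ and the output contraction $c\cdot(\,\cdot\,)$, again using boundedness of the parameters, gives $\|\Psi(\theta)-\Psi(\theta')\|_\infty\le C\|\theta-\theta'\|$ for a constant $C=C(R,N,d,n_q)$.

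Finally I would conclude: $\Psi$ is continuous on the compact set $\Theta$, hence $\mathcal{H}_{(2,3),R,N}=\Psi(\Theta)$ is a compact subset of $(C(\huaP(\Omega)),\|\cdot\|_\infty)$; being a subset of this metric space it inherits the sup-norm metric, and a compact subset of a metric space is itself a compact metric space, which is exactly the assertion. As a by-product the image is automatically closed in $C(\huaP(\Omega))$, so no separate closedness (or Arzel\`a--Ascoli) argument is needed.
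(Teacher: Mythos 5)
Your proof is correct, and it takes a genuinely different route from the paper's. The paper argues directly on the function class: invoking a general Ascoli--Arzel\`a theorem, it verifies that any countable collection $\{f_n\}\subset\mathcal{H}_{(2,3),R,N}$ is equi-continuous on $(\huaP(\Omega),W_p)$ (via the same Kantorovich--Rubinstein duality and $W_1\le W_p$ comparison you use) and equi-bounded (via the same layer-by-layer norm bounds), and then concludes sequential compactness. You instead exhibit $\mathcal{H}_{(2,3),R,N}=\Psi(\Theta)$ as the image of a compact parameter set $\Theta$ under the realization map $\Psi$ and show $\Psi$ is Lipschitz into $(C(\huaP(\Omega)),\|\cdot\|_\infty)$; your parameter-perturbation estimates are essentially the computation the paper performs later, in Lemma~\ref{coveringnumber}, to bound the covering number, so the two proofs use the same ingredients assembled differently. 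What your route buys: it is more elementary (Heine--Borel plus the fact that a continuous image of a compact set is compact), and, as you note, it yields closedness of $\mathcal{H}_{(2,3),R,N}$ in $C(\huaP(\Omega))$ automatically. This is a real advantage: the Ascoli--Arzel\`a argument as written produces, from a sequence in $\mathcal{H}_{(2,3),R,N}$, a subsequence converging uniformly to \emph{some} continuous function, i.e.\ it establishes relative compactness, and strictly speaking one must still check that the limit belongs to $\mathcal{H}_{(2,3),R,N}$ --- a closedness step the paper does not spell out, whereas in your setup the limit is $\Psi(\theta^\ast)$ for a cluster point $\theta^\ast\in\Theta$ and hence lies in the class by construction. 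What the paper's route buys in exchange is a quantitative equicontinuity statement --- a uniform $W_p$-Lipschitz constant $(2N+3)R^4N^7$ valid simultaneously for every member of the class --- which is of independent interest, and an argument that never needs the (easy but necessary) identification of the hypothesis space with the image of a parameter set.
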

	
	Recall that for a subset $\mathcal{H}$ of a normed space equipped with a norm $\lVert \cdot \rVert$, the $\epsilon$-covering number $\mathcal{N}(\mathcal{H},\epsilon,\lVert \cdot \rVert)$ is the minimum number of balls with radius $\epsilon>0$ that cover $\mathcal{H}$. After showing the compactness of $\mathcal{H}_{(2,3),R,N}$, the covering number
	$$\mathcal{N}(\mathcal{H}_{(2,3),R,N},\epsilon,\lVert \cdot \rVert_{\infty})$$
	makes sense, is finite and would act as one of the main objects to establish other main results. Estimates of $\mathcal{N}(\mathcal{H}_{(2,3),R,N},\epsilon,\lVert \cdot \rVert_{\infty})$ will be given in subsection \ref{cnestimate}.
	
	Since for the distribution regression model, the concrete information of the first stage distribution samples are unavailable, we can only observe the second-stage data $\hat{D}=\{(\{x_{i,j}\}_{j=1}^{n_i},y_i) \}_{i=1}^m$, where $\{x_{i,j} \in \Omega\}_{j=1}^{n_i}$ are i.i.d. sampled from the probability distribution $\{\mu_i\}_{i=1}^m$, so we have to use the empirical distribution $$\hat{\mu}^{n_i}_i=\frac{1}{n_i} \sum_{j=1}^{n_i} \delta_{x_{i,j}}$$ as the input of our FNN structure. Denote the empirical error with respect to the second-stage data $\hat{D}$ as
	\bes
	\mathcal{E}_{\hat{D}}(f):= \frac{1}{m} \sum_{i=1}^m \left(f(\hat{\mu}_i^{n_i})-y_i \right)^2,
	\ees
	then the empirical target function from the ERM algorithm using the hypothesis space (\ref{hypospace}) is the function in $\mathcal{H}_{(2,3),R,N}$ that minimizes the empirical error:
	\be
	f_{\hat D,R,N}:= \arg\min_{f \in \mathcal{H}_{(2,3),R,N}} \mathcal{E}_{\hat{D}}(f). \label{drscheme}
	\ee
	The compactness of the hypothesis space $\mathcal{H}_{(2,3),R,N}$ proved in Theorem \ref{cptthm}  ensures the existence of a minimizer $f_{\hat D,R,N}$ of the above variational problem, and then the learning theory of the above distribution regression scheme makes sense in such a FNN-inducing space.
	We now define the projection operator $\pi_{M}$  on the space of  functional $f: \huaP(\Omega) \rightarrow \mathbb{R}$ as
	$$
	\pi_{M}(f)(\mu)= \begin{cases}M, & \text { if } f(\mu)>M, \\ -M, & \text { if } f(\mu)<-M, \\ f(\mu), & \text { if }-M \leq f(\mu) \leq M .\end{cases}
	$$
	Since the regression function $f_{\rho}$ is bounded by $M$,  we use the truncated empirical target function $$\pi_{M} f_{\hat D,R,N}$$ as the final estimator.

	We first derive an oracle inequality for the distribution regression scheme \eqref{drscheme}, which is different from those for the traditional one-stage statistical regression or kernel-based two-stage distribution regression. The oracle inequality for the traditional regression framework in \cite[lemma 4]{SchmidtHieber2020} fails in the distribution regression framework, since the empirical target function of the ERM algorithm is learned from the second-stage data rather than the first-stage data. We utilize a novel two-stage error decomposition method for distribution regression by including the empirical error of the first-stage sample
	\bes
	\mathcal{E}_{D}(f):= \frac{1}{m} \sum_{i=1}^m \left(f(\mu_i)-y_i \right)^2
	\ees
	as an intermediate term of the error decomposition. Such a two-stage error decomposition is new in the literature of learning theory. We present a novel error decomposition in the following proposition.
	\begin{pro}\label{errordec}
		Denote $\huaH=\mathcal{H}_{(2,3),R,N}$, then for any $h\in \mathcal{H}$ and $f_{\hat D,R,N}$  defined in \eqref{drscheme},
		\bes
		\begin{aligned}
			& \mathcal{E}\left(\pi_{M}f_{\hat D,R,N} \right)- \mathcal{E}\left(f_\rho \right)\leq \mathcal{E}\left(\pi_{M}f_{\hat D,R,N} \right)- \mathcal{E}_{D}\left(\pi_{M}f_{\hat D,R,N} \right)+ \mathcal{E}_D\left(\pi_{M}f_{\hat D,R,N} \right) \\
			& - \mathcal{E}_{\hat{D}}\left(\pi_{M}f_{\hat D,R,N} \right)
			+\mathcal{E}_{\hat{D}}\left(h \right)- \mathcal{E}_{D}\left(h \right)+ \mathcal{E}_{D}\left(h \right) - \mathcal{E}(h)+ \mathcal{E}(h)- \mathcal{E}(f_\rho),
		\end{aligned}
		\ees
		which can be further bounded by $I_1(D,\mathcal{H})+ I_2(D,\mathcal{H}) + \left\vert I_3(\hat{D},\mathcal{H})\right\vert+ \left\vert I_4(\hat{D},\mathcal{H})\right\vert+ R(\mathcal{H})$, in which
		\bes
		\begin{aligned}
			& I_1(D,\mathcal{H}) = \left\{\mathcal{E}\left(\pi_{M}f_{\hat D,R,N} \right)- \mathcal{E}\left(f_\rho \right) \right\}- \left\{\mathcal{E}_{D}\left(\pi_{M}f_{\hat D,R,N} \right)-\mathcal{E}_{D}\left(f_\rho \right) \right\}, \\
			& I_2(D,\mathcal{H}) = \left\{\mathcal{E}_{D}\left(h \right)-\mathcal{E}_D\left(f_\rho \right)\right\} - \left\{\mathcal{E}(h)-\mathcal{E}\left(f_\rho \right)\right\}, \\
			& I_3(\hat{D},\mathcal{H})= \mathcal{E}_D\left(\pi_{M}f_{\hat D,R,N} \right) - \mathcal{E}_{\hat{D}}\left(\pi_{M}f_{\hat D,R,N} \right), \\
			& I_4(\hat{D},\mathcal{H}) = \mathcal{E}_{\hat{D}}\left(h \right)- \mathcal{E}_{D}\left(h \right), \quad \quad
			R(\mathcal{H})= \mathcal{E}(h)- \mathcal{E}(f_\rho).
		\end{aligned}
		\ees
	\end{pro}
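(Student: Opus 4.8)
The plan is to prove the two parts of the statement in turn: first the displayed inequality, then its regrouping into $I_1+I_2+|I_3|+|I_4|+R(\mathcal{H})$.

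For the displayed inequality, I would first observe that its right-hand side is a telescoping sum: the two pairs $-\mathcal{E}_D(\pi_M f_{\hat D,R,N})+\mathcal{E}_D(\pi_M f_{\hat D,R,N})$ and $-\mathcal{E}_D(h)+\mathcal{E}_D(h)$ cancel, so the sum collapses to $\mathcal{E}(\pi_M f_{\hat D,R,N})-\mathcal{E}_{\hat D}(\pi_M f_{\hat D,R,N})+\mathcal{E}_{\hat D}(h)-\mathcal{E}(f_\rho)$. Cancelling the common terms $\mathcal{E}(\pi_M f_{\hat D,R,N})$ and $-\mathcal{E}(f_\rho)$ against the left-hand side shows that the claimed inequality is equivalent to $\mathcal{E}_{\hat D}(\pi_M f_{\hat D,R,N})\leq \mathcal{E}_{\hat D}(h)$. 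I would establish this from two facts. First, the truncation is a contraction toward any target in $[-M,M]$: since $\huaY=[-M,M]$, every observed $y_i$ satisfies $|y_i|\leq M$, whence $|\pi_M f(\mu)-y_i|\leq|f(\mu)-y_i|$ pointwise and therefore $\mathcal{E}_{\hat D}(\pi_M f_{\hat D,R,N})\leq\mathcal{E}_{\hat D}(f_{\hat D,R,N})$. Second, $f_{\hat D,R,N}$ is by definition the minimizer of $\mathcal{E}_{\hat D}$ over $\mathcal{H}$ (the minimizer exists by the compactness proved in Theorem \ref{cptthm}), so $\mathcal{E}_{\hat D}(f_{\hat D,R,N})\leq\mathcal{E}_{\hat D}(h)$ for every $h\in\mathcal{H}$. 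Chaining the two gives the required inequality.

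For the regrouping, I would rewrite the same telescoping sum by inserting and subtracting $\mathcal{E}_D(f_\rho)$ and $\mathcal{E}(f_\rho)$ in the two outer pairs. The pair $\mathcal{E}(\pi_M f_{\hat D,R,N})-\mathcal{E}_D(\pi_M f_{\hat D,R,N})$ then equals $I_1(D,\mathcal{H})+\mathcal{E}(f_\rho)-\mathcal{E}_D(f_\rho)$, and the pair $\mathcal{E}_D(h)-\mathcal{E}(h)$ equals $I_2(D,\mathcal{H})+\mathcal{E}_D(f_\rho)-\mathcal{E}(f_\rho)$; the two middle pairs are precisely $I_3(\hat D,\mathcal{H})$ and $I_4(\hat D,\mathcal{H})$, and the final term is $R(\mathcal{H})$. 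The two auxiliary differences $\mathcal{E}(f_\rho)-\mathcal{E}_D(f_\rho)$ and $\mathcal{E}_D(f_\rho)-\mathcal{E}(f_\rho)$ cancel exactly, so the telescoping sum equals $I_1+I_2+I_3+I_4+R(\mathcal{H})$ identically; bounding $I_3\leq|I_3|$ and $I_4\leq|I_4|$ then yields the stated bound.

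The whole argument is algebraic bookkeeping except for one genuinely substantive ingredient, the contraction property of the projection $\pi_M$, which is exactly where the boundedness $|y|\leq M$ of the output is used; this is also the step that lets us replace $f_{\hat D,R,N}$ by its truncation $\pi_M f_{\hat D,R,N}$ without increasing the empirical error. The point of the decomposition itself is conceptual rather than technical: it isolates the first-stage sample fluctuations $I_1,I_2$, the second-stage discrepancies $I_3,I_4$ arising from replacing $\mu_i$ by $\hat{\mu}_i^{n_i}$, and the approximation error $R(\mathcal{H})$, so that in the subsequent analysis each group can be controlled by a separate tool (concentration on the first stage for $I_1,I_2$, a Wasserstein-based second-stage sampling estimate for $I_3,I_4$, and the approximation result of Theorem \ref{thmpoly} in Section \ref{subsec22} for $R(\mathcal{H})$).
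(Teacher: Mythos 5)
Your proposal is correct and follows essentially the same route as the paper: the paper also writes the exact telescoping identity, reduces the inequality to the single fact $\mathcal{E}_{\hat{D}}\left(\pi_{M}f_{\hat D,R,N}\right)\leq\mathcal{E}_{\hat{D}}(h)$, and obtains the regrouping by inserting $\pm\mathcal{E}(f_\rho)$ and $\pm\mathcal{E}_D(f_\rho)$. The only difference is that you spell out why that fact holds (the contraction property of $\pi_M$ toward $y_i\in[-M,M]$ chained with the ERM minimality of $f_{\hat D,R,N}$), a justification the paper asserts without proof, so your write-up is if anything slightly more complete.
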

	
	Based on the two-stage error decomposition, we derive a new oracle inequality for the distribution regression scheme using our hypothesis space $\mathcal{H}_{(2,3),R,N}$ in the following theorem.
	
	\begin{thm} \label{oracledr}
		Consider the distribution regression framework described above with the first stage sample size  $m$, and the second stage sample size  $n_1=n_2=\cdots=n_m=n$. Then for any $h \in \mathcal{H}_{(2,3),R,N}$ and $\epsilon>0$, we have
		\bes
		\begin{aligned}
			& \textit{Prob}\left\{\Vert \pi_{M}f_{\hat D,R,N}-f_\rho\Vert_\rho^2>2\left\Vert h-f_\rho\right\Vert_\rho^2+8\epsilon \right\} \\
			\leq & \exp\left\{ T_1 N \log{\frac{16M\wh{R}}{\epsilon}}+T_2N\log{N}-\frac{3m\epsilon}{2048M^2} \right\} \\
			+ & \exp \left\{-\frac{m\epsilon^2}{2\left(3M+ \Vert h\Vert_\infty \right)^2 \left(\left\Vert h-f_\rho\right\Vert_\rho^2+\frac{2}{3}\epsilon \right)}\right\} \\
			+ & \exp\left\{\log{4m}+ T_1 N \log{\frac{80M \wh{R} R^2 N^4}{\epsilon}}+T_2N\log{N}-\frac{n\epsilon^2}{115200\max \{\left\Vert h \right\Vert_\infty^2 ,M^2\} R^8N^{16}}\right\},
		\end{aligned}
		\ees
		where $R$, $\wh{R}$ are constants depending on $d,Q,g,f$, and $T_1$, $T_2$ are constants depending on $d,Q$ given explicitly in the proof.
	\end{thm}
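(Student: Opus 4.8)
The plan is to start from the two-stage error decomposition of Proposition \ref{errordec}, combine it with the standard least-squares identity $\mathcal{E}(f) - \mathcal{E}(f_\rho) = \|f - f_\rho\|_\rho^2$ (so that the remainder $R(\mathcal{H})$ equals $\|h-f_\rho\|_\rho^2$ and the left-hand side is exactly $\|\pi_{M} f_{\hat D,R,N} - f_\rho\|_\rho^2$), and then bound the four fluctuation terms $I_1$, $I_2$, $|I_3|$, $|I_4|$ separately. The three exponential terms in the statement will arise as the failure probabilities of these bounds: the first controls $I_1$, the second controls $I_2$, and the third controls $|I_3|$ and $|I_4|$ together. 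On the intersection of the good events one obtains $I_1 \le \frac12\|\pi_{M} f_{\hat D,R,N} - f_\rho\|_\rho^2 + \epsilon$ and $I_2, |I_3|, |I_4| \le \epsilon$; absorbing the $\frac12\|\cdot\|_\rho^2$ into the left-hand side and rearranging yields $\|\pi_{M} f_{\hat D,R,N} - f_\rho\|_\rho^2 \le 2\|h-f_\rho\|_\rho^2 + 8\epsilon$, after which a union bound over the three failure events gives the claimed estimate.

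For $I_1$, since $\pi_{M} f_{\hat D,R,N}$ is random and data-dependent, I would use a one-sided relative (ratio-type) deviation inequality applied uniformly over the truncated class $\{\pi_{M} f : f \in \mathcal{H}_{(2,3),R,N}\}$. Writing $\xi_f(\mu,y) = (\pi_{M} f(\mu)-y)^2 - (f_\rho(\mu)-y)^2$, one has $\mathbb{E}\xi_f = \|\pi_{M} f - f_\rho\|_\rho^2$ and, because $|\pi_{M} f|,|f_\rho| \le M$, a variance bound of the form $\mathrm{Var}(\xi_f) \le c M^2\,\mathbb{E}\xi_f$; a Bernstein estimate on a $\|\cdot\|_\infty$-net of $\mathcal{H}_{(2,3),R,N}$, followed by the arithmetic--geometric step, produces the linear-in-$\epsilon$ exponent $-\frac{3m\epsilon}{2048M^2}$ and the covering contribution $T_1 N\log\frac{16M\wh{R}}{\epsilon} + T_2 N\log N$, using the estimate of $\mathcal{N}(\mathcal{H}_{(2,3),R,N},\cdot,\|\cdot\|_\infty)$ from subsection \ref{cnestimate}. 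For $I_2$ the function $h$ is fixed, so no covering number is needed: I would apply a single Bernstein inequality to the i.i.d.\ sum $\frac1m\sum_i[(h(\mu_i)-y_i)^2 - (f_\rho(\mu_i)-y_i)^2]$, whose range is controlled by $(3M+\|h\|_\infty)^2$ and whose variance is controlled by $(3M+\|h\|_\infty)^2\|h-f_\rho\|_\rho^2$, giving directly the second exponential.

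The genuinely new and hardest part is the second-stage term, i.e.\ the bound on $|I_3|$ and $|I_4|$, which measure the discrepancy between using the true measures $\mu_i$ and the empirical measures $\hat\mu_i^{n_i}$. Working conditionally on the first-stage sample $D$ (so that the $x_{i,j}$ are independent across $i$ and $j$), the key is a Lipschitz bound for functions in $\mathcal{H}_{(2,3),R,N}$: since the integration in Definition \ref{defFNN} occurs at layer $J_1=2$, one has $f(\hat\mu_i^{n_i}) - f(\mu_i) = c\cdot(h^{(3)}(\hat\mu_i^{n_i}) - h^{(3)}(\mu_i))$, and the $1$-Lipschitz property of $\sigma$ with the parameter bounds $\|F^{(j)}\|_\infty \le RN^2$, $\|c\|_\infty \le RN$ give $|f(\hat\mu_i^{n_i}) - f(\mu_i)| \le L\,\|h^{(2)}(\hat\mu_i^{n_i}) - h^{(2)}(\mu_i)\|_\infty$ with $L$ polynomial in $R,N$ (of order $R^2N^4$). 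The inner difference is an empirical-average deviation of i.i.d.\ bounded vectors, since $h^{(2)}(\hat\mu_i^{n_i}) = \frac1{n_i}\sum_j \sigma(F^{(2)}\sigma(F^{(1)}x_{i,j}-b^{(1)})-b^{(2)})$ has mean $h^{(2)}(\mu_i)$ and coordinates bounded in magnitude by a quantity again of order $R^2N^4$. A coordinate-wise Hoeffding inequality, with the identity $(a-y)^2-(b-y)^2=(a-b)(a+b-2y)$ bounding the amplitude factor by $2\max\{\|h\|_\infty,M\}$, then yields a concentration exponent of order $\frac{n\epsilon^2}{\max\{\|h\|_\infty^2,M^2\}R^8N^{16}}$, the power $R^8N^{16}$ being exactly $L^2$ times the squared coordinate range $(R^2N^4)^2$. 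For $I_4$ ($h$ fixed) a union bound over the $m$ indices and the $d_2=2N+3$ coordinates produces the $\log 4m$ prefactor; for $I_3$ the estimator depends on the very second-stage data over which we concentrate, so the bound must be made uniform over $\mathcal{H}_{(2,3),R,N}$ through a $\|\cdot\|_\infty$-net of radius of order $\epsilon/(R^2N^4)$, giving the covering term $T_1 N\log\frac{80M\wh{R} R^2 N^4}{\epsilon} + T_2 N\log N$.

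I expect the uniform second-stage control of $I_3$ to be the main obstacle: because $\pi_{M} f_{\hat D,R,N}$ is a data-dependent minimizer and the second-stage samples enter both the estimator and the empirical error $\mathcal{E}_{\hat D}$, one cannot condition the estimator away, and the whole argument hinges on transferring the pointwise Hoeffding bound to a supremum over $\mathcal{H}_{(2,3),R,N}$ via the covering number together with the network Lipschitz estimate. Tracking the powers of $R$ and $N$ in $L$ and the coordinate range, and verifying that the chosen net radius keeps the between-net error below $\epsilon$, is the delicate bookkeeping that fixes the constants; the compactness and covering-number results of Theorem \ref{cptthm} and subsection \ref{cnestimate} are what make this step finite and legitimate.
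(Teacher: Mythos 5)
Your proposal is correct and follows essentially the same route as the paper: the decomposition of Proposition \ref{errordec}, a ratio-type (relative) Bernstein bound over the truncated class for $I_1$, a single Bernstein inequality for the fixed $h$ in $I_2$, and a Hoeffding-plus-covering-number argument combined with the Lipschitz estimate $|f(\hat\mu_i^n)-f(\mu_i)|\lesssim R^2N^4\,\|h_f^{(2)}(\hat\mu_i^n)-h_f^{(2)}(\mu_i)\|_\infty$ for the second-stage terms $I_3$, $I_4$, with the same $R^8N^{16}$ bookkeeping. The only implementation-level differences are that the paper makes the second-stage bound uniform by covering the inner class $\mathcal{H}_2$ of functions on $\Omega$ (rather than a net on $\mathcal{H}_{(2,3),R,N}$ itself) and replaces your union bound over the $2N+3$ coordinates by a symmetry observation that controlling the first coordinate controls all rows; neither changes the resulting constants in any essential way.
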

	
	Utilizing the above oracle inequality for distribution regression, we are able to achieve almost optimal learning rates for the proposed distribution regression scheme on the hypothesis space $\mathcal{H}_{(2,3),R,N}$ that includes the FNN structure  we construct.
	\begin{thm} \label{thmestimation}
		If $f_\rho=f\circ L_G^Q$ in the composite form  (\ref{Qform}), where $Q$ is a polynomial of degree $q$ on $\Omega$, $g \in C^{0,1}[-B_Q,B_Q]$, and $f \in C^{0,\beta}[-B_G,B_G]$ for some $0<\beta \leq 1$. If the first stage sample size $m$ satisfies the restriction that
		\bes
		\log{(4m)}\leq A_6 m^{\frac{1}{2\beta+1}},
		\ees
		and the neural network parameter $N$ and the second stage sample size $n$ are chosen by
		\bes
		N=\left[A_4 m^{\frac{1}{2\beta+1}}\right], \quad n\geq\left\lceil A_5 m^{\frac{4\beta+17}{2\beta+1}}\right\rceil,
		\ees
		then there exists a constant $A_7$ such that
		\bes
		\mathbb{E}\left\{ \mathcal{E}\left(\pi_{M}f_{\hat D,R,N} \right)- \mathcal{E}\left(f_\rho \right) \right\} \leq A_7 m^{-\frac{2\beta}{2\beta+1}}\log{m},
		\ees
		where $A_4, A_5, A_6$ and $A_7$ are constants depending on $d,Q,g,f$ given explicitly in the proof.
	\end{thm}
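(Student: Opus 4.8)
The plan is to combine the approximation result of Theorem \ref{thmpoly} with the oracle inequality of Theorem \ref{oracledr}, and then to pass from the resulting tail bound to a bound on the expectation. First I would invoke the elementary identity for least squares regression that, for any bounded measurable $f$,
\[
\mathcal{E}(f) - \mathcal{E}(f_\rho) = \|f - f_\rho\|_\rho^2 ,
\]
so that the quantity to be estimated is exactly $\mathbb{E}\|\pi_M f_{\hat D,R,N} - f_\rho\|_\rho^2$. This reduces the problem to controlling the random variable appearing on the left-hand side of the oracle inequality, namely $\|\pi_M f_{\hat D,R,N}-f_\rho\|_\rho^2$.

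The next step is to supply a good comparison function $h \in \mathcal{H}_{(2,3),R,N}$. Since $f_\rho = f \circ L_G^Q$ has exactly the composite form (\ref{Qform}), Theorem \ref{thmpoly} produces, for each $N$, an explicit FNN output $h_N = c \cdot h^{(3)}$ with
\[
\|h_N - f_\rho\|_\infty \leq \frac{2B_G^\beta |f|_{C^{0,\beta}} + (3B_Q|g|_{C^{0,1}})^\beta |f|_{C^{0,\beta}}}{N^\beta} =: \frac{C_1}{N^\beta} .
\]
Here I would verify that the explicitly constructed parameters $F^{(j)}, b^{(j)}, c$ obey the bounds $\|F^{(j)}\|_\infty \leq RN^2$, $\|b^{(j)}\|_\infty \leq R$, $\|c\|_\infty \leq RN$ defining $\mathcal{H}_{(2,3),R,N}$ for a suitable constant $R = R(d,Q,f,g)$, which is exactly how $R$ (and the derived $\wh R$) enter the statement. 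Taking $h = h_N$ then yields the regularization error $R(\mathcal{H}) = \|h_N - f_\rho\|_\rho^2 \leq \|h_N - f_\rho\|_\infty^2 \leq C_1^2 / N^{2\beta}$, together with $\|h_N\|_\infty \leq M + C_1$.

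With this choice of $h$, Theorem \ref{oracledr} bounds $\mathrm{Prob}\{\|\pi_M f_{\hat D,R,N} - f_\rho\|_\rho^2 > 2\|h_N - f_\rho\|_\rho^2 + 8\epsilon\}$ by a sum of three exponential terms, call it $P(\epsilon)$. The conversion to an expectation bound is the standard tail-integration argument: for any threshold $\epsilon^\ast$,
\[
\mathbb{E}\|\pi_M f_{\hat D,R,N} - f_\rho\|_\rho^2 \leq 2\|h_N - f_\rho\|_\rho^2 + 8\epsilon^\ast + 8\int_{\epsilon^\ast}^\infty P(\epsilon)\, d\epsilon .
\]
I would then choose the parameters so that each term is integrable and negligible. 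The balancing is dictated by the target rate: taking $N = [A_4 m^{1/(2\beta+1)}]$ makes the approximation error $C_1^2/N^{2\beta} \asymp m^{-2\beta/(2\beta+1)}$, and choosing $\epsilon^\ast \asymp m^{-2\beta/(2\beta+1)}\log m$ forces the leading negative exponent $-3m\epsilon/(2048M^2)$ of the first term to dominate its positive part $T_1 N \log(1/\epsilon) + T_2 N\log N \asymp m^{1/(2\beta+1)}\log m$; the logarithmic factor in the final rate is precisely what is needed to beat these $N\log N$ and $N\log(1/\epsilon)$ contributions. For this $\epsilon^\ast$ the second term is super-polynomially small, since $\|h_N - f_\rho\|_\rho^2 \lesssim \epsilon^\ast$ makes its exponent $\lesssim -m\epsilon^\ast$.

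The main obstacle is the third exponential term, which encodes the second-stage sampling error and carries the destabilizing factors $R^8 N^{16}$ and $\max\{\|h_N\|_\infty^2, M^2\}$ in the denominator of its negative exponent $-n\epsilon^2/(115200\max\{\|h_N\|_\infty^2,M^2\}R^8 N^{16})$. To drive this term below the target rate one must take $n$ large enough that $n\epsilon^2/N^{16}$ dominates $\log(4m) + T_1 N\log(N/\epsilon) + T_2 N\log N$; a direct computation with $N \asymp m^{1/(2\beta+1)}$ and $\epsilon^\ast \asymp m^{-2\beta/(2\beta+1)}\log m$ shows this requires $n \gtrsim m^{(4\beta+17)/(2\beta+1)}$, which is exactly the stated lower bound with constant $A_5$. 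Finally I would bound the tail integral $\int_{\epsilon^\ast}^\infty P(\epsilon)\,d\epsilon$ by a geometric-type estimate, the exponents being monotone in $\epsilon$ for $\epsilon \geq \epsilon^\ast$, absorb it into the $8\epsilon^\ast$ term, and collect constants to obtain $\mathbb{E}\{\mathcal{E}(\pi_M f_{\hat D,R,N}) - \mathcal{E}(f_\rho)\} \leq A_7 m^{-2\beta/(2\beta+1)}\log m$. The hypothesis $\log(4m) \leq A_6 m^{1/(2\beta+1)}$ is used to guarantee that the $\log(4m)$ term in the third exponent does not spoil this balance.
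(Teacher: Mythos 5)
Your proposal is correct and follows essentially the same route as the paper: take $h=h^*$ from Theorem \ref{thmpoly} (with the parameter bounds of Lemma \ref{boundpara} guaranteeing $h^*\in\mathcal{H}_{(2,3),R,N}$), plug it into the oracle inequality of Theorem \ref{oracledr}, choose $N\asymp m^{\frac{1}{2\beta+1}}$ and $n\gtrsim m^{\frac{4\beta+17}{2\beta+1}}$ so the three exponential terms are dominated, and convert the tail bound to an expectation bound by integrating $\mathrm{Prob}\{\xi>t\}$ over $t$. Your bookkeeping (keeping $2\Vert h_N-f_\rho\Vert_\rho^2+8\epsilon^*$ explicit and integrating $P(\epsilon)$ over $[\epsilon^*,\infty)$) differs only cosmetically from the paper's device of restricting $\epsilon\geq 2C_1^2N^{-2\beta}\log N$ to absorb the approximation term into $9\epsilon$ before integrating.
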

	
	Notice that in the proof of Theorem \ref{oracledr} and Theorem \ref{thmestimation}, we assume that all $n_i=n$ for simplicity, but the proof still works for $n_i$ chosen to be different, in which we just need to replace $n$ by $\min_i \{n_i\}$ in the proof, this also stresses the advantage of using distribution rather than vectors as inputs of FNNs for the distribution regression problem. We end this subsection by providing  an intrinsic fact that the composite functional $f\left( L_G(\mu) \right)$ should be continuous w.r.t. the probability variable $\mu$. We put this fact in the following proposition and the proof will be given in the appendix.
	
	\begin{pro}\label{continuous}
		For any fixed $p\in[1,\infty)$, the composite functional $f\left( L_G(\mu) \right)$ with distribution variable $\mu$ is continuous on the metric space $(\huaP(\Omega),W_p)$.
	\end{pro}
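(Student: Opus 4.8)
The plan is to prove the quantitatively stronger statement that $f\circ L_G$ is Hölder-$\beta$ continuous on $(\huaP(\Omega),W_p)$, from which the claimed continuity is immediate. The argument proceeds in two reductions: first I would bound the inner functional $L_G$ by the Wasserstein metric directly through an arbitrary transference plan, then compose with the Hölder-$\beta$ map $f$.

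First I would record that in both settings treated in the paper the inducing function $G$ is Lipschitz on the compact set $\Omega$. In the ridge case $G(x)=g(\xi\cdot x)$ with $g\in C^{0,1}$, so $|G(x)-G(x')|\le |g|_{C^{0,1}}\,\|\xi\|_2\,\|x-x'\|_2$; in the polynomial case $G(x)=g(Q(x))$ with $Q$ a polynomial, which is Lipschitz on the convex compact ball $\mbb B\supset\Omega$ with constant $L_Q=\sup_{z\in\mbb B}\|\nabla Q(z)\|_2$, so $|G(x)-G(x')|\le |g|_{C^{0,1}}L_Q\,\|x-x'\|_2$. Denote the resulting Lipschitz constant of $G$ by $\kappa_G$.

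The key step bounds $L_G$ via any transference plan: for $\pi\in\Pi(\mu,\nu)$, since the marginals of $\pi$ are $\mu$ and $\nu$,
\be
\nono |L_G(\mu)-L_G(\nu)|=\left|\int_{\Omega\times\Omega}\big(G(x)-G(y)\big)\,d\pi(x,y)\right|\le \kappa_G\int_{\Omega\times\Omega}\|x-y\|_2\,d\pi(x,y).
\ee
Because $p\ge 1$, Jensen's inequality gives $\int\|x-y\|_2\,d\pi\le\big(\int\|x-y\|_2^p\,d\pi\big)^{1/p}$, and taking the infimum over all $\pi\in\Pi(\mu,\nu)$ yields $|L_G(\mu)-L_G(\nu)|\le\kappa_G\,W_p(\mu,\nu)$. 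Thus $L_G$ is Lipschitz, hence continuous, on $(\huaP(\Omega),W_p)$.

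Finally, composing with $f\in C^{0,\beta}$ I would conclude
\be
\nono |f(L_G(\mu))-f(L_G(\nu))|\le |f|_{C^{0,\beta}}\,|L_G(\mu)-L_G(\nu)|^\beta\le |f|_{C^{0,\beta}}\,\kappa_G^{\beta}\,W_p(\mu,\nu)^{\beta},
\ee
which establishes Hölder-$\beta$ continuity of $f\circ L_G$ and proves the proposition. I do not anticipate a genuine obstacle: the only points needing care are the verification of the Lipschitz bound on $G$ (in the polynomial case one uses that $\Omega\subset\mbb B$ is convex to pass from the gradient bound to a global Lipschitz estimate) and the correct use of Jensen's inequality to move from the $p=1$ transportation cost to $W_p$. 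For a merely continuous (non-Lipschitz) $G$ the same conclusion still holds by replacing this quantitative estimate with the fact that on the compact $\Omega$ the metric $W_p$ metrizes weak convergence (cf.\ \cite{villani}): $W_p(\mu_n,\mu)\to 0$ then forces $\int_\Omega G\,d\mu_n\to\int_\Omega G\,d\mu$ by boundedness and continuity of $G$, after which continuity of $f$ closes the argument.
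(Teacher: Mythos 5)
Your proof is correct, but it takes a genuinely different route from the paper's. The paper proves Lipschitz continuity of $L_G^Q$ by invoking the Kantorovich--Rubinstein duality formula for $W_1$: it bounds $\|g\circ Q\|_{C^{0,1}}\leq |g|_{C^{0,1}}\|\nabla Q\|_{C(\Omega)}$, feeds the normalized function $g\circ Q$ into the dual representation $W_{1}(\mu, \nu)=\sup_{\|\psi\|_{C^{0,1}} \leq 1}\{\int_{\Omega} \psi \,d \mu-\int_{\Omega} \psi \,d \nu\}$, and then uses $W_1\leq W_p$ (H\"older); continuity of the composite with $f$ is left implicit. You instead work on the primal side: the identity $L_G(\mu)-L_G(\nu)=\int_{\Omega\times\Omega}(G(x)-G(y))\,d\pi(x,y)$ for an arbitrary transference plan, followed by Jensen's inequality and an infimum over $\pi$, which in effect inlines the elementary direction of the duality theorem rather than citing it. Your version buys three things: it is self-contained from the definition of $W_p$ (no duality needed); it gives a quantitative H\"older-$\beta$ modulus $|f|_{C^{0,\beta}}\,\kappa_G^{\beta}\,W_p(\mu,\nu)^{\beta}$ for the full composite rather than bare continuity, and covers the ridge case $G=g(\xi\cdot x)$ explicitly; and it is more careful on one subtle point --- you take $\sup\|\nabla Q\|_2$ over the convex ball $\mbb B\supset\Omega$, whereas the paper's bound $|Q(x)-Q(y)|\leq\|\nabla Q\|_{C(\Omega)}|x-y|$ tacitly requires convexity of $\Omega$, which is not assumed. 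What the paper's approach buys is brevity, given that it already uses the Kantorovich--Rubinstein formula as a black box elsewhere (in the compactness proof of Theorem \ref{cptthm}), so the duality route keeps the toolset uniform across the paper. One small point you glossed over: to compose with $f\in C^{0,\beta}[-B_G,B_G]$ you should note that $|L_G(\mu)|\leq\|G\|_{C(\Omega)}\leq B_g\leq B_G$, so that $L_G(\mu)$ indeed lies in the domain of $f$; this is immediate from the definitions \eqref{BGr} and \eqref{BG}.
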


	\section{Related work and discussion}
	In this section, we review some related works on approximation and generalization performance of fully connected neural networks for regression with vector inputs, and some researches on distribution regression, to demonstrate the novelty and superiority of our results.
	
	Dating back to thirty years ago, there have been many works analysing the approximation and generalization ability of fully connected neural networks on learning various classes of functions, wherein the H\"older spaces are one of the most important classes of functions to be studied. At the beginning, most results about the approximation ability of the fully connected neural networks are derived  using the $C^{\infty}$ activation functions that satisfy two assumptions, that is for some $b\in \mathbb{R}$, $\sigma^k(b)\neq 0$ for any non-negative integer $k$, and for some integer $q\neq 1$, $\lim_{u \to -\infty} \sigma(u)/\vert u\vert^q=0$ and $\lim_{u \to \infty} \sigma(u)/u^q=1$. Such approximation rates can be found in \cite{Mhaskar1993} through a localized Taylor expansion method. It was shown that for any $f\in W^{\beta}_\infty\left([-1,1]^d\right)$, there exists a  net $f_N$  such that $\left\Vert f_N-f \right\Vert_{C([-1,1]^d)} \leq c_{f,d,\beta}N^{-\beta/d}$, with a constant $c_{f,d,\beta}$ independent of $N$. ReLU was more efficient and effective in deep learning \cite{Goodfellow2016} and thus became more and more popular in practice, therefore it is desirable to find approximation results for  deep ReLU neural networks.  Such approximation results are derived in \cite{Yarotsky2017} that $f\in W^{\beta}_\infty\left([-1,1]^d\right)$ can be approximated with an error $\epsilon$, by using at most $c\left(\log{1/\epsilon+1}\right)$ layers and at most $c\epsilon^{-d/\beta}\left(\log{1/\epsilon+1}\right)$ weights and computation units. Based on these approximation results, generalization results for ERM algorithms using fully connected neural networks are then conducted in \cite{chui2019b} and \cite{SchmidtHieber2020} to achieve  optimal convergence rate in the traditional regression framework. However, these research results are all for the traditional regression problem, the vector-input FNN does not work for the distribution regression framework with two stages of sampling due to the fact that the different sizes of second-stage sample will result in different dimensions of input vectors if we just vectorize all the second-stage samples. Furthermore, the generalization analysis for the traditional regression problem also fails in the distribution regression since the ERM algorithm is based on the second-stage data $\hat{D}$ rather than the first-stage data $D$, resulting in the desire of a new method to bound the concentration error between the generalization error and the second-stage empirical error for the two-stage sampling distribution regression framework.
	
	To solve the limitation of the usage of vector-input neural network structure for the distribution regression framework, we propose a novel FNN structure that is able to take the empirical distribution as the input for the learning of distribution regression in practice. The  idea of constructing the FNN structure with distribution input is slightly related to the work of \cite{Bruna2021}, in which they transform the problem of learning symmetric functions to the functional learning problem with distribution input in a Wasserstein space, and construct a shallow net
	\be \label{brunanet}
	f(\mu)= \frac{1}{m'} \sum_{j'=1}^{m'} b_{j'}\tilde{\sigma}\left(\frac{1}{m}\sum_{j=1}^m c_{j',j}\int \sigma_\alpha\left(\langle w_{j',j},\tilde{x}\rangle \right)d\left(\mu(x)\right) \right), \quad \tilde{x}=[x,R]^T.
	\ee
	The architecture of the FNN for functional approximation and distribution regression we construct (Definition \ref{defFNN}) is more general than (\ref{brunanet}) and can be deeper, which enables us to learn more complex functional. Utilizing our novel FNN structure, we construct a three-layer FNN to derive the approximation rate $O\left(N^{-\beta}\right)$ for learning the functional $f\left(L_G^Q(\mu)\right)$ induced by the polynomial $Q$ and H\"older functions $f, g$, which is the same as the rate of learning the composite function $f\left(g\left(Q(x)\right)\right)$ itself without taking into  consideration  the input of probability measures \cite{SchmidtHieber2020}. Actually, our FNN structure has the potential to learn  functionals that are induced by more composite functions, where we just need to construct a deep FNN with the number of layers determined by the number of composite functions that induce the functionals. Furthermore, we utilize a novel error decomposition method to derive an oracle inequality for the distribution regression framework with two-stage sampling, and then combine it with the approximation rate to derive an almost optimal learning rate $O\left(m^{-\frac{2\beta}{2\beta+1}}\log{m}\right)$ as \cite{chui2019b} and \cite{SchmidtHieber2020}.
	
	Let us review some related works on distribution regression in the literature of learning theory \cite{Cucker2002,sc2008}. The recent main works on the two-stage distribution regression mainly include \cite{sspg2016}, \cite{fgz2020}, \cite{ds2021}, \cite{ds2022}, \cite{mn2021} and \cite{yhsz2021}. These works mainly rely on a mean embedding technique to transform  probability measures to a space consisting of all mean embeddings via some Mercer kernel. In this theoretical model, the learning theory framework is in fact performed on the space of all mean embeddings which form a compact subspace of the  continuous function space defined on the underlying space of the probability measures. To derive nice learning rates, the works of \cite{sspg2016}, \cite{gs2019}, \cite{fgz2020}, \cite{ds2021}, \cite{ds2022} and \cite{yhsz2021} deeply rely on the kernel regularization and the corresponding integral operator technique. In fact, to the best of our knowledge,  learning rate  for distribution regression are  unexplored in  non-kernel or non-regularized settings where the integral operator approach cannot always  be applied. Moveover, most of the classical kernel-based learning theory approaches such as   \cite{sz2007}, \cite{cs2007}, \cite{gs2019}, and \cite{guo2020modeling} fail in the deep learning setting, and such classical techniques are  often inappropriate for modern fully connected neural network structures. One of the main reasons is that the structure of FNN-based hypothesis spaces is totally different from that of the kernel-based hypothesis space. Hence,  we are mainly in face of three aspects of difficulties in this work. The first one is how to provide an appropriate FNN structure with distribution inputs and a meaningful distribution regression model for the hypothesis space induced by the FNN. The second one is how to establish approximation results without utilizing the traditional kernel methods and its related integral operator approaches that the existing works on distribution regression used. The third one is how to realize  distribution regression without using regularization in a novel hypothesis space that relies on FNN structures and derive optimal learning rates, since there is currently no theoretical result on distribution regression derived in such a setting.
	
	In this paper, we overcome  aforementioned difficulties and establish a  learning theory framework of distribution regression with neural networks. At first, some new functional approximation results are proposed. Then we provide the proof of the crucial fact that the space  $\mathcal{H}_{(2,3),R,N}$ is compact. In contrast to the existing works on distribution regression, the underlying hypothesis space defined by us possesses  nice FNN learning structures. Furthermore, though the classical effective dimension based analysis related to kernel methods in \cite{sspg2016}, \cite{fgz2020}, \cite{mn2021} and \cite{yhsz2021} fails in our setting, new estimates based on the covering number of the newly defined hypothesis space $\mathcal{H}_{(2,3),R,N}$ are obtained. In contrast to the  previous works utilizing the classical error decomposition for kernel based regression such as some earlier works \cite{Cucker2002,wyz2006,Cucker2007} and recent developments \cite{fhsys2015,lin2018distributed}, another technical novelty is the crucial two-stage error decomposition which appears first in the literature of learning theory. Based on the functional approximation results, covering number estimates and the two-stage error decomposition, almost optimal learning rates of the proposed FNN distribution regression scheme are obtained. To the best of our knowledge, the work provides a beginning of the approximation theory of functionals with variable of probability measures using neural networks. It is also the first work to  establish the learning theory of distribution regression and obtain almost optimal learning rates with neural networks. The methods in this work have the potential to be further applied to more general deep learning settings.

	\section{Proof of  Main Results}
	
	\subsection{Proof of Theorem \ref{thmridge}}	
	In the construction of the  FNN to approximate the  functional, we need to use the following lemma to approximate  Lipschitz continuous univariate functions using  continuous piecewise linear functions (splines) $\{\sigma(\cdot - t_i)\}_{i=1}^{2N+3}$, with $t_i=-1+\frac{i-2}{N}$, which can be found in \cite{zhou2018deepdistributed} and \cite{msz2021}.
	
	\blm \label{splines}
	For $N\in\mathbb{N}$, let $\textbf{t}=\left\{t_i: = -1+\frac{i-2}N\right\}_{i=1}^{2N+3}$ be the uniform mesh on $\big[-1-\frac{1}{N},$ $1+\frac{1}{N}\big]$,
	$L_{\textbf{t}}$ be a linear operator on $C[-B,B]$ given by
	\be
	L_{\textbf t}(g)=\frac{N}{B} \sum_{i=1}^{2N+3} \left({\mathcal L}_N \left(\left\{g(B t_k)\right\}_{k=2}^{2N+2}\right)\right)_i \sigma\left(\cdot-Bt_{i}\right), \qquad g\in C[-B,B].
	\ee
	where ${\mathcal L}_N: \Real^{2N+1} \to \Real^{2N+3}$ is a linear operator that for $\zeta =(\zeta_i)_{i=2}^{2N+2} \in \Real^{2N+1}$
	\begin{equation}\label{diffoperator}
		\left({\mathcal L}_N (\zeta)\right)_i =
		\begin{cases}
			\zeta_2,                            &\text{for}~i=1,\\
			\zeta_3 -2\zeta_2,                &\text{for}~i=2,\\
			\zeta_{i-1} -2\zeta_i +\zeta_{i+1},   &\text{for}~3\leq i \leq 2N+1,\\
			\zeta_{2N+1} -2\zeta_{2N+2},        &\text{for}~i=2N+2,\\
			\zeta_{2N+2},                       &\text{for}~i=2N+3.
		\end{cases}
	\end{equation}
	Then for $g\in C^{0,\alpha}[-B,B]$ with $0<\alpha\leq1$, we have
	$$\left\|L_{\mathbf{t}}(g)-g\right\|_{C\left[-B,B\right]} \leq \frac{2B^\alpha \lvert g \rvert_{0,\alpha}}{N^\alpha},$$
	where $\lvert g \rvert_{0,\alpha}$ is the semi-norm of the Lipschitz-$\alpha$ continuous function $g$.
	\elm
	
	\begin{proof} [Proof of Theorem \ref{thmridge}]
		For the first hidden layer of our FNN, we aim to realize the  ridge functionals $\left\{\int_{\Omega} \sigma(\xi \cdot x-B_\xi t_j)d\mu\right\}_{j}$ leading to approximating the inner functional $L_G^\xi(\mu)$. Take the connection matrix and bias vector as
		\bes
		F^{(1)}= \left[\begin{array}{c}
			\xi^T \\ \xi^T \\ \vdots \\ \xi^T
		\end{array} \right] \in \mathbb{R}^{(2N+3)\times d}, \quad
		b^{(1)}= B_\xi \left[\begin{array}{c}
			t_1 \\ t_2 \\ \vdots \\ t_{2N+3}
		\end{array} \right] \in \mathbb{R}^{2N+3},
		\ees
		then the output of the first hidden layer $h^{(1)}\in \mathbb{R}^{2N+3}$ is
		\be
		\left(h^{(1)}(\mu)\right)_{j}= \int_{\Omega} \sigma\left(\xi \cdot x-B_\xi t_{j}\right)d\mu(x), \quad j=1,2,\dots, 2N+3.
		\ee
		The free parameters in the first hidden layer come only from $\xi$ and $B_\xi t_j$, thus the number of free parameters in this layer is $$\mathcal{N}_1=d+2N+3.$$
		The second hidden layer of our FNN aims to realize the ridge functions  $\left\{\sigma(\ww{\huaS}(\mu)-B_G t_j)\right\}_{j}$ leading to approximating $f\left(\ww{\huaS}(\mu)\right)$, where $\ww{\huaS}(\mu)$ is the approximation of $L_G^\xi(\mu)$ by the linear combination of $h^{(1)}(\mu)$, and $B_G$ is the upper bound of $\left| \ww{\huaS}(\mu)\right|$. From Lemma \ref{splines}, we know that there exists a linear combination of $h^{(1)}(\mu)$ that approximates $L_G^\xi(\mu)$, and denote this approximation as
		\bes
		\ww{\huaS}(\mu):=  \frac{N}{B_\xi} \sum_{j=1}^{2 N+3}\left(\mathcal{L}_{N}\left(\left\{g\left(B_\xi t_{k}\right)\right\}_{k=2}^{2 N+2}\right)\right)_{j} \int_{\Omega}\sigma\left(\xi \cdot x-B_\xi t_{j}\right)d\mu(x),
		\ees
		then according to Lemma \ref{splines}, we have
		\bes \label{ridgeerror1}
		\begin{aligned}
			&\sup_{\mu\in \huaP(\Omega)}\left| \ww{\huaS}(\mu)-L_G^\xi(\mu) \right|\\
			\leq & \sup_{\mu\in \huaP(\Omega)} \left| \frac{N}{B_\xi} \sum_{j=1}^{2 N+3}\left(\mathcal{L}_{N}\left(\left\{g\left(B_\xi t_{k}\right)\right\}_{k=2}^{2 N+2}\right)\right)_{j} \int_{\Omega}\sigma\left(\xi \cdot x-B_\xi t_{j}\right)d\mu(x)-\int_{\Omega}g(\xi \cdot x)d\mu(x) \right|\\
			\leq & \sup_{\mu\in \huaP(\Omega)} \int_{\Omega}\left\|\frac{N}{B_\xi} \sum_{j=1}^{2 N+3}\left(\mathcal{L}_{N}\left(\left\{g\left(B_\xi t_{k}\right)\right\}_{k=2}^{2 N+2}\right)\right)_{j} \sigma\left(\xi \cdot x-B_\xi t_{j}\right)-g(\xi \cdot x)\right\|_{C(\Omega)}d\mu(x) \\
			\leq & \frac{2B_\xi \lvert g\rvert_{0,1}}{N},
		\end{aligned}
		\ees
		which also implies the bound of $\left|\ww{\huaS}(\mu)\right|$,
		\bes
		\sup_{\mu\in \huaP(\Omega)} \left|\ww{\huaS}(\mu)\right| \leq \sup_{\mu\in \huaP(\Omega)}\left| L_G^\xi(\mu)\right|+ \sup_{\mu\in \huaP(\Omega)}\left| \ww{\huaS}(\mu)-L_G^\xi(\mu)\right| \leq  B_g+2B_\xi \lvert g\rvert_{0,1}=B_G.
		\ees
		Then we take the connection matrix and bias vector of the second hidden layer as
		\bes
		F^{(2)}=\frac{N}{B_\xi}\left[\begin{array}{c}
			\mathcal{L}_{N}^T\left(\left\{g\left(B_\xi t_{k}\right)\right\}_{k=2}^{2 N+2}\right)  \\
			\mathcal{L}_{N}^T\left(\left\{g\left(B_\xi t_{k}\right)\right\}_{k=2}^{2 N+2}\right)  \\
			\vdots  \\
			\mathcal{L}_{N}^T\left(\left\{g\left(B_\xi t_{k}\right)\right\}_{k=2}^{2 N+2}\right)
		\end{array}\right] \in \mathbb{R}^{(2N+3)\times(2N+3)}, \ \ \
		b^{(2)}= B_G \left[\begin{array}{c}
			t_1 \\ t_2 \\ \vdots \\ t_{2N+3}
		\end{array}	 \right] \in \mathbb{R}^{2N+3}.
		\ees
		Therefore the output of the second hidden layer $h^{(2)}(\mu) \in \mathbb{R}^{2N+3}$ is
		\be
		\left( h^{(2)}(\mu)\right)_j= \sigma \left( \ww{\huaS}(\mu)-B_G t_j \right)_j, \quad j=1,2,\dots,2N+3.
		\ee
		The free parameters in the second hidden layer come  only from $\frac{N}{B_\xi} \mathcal{L}_{N}\left(\left\{g\left(B_\xi t_{k}\right)\right\}_{k=2}^{2 N+2}\right)$ and $B_G t_j$, thus the number of free parameters in this layer is $$\mathcal{N}_2= 2N+3+2N+3=4N+6.$$
		Finally, choose $c=\frac{N}{B_G} \mathcal{L}_{N}\left(\left\{f\left(B_G t_{k}\right)\right\}_{k=2}^{2 N+2}\right) \in \mathbb{R}^{2N+3}$, according to Lemma \ref{splines}, we have
		\be \label{ridgeerror2}
		\sup_{\mu\in \huaP(\Omega)} \left| c \cdot h^{(2)}(\mu)- f(\ww{\huaS}(\mu)) \right| \leq \frac{2B_G^\beta \left| f \right|_{C^{0,\beta}}}{N^\beta}.
		\ee
		Then we can derive the approximation rate of $f(L_G^\xi(\mu))$ with our constructed FNN by combining (\ref{ridgeerror1}) and (\ref{ridgeerror2}), that is
		\be
		\begin{aligned}
			& \sup_{\mu\in \huaP(\Omega)} \left| c \cdot h^{(2)}(\mu)- f(L_G^\xi(\mu)) \right| \\
			& \leq \sup_{\mu\in \huaP(\Omega)} \left| c \cdot h^{(2)}(\mu)- f(\ww{\huaS}(\mu)) \right|+ \left| f \right|_{C^{0,\beta}}\sup_{\mu\in \huaP(\Omega)} \left( \left|L_G^\xi(\mu)-\ww{\huaS}(\mu) \right| \right)^\beta \\
			& \leq \frac{2B_G^\beta \left| f \right|_{C^{0,\beta}}}{N^\beta}+ \left| f \right|_{C^{0,\beta}}\left(  \f{2B_\xi \lvert g \rvert_{C^{0,1}}}{N} \right)^\beta=\f{2B_G^\beta \left| f \right|_{C^{0,\beta}}+\left| f \right|_{C^{0,\beta}}(2B_\xi \lvert g \rvert_{C^{0,1}})^\beta}{N^\beta}.
		\end{aligned}
		\ee
		From the expression of the coefficient vector $c=\frac{N}{B_G} \mathcal{L}_{N}\left(\left\{f\left(B_G t_{k}\right)\right\}_{k=2}^{2 N+2}\right)$, we have known $\|c\|_{\infty}\leq\f{4\|f\|_{\infty}N}{B_G}$ and hence  \eqref{ridgeapp} holds.
		
		The free parameters of our constructed FNN are from the first, second hidden layer and the coefficients $c$, therefore, the total number of free parameters is
		\be
		\mathcal{N}=\mathcal{N}_1+\mathcal{N}_2+2N+3= 8N+d+12.
		\ee
		This completes the proof of Theorem \ref{thmridge}.
	\end{proof}
	
	\begin{proof}[Proof of Corollary \ref{lapcor}]
		First we define a function
		$$g: [-B_{\xi},B_{\xi}]\rightarrow\mbb R, x\mapsto e^{-x}.$$
		Then we know that $|g|_{C^{0,1}}= e^{B_{\xi}}$ and $B_g=\|g\|_{C[-B_{\xi},B_{\xi}]}= e^{B_{\xi}}$. Then the corresponding $B_G$ in \eqref{BGr} can be explicitly written as
		$$B_G=B_g+2B_{\xi}|g|_{C^{0,1}}= e^{B_{\xi}}+2B_{\xi}e^{B_{\xi}}.$$
		Define a function $f$ as the identity on the interval
		$$[-(e^{B_{\xi}}+2B_{\xi}e^{B_{\xi}}),e^{B_{\xi}}+2B_{\xi}e^{B_{\xi}}]$$
		which satisfies $|f|_{C^{0,1}}=1$ and $\|f\|_{\infty}=e^{B_{\xi}}+2B_{\xi}e^{B_{\xi}}$. We know that $\|f\|_{\infty}=B_G$ and the functional $\huaL_{\xi}$ defined in \eqref{laplace} can be exactly represented by
		$$\huaL_{\xi}(\mu)=f\left(\int_{\Omega}g(\xi\cdot x)d\mu(x)\right)$$
		with $f$, $g$ defined as above. Then the first result follows directly from Theorem \ref{thmridge} by substituting $\beta=1$ and the above bounds.
		
		The second inequality follows directly from Schwarz inequality $|\xi\cdot x|\leq\|\xi\|_2\|x\|_2\leq1$, which implies that $B_{\xi}=\|\xi\cdot x\|_{C(\Omega)}\leq1$.
	\end{proof}

	\subsection{Proof of Theorem \ref{thmpoly}}
	
	\begin{proof} [Proof of Theorem \ref{thmpoly}]
		We first  construct  the two-hidden layer structures in the integral part, which aims to realize an approximation of $g(Q(x))$ for  $x\in \Omega$.
		The first layer of this two-layer structure in the integral part aims to realize the ridge functions $\{\xi_i \cdot x-t_j\}_{i,j}$ leading to approximating the polynomials $\{ (\xi_i \cdot x)^\ell \}_{i,\ell}$. Take the connection matrix as
		\bes
		F^{(1)}= \left[ \begin{array}{c}
			\mathbf{1}_{2N+3} \xi_1^T \\ \mathbf{1}_{2N+3} \xi_2^T \\ \vdots \\\mathbf{1}_{2N+3} \xi_{n_q}^T
		\end{array} \right] \in \mathbb{R}^{n_q(2N+3) \times d},
		\ees
		where $\mathbf{1}_{2N+3}$ is the constant $1$ vector in $\Real^{2N+3}$, and the bias vector $b^{(1)}\in \Real^{n_q(2N+3)}$ as
		\bes
		b^{(1)}_{(k-1)(2N+3)+j}= t_j, \quad k=1,2,\dots,n_q, \ j=1,2,\dots,2N+3,
		\ees
		then we have that
		\bes
		\left( \sigma(F^{(1)}x-b^{(1)}) \right)_{(k-1)(2N+3)+j}= \sigma\left(\xi_k \cdot x-t_j \right), \quad k=1,2,\dots , n_q, \ j=1,2, \dots, 2N+3.
		\ees
		
		The free parameters in this layer are only $\xi_k$ and $t_j$, so the number of free parameters in the first hidden layer is $$\mathcal{N}_1=n_q d+2N+3.$$
		
		The second layer of this two-layer structure in the integral part aims to realize the ridge functions $\left\{\sigma \left( \ww{Q}(x)-B_Q t_j \right) \right\}_{j}$, where $\ww{Q}(x)$ is an approximation of $Q(x)$, and $B_Q$ is the upper bound of $\left| \ww{Q}(x) \right|$ defined explicitly in \eqref{BQ}. Then after the integral with respect to the input distribution $\mu$, we get the output of the integral-part  layer, which is then used to realize an approximation of the inner functional $L_G^Q$.
		
		Denote $v^{[l]}= \mathcal{L}_{N}\left(\left\{t_{j}^{l}\right\}_{j=2}^{2 N+2}\right) \in \Real^{2N+3}$, $\mathbf{V}= \left[v^{[1]} \ v^{[2]} \ \ldots \ v^{[q]}\right]^T \in \Real^{q \times (2N+3)}$, $O$ the zero matrix of size $q \times (2N+3)$ and
		$$
		F^{[N]}=N\left[\begin{array}{cccc}
			\mathbf{V} & O  & \ldots & O  \\
			O  & \mathbf{V} & \ldots & O  \\
			\vdots & \vdots & \vdots & \vdots \\
			O  & O  & \ldots & \mathbf{V}
		\end{array}\right] \in \mathbb{R}^{qn_q \times n_q(2N+3)}.
		$$
		Then we have
		\bes
		F^{[N]}\sigma\left(F^{(1)}x-b^{(1)} \right)= \left[ P_{1}^{[1]}(x) \ \cdots \ P_{1}^{[q]}(x)\  P_{2}^{[1]}(x) \ \cdots \ P_{2}^{[q]}(x) \cdots P_{n_q}^{[1]}(x) \ \cdots \ P_{n_q}^{[q]}(x) \right]^T,
		\ees
		where for $k=1,2,\dots,n_q$ and $\ell=1,2,\dots,q$,
		\bes
		P_{k}^{[\ell]}(x)= N \sum_{j=1}^{2N +3} v^{[\ell]}_j \sigma\left(\xi_{k} \cdot x- t_j\right).
		\ees
		If we denote
		\bes
		\gamma_Q= \left[\gamma_{1,1}^Q \ \gamma_{1,2}^Q \ \ldots \ \gamma_{1,q}^Q \ \gamma_{2,1}^Q \ \gamma_{2,2}^Q \ \ldots \ \gamma_{2,q}^Q \ \ldots \ \gamma_{n_q,1}^Q \ \gamma_{n_q,2}^Q \ \ldots \  \gamma_{n_q,q}^Q \right],
		\ees
		then,
		\bes
		\gamma_Q F^{[N]} \sigma\left(F^{(1)}x-b^{(1)} \right) = \sum_{k =1}^{n_q} \sum_{\ell=1}^q \gamma_{k, \ell}^Q N \sum_{j=1}^{2N +3} v^{[\ell]}_j \sigma\left(\xi_{k} \cdot x -t_{j}\right)
		\ees
		
		According to Lemma \ref{splines}, for $k=1,2,\dots,n_q$, and $\ell=1,2,\dots,q$,
		\be
		\sup_{x \in \Omega} \left|N \sum_{j=1}^{2N +3} v^{[\ell]}_j \sigma\left(\xi_{k} \cdot x -t_{j}\right) -\left(\xi_{k} \cdot x \right)^\ell\right|
		\leq \frac{2\ell}{N},
		\ee
		denote
		$$ \ww{Q}(x)= Q(0)+ \sum_{k =1}^{n_q} \sum_{\ell=1}^q \gamma_{k, \ell}^Q N \sum_{j=1}^{2N +3} v^{[\ell]}_j \sigma\left(\xi_{k} \cdot x -t_{j}\right), $$
		then we have
		\be \label{rateQ}
		\sup_{x \in \Omega} \left|\ww{Q}(x) -Q(x)\right|
		\leq \frac{2q \lVert \gamma_Q \rVert_1}{N},
		\ee
		which also implies the upper bound of $\left|\ww{Q}(x) \right|$ as
		\bes
		\sup_{x \in \Omega} \left|\ww{Q}(x) \right| \leq \sup_{x \in \Omega} \left|Q(x) \right| + \sup_{x \in \Omega} \left|\ww{Q}(x) -Q(x)\right| \leq  \wh{B}_Q+ 2q\lVert \gamma_Q \rVert_1=B_Q.
		\ees
		Take the connection matrix of the second hidden layer as
		\bes
		F^{(2)}= \mathbf{1}_{2N+3}\gamma_Q F^{[N]} \in \Real^{(2N+3)\times n_q(2N+3)},
		\ees
		and the bias vector $b^{(2)}\in \Real^{2N+3}$ as
		\bes
		b^{(2)}_j= -Q(0)+B_Q t_j, \quad \hbox{for} \ j=1,2,\dots,2N+3.
		\ees
		Then the output of the integral-part layer $h^{(2)}\in \Real^{2N+3}$ is
		\be
		\left(h^{(2)}(\mu)\right)_j= \int_{\Omega}\sigma\left( \ww{Q}(x)-B_Q t_j \right)d\mu(x), \quad \hbox{for} \ j=1,2,\dots, 2N+3.
		\ee
		The free parameters in this layer are only $\gamma_Q, Nv^{[\ell]}$ and $-Q(0)+B_Qt_j$, so the number of free parameters in the second hidden layer is $$\mathcal{N}_2=qn_q+q(2N+3)+2N+3=2(q+1)N+qn_q+3q+3.$$
		
		After the realization of the polynomial $Q$, the rest part of the proof is similar as that in the proof of Theorem \ref{thmridge}.
		The third hidden layer of our FNN aims to realize the ridge functions  $\left\{\sigma(\ww{\huaS}(\mu)-B_G t_j)\right\}_{j}$ leading to approximating $f\left(\ww{\huaS}(\mu)\right)$, where $\ww{\huaS}(\mu)$ is the approximation of $L_G^Q(\mu)$ by the linear combination of $h^{(2)}(\mu)$ that is defined in following equation, and $B_G$ is the upper bound of $\left| \ww{\huaS}(\mu)\right|$ defined explicitly in \eqref{BG}. Denote
		\bes
		\ww{\huaS}(\mu):= \frac{N}{B_Q} \sum_{j=1}^{2 N+3}\left(\mathcal{L}_{N}\left(\left\{g\left(B_Q t_{k}\right)\right\}_{k=2}^{2 N+2}\right)\right)_{j} \int_{\Omega}\sigma\left(\ww{Q}(x)-B_Q t_{j}\right)d\mu(x),
		\ees
		then utilizing the error decomposition,
		\bea
		\nono&&\sup_{\mu\in \huaP(\Omega)} \left| \ww{\huaS}(\mu)-L_G^Q(\mu) \right| \\
		\nono&& \leq   \sup_{\mu\in \huaP(\Omega)}\left| \frac{N}{B_Q} \sum_{j=1}^{2 N+3}\left(\mathcal{L}_{N}\left(\left\{g\left(B_Q t_{k}\right)\right\}_{k=2}^{2 N+2}\right)\right)_{j} \int_{\Omega} \sigma\left(\ww{Q}(x)-B_Q t_{j}\right)d\mu-\int_{\Omega}g(\ww{Q}(x))d\mu \right| \\
		\nono&& + \sup_{\mu\in \huaP(\Omega)}\left|\int_{\Omega}g\left(\ww{Q}(x)\right)d\mu- \int_{\Omega}g\left(Q(x)\right)d\mu \right|.
		\eea
		The properties of integral imply that the above terms are bounded by
		\bea
		\nono&& \sup_{\mu\in \huaP(\Omega)} \int_{\Omega}\left\| \frac{N}{B_Q} \sum_{j=1}^{2 N+3}\left(\mathcal{L}_{N}\left(\left\{g\left(B_Q t_{k}\right)\right\}_{k=2}^{2 N+2}\right)\right)_{j} \sigma\left(\ww{Q}(x)-B_Q t_{j}\right)-g\left(\ww{Q}(x)\right)\right\|_{C(\Omega)}d\mu\\
		\nono&& +\sup_{\mu\in \huaP(\Omega)}\int_{\Omega}\left\|g\left(\ww{Q}(x)\right)-g\left(Q(x)\right)\right\|_{C(\Omega)}d\mu.
		\eea
		The fact that $\|\ww{Q}\|_{C(\Omega)}\leq B_Q$, Lemma \ref{splines}, the Lipschitz-$1$ condition of $g$ and (\ref{rateQ}) then imply that
		\be \label{polyerror1}
		\sup_{\mu\in \huaP(\Omega)} \left| \ww{\huaS}(\mu)-L_G^Q(\mu) \right| \leq \frac{2B_Q \lvert g\rvert_{0,1}}{N}+ \lvert g \rvert_{C^{0,1}}\left(\frac{2q\lVert \gamma_Q \rVert_1}{N} \right) \leq \f{3 B_Q \lvert g \rvert_{C^{0,1}}}{N}.
		\ee
		The above procedures also imply the bound of $\left|\ww{\huaS}(\mu)\right|$ as
		\bes
		\sup_{\mu\in \huaP(\Omega)} \left|\ww{\huaS}(\mu)\right| \leq \sup_{\mu\in \huaP(\Omega)}\left| L_G^Q(\mu)\right|+ \sup_{\mu\in \huaP(\Omega)}\left| \ww{\huaS}(\mu)-L_G^Q(\mu)\right| \leq  B_g+3B_Q \lvert g\rvert_{0,1}=B_G.
		\ees
		Then we take the connection matrix and bias vector of the third hidden layer as
		\bes
		F^{(3)}=\frac{N}{B_Q}\left[\begin{array}{c}
			\mathcal{L}_{N}^T\left(\left\{g\left(B_Q t_{k}\right)\right\}_{k=2}^{2 N+2}\right)  \\
			\mathcal{L}_{N}^T\left(\left\{g\left(B_Q t_{k}\right)\right\}_{k=2}^{2 N+2}\right)  \\
			\vdots  \\
			\mathcal{L}_{N}^T\left(\left\{g\left(B_Q t_{k}\right)\right\}_{k=2}^{2 N+2}\right)
		\end{array}\right] \in \mathbb{R}^{(2N+3)\times(2N+3)}, \ \
		b^{(3)}= B_G \left[\begin{array}{c}
			t_1 \\ t_2 \\ \vdots \\ t_{2N+3}
		\end{array}	 \right] \in \mathbb{R}^{2N+3}.
		\ees
		Therefore the output of the third hidden layer $h^{(3)}\in \mathbb{R}^{2N+3}$ is
		\be
		\left( h^{(3)}(\mu)\right)_j= \sigma \left( \ww{\huaS}(\mu)-B_G t_j \right)_j, \quad j=1,2,\dots,2N+3.
		\ee
		The free parameters in the third hidden layer are only $\frac{N}{B_Q} \mathcal{L}_{N}\left(\left\{g\left(B_\xi t_{k}\right)\right\}_{k=2}^{2 N+2}\right)$ and $B_G t_j$, thus the number of free parameters in this layer is $$\mathcal{N}_3= 2N+3+2N+3=4N+6.$$
		
		Finally, choose $c=\frac{N}{B_G} \mathcal{L}_{N}\left(\left\{f\left(B_G t_{k}\right)\right\}_{k=2}^{2 N+2}\right) \in \mathbb{R}^{2N+3}$, according to Lemma \ref{splines}, we have
		\be \label{polyerror2}
		\sup_{\mu\in \huaP(\Omega)} \left| c \cdot h^{(3)}(\mu)- f(\ww{\huaS}(\mu)) \right| \leq \frac{2B_G^\beta \left| f \right|_{C^{0,\beta}}}{N^\beta} .
		\ee
		Then we can derive the approximation rate of our constructed FNN by combining (\ref{polyerror1}) and (\ref{polyerror2}), and using the Lipschitz-$\beta$ condition of $f$, that is
		\be
		\begin{aligned}
			& \sup_{\mu\in \huaP(\Omega)} \left| c \cdot h^{(3)}(\mu)- f(L_G^Q(\mu)) \right| \\
			& \leq \sup_{\mu\in \huaP(\Omega)} \left| c \cdot h^{(3)}(\mu)- f(\ww{\huaS}(\mu)) \right|+ \left| f \right|_{C^{0,\beta}}\sup_{\mu\in \huaP(\Omega)} \left( \left|L_G^Q(\mu)-\ww{\huaS}(\mu) \right| \right)^\beta \\
			& \leq \frac{2B_G^\beta \left| f \right|_{C^{0,\beta}}}{N^\beta}+ \left| f \right|_{C^{0,\beta}}\left( \f{3B_Q \lvert g \rvert_{C^{0,1}}}{N} \right)^\beta=\f{2B_G^\beta \left| f \right|_{C^{0,\beta}}+\left| f \right|_{C^{0,\beta}}(3B_Q \lvert g \rvert_{C^{0,1}})^\beta}{N^\beta}.
		\end{aligned}
		\ee
		From the expression $c=\frac{N}{B_G} \mathcal{L}_{N}\left(\left\{f\left(B_G t_{k}\right)\right\}_{k=2}^{2 N+2}\right)$, we know $\|c\|_{\infty}\leq\f{4\|f\|_{\infty}N}{B_G}$. Then we have \eqref{polyapp}. The free parameters of the whole network are from the first, second, third hidden layer and the coefficient vector $c$, therefore the total number of free parameters is
		\be
		\mathcal{N}=\mathcal{N}_1+\mathcal{N}_2+\mathcal{N}_3+2N+3= (2q+10)N+(d+q)n_q+3q+15.
		\ee
		This proves Theorem \ref{thmpoly}.
	\end{proof}

	\subsection{Compactness of the hypothesis space $\mathcal{H}_{(2,3),R,N}$}
	
	In this section, we rigorously prove the compactness of $\mathcal{H}_{(2,3),R,N}$. The argument is based on a general version of the Ascoli-Arzelà theorem for an abstract framework \cite[Theorem 19.1c]{dd2002}.
	
	We first recall some notations on general equi-continuity and equi-boundedness.
	Let $\left\{f_{n}\right\}$ be a countable collection of continuous functions from a separable topological space $(X, \mathcal{T})$ into a metric space $\left(Y, d_{Y}\right)$. The functions $\{f_{n}\}$ are equibounded at $x$ if the closure in $\left(Y, d_{Y}\right)$ of the set $\left\{f_{n}(x)\right\}$ is compact. The functions $\{f_{n}\}$ are equi-continuous at a point $x \in X$ if for every $\varepsilon>0$, there exists an open set $\mathcal{O} \in \mathcal{T}$ containing $x$ and such that
	$d_{Y}\left(f_{n}(x), f_{n}(y)\right) \leq \varepsilon \quad$ for all $y \in \mathcal{O} \quad$ and all $n \in \mathbb{N}$. Then the general Ascoli-Arzelà theorem can be stated as follow.
	
	\begin{lem}
		Let $\left\{f_{n}\right\}$ be a sequence of continuous functions from a separable topological space $(X, \mathcal{T})$ into a metric space $\left(Y ,d_{Y}\right)$. Assume that the functions $\{f_{n}\}$ are equibounded and equi-continuous at each $x \in X$. Then, there exists a subsequence $\left\{f_{n^{\prime}}\right\} \subset\left\{f_{n}\right\}$ and a continuous function $f: X \rightarrow Y$ such that $\left\{f_{n^{\prime}}\right\} \rightarrow f$ pointwise in $X .$ Moreover the convergence is uniform on compact subsets of $X .$
	\end{lem}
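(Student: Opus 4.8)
The plan is to run the classical diagonal-extraction argument, adapted to the abstract setting in which the target is only a metric space (not assumed complete). First I would invoke the separability of $(X,\mathcal{T})$ to fix a countable dense subset $\{x_k\}_{k=1}^\infty$, so that every nonempty open subset of $X$ meets $\{x_k\}$. The equiboundedness hypothesis says that for each fixed $k$ the closure of $\{f_n(x_k)\}_n$ in $(Y,d_Y)$ is compact, hence sequentially compact, so a Cantor diagonal procedure produces a single subsequence $\{g_n\}\subset\{f_n\}$ along which $\{g_n(x_k)\}_n$ converges simultaneously for every $k$: thin out first along $x_1$, then within that subsequence along $x_2$, and so on, and finally take the diagonal sequence.

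Next I would upgrade pointwise convergence from the dense set to all of $X$ using equi-continuity. Fix an arbitrary $x\in X$ and $\varepsilon>0$; equi-continuity at $x$ yields an open $\mathcal{O}\ni x$ with $d_Y(g_n(x),g_n(y))\le\varepsilon$ for all $y\in\mathcal{O}$ and all $n$. Density lets me pick $x_k\in\mathcal{O}$, and the triangle inequality
$$d_Y(g_n(x),g_m(x))\le d_Y(g_n(x),g_n(x_k))+d_Y(g_n(x_k),g_m(x_k))+d_Y(g_m(x_k),g_m(x))$$
bounds the outer two terms by $\varepsilon$, while the middle term is small for large $n,m$ since $\{g_n(x_k)\}$ converges. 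Thus $\{g_n(x)\}$ is Cauchy. This is the one genuinely abstract point, and the step I expect to need the most care: because $Y$ need not be complete, I cannot invoke completeness directly; instead I observe that the Cauchy sequence $\{g_n(x)\}$ lies in the compact, hence complete, set $\overline{\{f_n(x)\}}$ furnished by equiboundedness, so it converges, and I define $f(x)$ to be its limit. This defines $f$ on all of $X$.

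Finally I would verify continuity of $f$ and uniform convergence on compacta. For continuity at $x$, reuse the neighborhood $\mathcal{O}$ above: letting $n\to\infty$ in $d_Y(g_n(x),g_n(y))\le\varepsilon$ and using $g_n(y)\to f(y)$ for every $y$ gives $d_Y(f(x),f(y))\le\varepsilon$ on $\mathcal{O}$. For a compact $K\subseteq X$, equi-continuity supplies for each $x\in K$ an open neighborhood on which the $g_n$ oscillate by at most $\varepsilon$ uniformly in $n$; extracting a finite subcover with centers $x^{(1)},\dots,x^{(p)}$ and using convergence at these finitely many points shows $\{g_n\}$ is uniformly Cauchy on $K$, after which letting the second index tend to infinity yields $d_Y(g_n(z),f(z))\le 3\varepsilon$ uniformly for $z\in K$ and large $n$. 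The conceptual novelty throughout is simply the systematic replacement of completeness of $Y$ by compactness of the pointwise closures supplied by equiboundedness; the remaining estimates are the standard triangle-inequality bookkeeping.
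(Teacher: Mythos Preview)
The paper does not actually prove this lemma; it is quoted verbatim as a known result and attributed to an external reference (Theorem 19.1c of \cite{dd2002}), then used as a black box in the proof of Theorem \ref{cptthm}. So there is no ``paper's own proof'' to compare against.

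Your diagonal-extraction argument is the standard one and is correct as written. The only point worth flagging is your handling of the incompleteness of $Y$: you correctly observe that the Cauchy sequence $\{g_n(x)\}$ sits inside the compact (hence complete) set $\overline{\{f_n(x)\}}$ supplied by equiboundedness, so it converges there. This is exactly the right substitute for a completeness assumption on $Y$, and it is the one nonstandard ingredient relative to the textbook Ascoli--Arzel\`a proof for real-valued functions. The rest---finite subcover on a compact $K$, three-term triangle inequality with convergence at the finitely many centers---is routine and sound.
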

	
	\begin{proof} [Proof of Theorem \ref{cptthm}]
		The idea is to show $\mathcal{H}_{(2,3),R,N}$ is a sequentially compact subset of the metric space $(C(\huaP(\Omega)),\|\cdot\|_{\infty})$. Then $\mathcal{H}_{(2,3),R,N}$ is compact. We have already known from aforementioned argument that the Wasserstein space $(\huaP(\Omega),W_p)$ is a compact separable metric space. Hence, it is naturally a separable topological space with the topology induced by the Wasserstein metric $\huaT=W_p$.
		
		For any countable collection of  functions $\{f_n\}\subset\mathcal{H}_{(2,3),R,N}$ with
		\be
		f_n: (\huaP(\Omega),W_p)\rightarrow(\mbb R,|\cdot|). \label{map}
		\ee
		We first show that $\{f_n\}$ is equi-continuous at any point $\mu\in\huaP(\Omega)$. From the structure of space  $\mathcal{H}_{(2,3),R,N}$, we know that for any function $f_n$ in the collection, there exist $c_{f_n}$, $F_{f_n}^{(j)}$, $b_{f_n}^{(j)}$ with $\|F_{f_n}^{(j)}\|_{\infty}\leq RN^2$, $\|b_{f_n}^{(j)}\|_{\infty}\leq R$, $\|c_{f_n}\|_{\infty}\leq RN$, for $j=1,2,3$ and any $n\in\mbb N$ such that
		\be
		f_n(\mu)=c_{f_n}\cdot\sigma\left(F_{f_n}^{(3)}\int_{\Omega}H_{f_n}^{(2)}(x)d\mu(x)-b_{f_n}^{(3)}\right) \label{fn}
		\ee
		with
		$$H_{f_n}^{(2)}(x)=\sigma\left(F_{f_n}^{(2)}\sigma(F_{f_n}^{(1)}x-b_{f_n}^{(1)})-b_{f_n}^{(2)}\right).$$
		Then  for any $\mu,\nu\in\huaP(\Omega)$ and $n\in\mbb N$,
		\bea
		\nono&&|f_n(\mu)-f_n(\nu)|\\
		\nono&&=\left|c_{f_n}\cdot\left\{\sigma\left(F_{f_n}^{(3)}\int_{\Omega}H_{f_n}^{(2)}(x)d\mu(x)-b_{f_n}^{(3)}\right)-\sigma\left(F_{f_n}^{(3)}\int_{\Omega}H_{f_n}^{(2)}(x)d\nu(x)-b_{f_n}^{(3)}\right)\right\}\right|\\
		&&\leq \|c_{f_n}\|_{1}\|F_{f_n}^{(3)}\|_{\infty}\left\|\int_{\Omega}H_{f_n}^{(2)}(x)d\mu(x)-\int_{\Omega}H_{f_n}^{(2)}(x)d\nu(x)\right\|_{\infty}. \label{aa1}
		\eea
		The definition of the infinity norm for the  vector implies that the above bound equals
		$$\|c_{f_n}\|_{1}\|F_{f_n}^{(3)}\|_{\infty}\max_{1\leq i\leq 2N+3}\left|\int_{\Omega}\left(H_{f_n}^{(2)}(x)\right)_id\mu(x)-\int_{\Omega}\left(H_{f_n}^{(2)}(x)\right)_id\nu(x)\right|$$
		Recall the well-known duality formula for the Kantorovich-Rubinstein metric
		$$
		W_{1}(\mu, \nu)=\sup _{\psi:\|\psi\|_{C^{0,1}} \leq 1}\left\{\int_{\Omega} \psi d \mu-\int_{\Omega} \psi d \nu\right\}.
		$$
		Note that $\|c_{f_n}\|_1\leq(2N+3)\|c_{f_n}\|_{\infty}$, we have \eqref{aa1} is bounded by
		$$(2N+3)\|c_{f_n}\|_{\infty}\|F_{f_n}^{(3)}\|_{\infty}\max_{1\leq i\leq 2N+3}\left\|\left(H_{f_n}^{(2)}\right)_i\right\|_{C^{0,1}}W_1(\mu,\nu).$$
		Also, for any $x,y\in\Omega$ and $n\in\mbb N$, note that for any $1\leq i\leq 2N+3$ there holds
		\bes
		&&\left|\left(H_{f_n}^{(2)}(x)\right)_i-\left(H_{f_n}^{(2)}(y)\right)_i\right|\leq\left\|H_{f_n}^{(2)}(x)-H_{f_n}^{(2)}(y)\right\|_{\infty}\\
		&&=\left\|\sigma\left(F_{f_n}^{(2)}\sigma(F_{f_n}^{(1)}x-b_{f_n}^{(1)})-b_{f_n}^{(2)}\right)-\sigma\left(F_{f_n}^{(2)}\sigma(F_{f_n}^{(1)}y-b_{f_n}^{(1)})-b_{f_n}^{(2)}\right)\right\|_{\infty}\\
		&&\leq\|F_{f_n}^{(2)}\|_{\infty}\left\|F_{f_n}^{(1)}(x-y)\right\|_{\infty}\leq\|F_{f_n}^{(2)}\|_{\infty}\|F_{f_n}^{(1)}\|_{\infty}\|x-y\|_2.
		\ees
		Then it follows that
		$$\max_{1\leq i\leq 2N+3}\left\|\left(H_{f_n}^{(2)}\right)_i\right\|_{C^{0,1}}\leq\|F_{f_n}^{(2)}\|_{\infty}\|F_{f_n}^{(1)}\|_{\infty}$$
		and
		\bes
		|f_n(\mu)-f_n(\nu)|\leq(2N+3)\|c_{f_n}\|_{\infty}\prod_{j=1}^3\|F_{f_n}^{(j)}\|_{\infty}W_1(\mu,\nu)\leq (2N+3)R^4N^7W_1(\mu,\nu), \ \forall \ n\in\mbb N.
		\ees
		We know from \cite{villani}  that there holds $W_1(\mu,\nu)\leq W_p(\mu,\nu)$ for any $p\in[1,\infty)$ which is in fact as a result of H\"older's inequality, then we arrive at
		\be
		|f_n(\mu)-f_n(\nu)|\leq (2N+3) R^4N^7W_p(\mu,\nu), \ \text{for} \ \text{any} \ \mu,\nu\in\huaP(\Omega), n\in\mbb N.
		\ee
		Then we know that, at any point $\mu\in\huaP(\Omega)$, for every $\epsilon>0$, there always exists an open ball in the space $(\huaP(\Omega),W_p)$ centered at $\mu$ in the form of
		\bes
		\huaO_{\mu,\epsilon}=\left\{\nu\in\huaP(\Omega):W_p(\nu,\mu)<\f{\epsilon}{(2N+3)R^4N^7}\right\}
		\ees
		such that $|f_n(\mu)-f_n(\nu)|\leq\epsilon$ for all $\nu\in\huaO_{\mu,\epsilon}$ and all $n\in\mbb N$. That is to say, for any countable collection $\{f_n\}\subset\mathcal{H}_{(2,3),R,N}$, the equi-continuity of $\{f_n\}$ holds at any point of $\huaP(\Omega)$.
		
		For the equi-boundedness of $\{f_n\}$, we only need to show that for any $\mu\in\huaP(\Omega)$ and any $n\in\mbb N$, the function $f_n$ in \eqref{fn} as a map of \eqref{map} is uniformly bounded in $\mbb R$. To this end, since
		$$f_n(\mu)=c_{f_n}\cdot h_{f_n}^{(3)}(\mu)$$
		for some $h_{f_n}^{(3)}$ defined layer by layer as in definition \ref{defFNN}.
		Then for $h_{f_n}^{(1)}(x):= \sigma\left(F_{f_n}^{(1)}x-b_{f_n}^{(1)} \right)$,
		\bes
		\left\Vert h_{f_n}^{(1)}(x) \right\Vert_\infty \leq \lVert F_{f_n}^{(1)} \rVert_\infty \lVert x \rVert_\infty + \lVert b_{f_n}^{(1)} \rVert_\infty \leq RN^2+R \leq 2RN^2, \ \ \  \forall n\in\mbb N.
		\ees
		For $h_{f_n}^{(2)}(\mu)$, we have
		\bes
		\left\Vert h_{f_n}^{(2)}(\mu) \right\Vert_\infty=\left\Vert\int_{\Omega}\sigma\left(F_{f_n}^{(2)}h_{f_n}^{(1)}(x)-b_{f_n}^{(2)}\right)d\mu\right\Vert_{\infty}\leq \int_{\Omega}\left\Vert\sigma\left(F_{f_n}^{(2)}h_{f_n}^{(1)}(x)-b_{f_n}^{(2)}\right)\right\Vert_{\infty} d\mu  \\
		\leq\lVert F_{f_n}^{(2)} \rVert_\infty \left\Vert h_{f_n}^{(1)}(x) \right\Vert_\infty + \lVert b_{f_n}^{(2)} \rVert_\infty \leq (2R^2+R)N^4, \ \forall n\in\mbb N.
		\ees
		Then for the third hidden layer,
		\bes
		\left\Vert h_{f_n}^{(3)}(\mu) \right\Vert_\infty \leq \lVert F_{f_n}^{(3)} \rVert_\infty \left\Vert h_{f_n}^{(2)}(\mu) \right\Vert_\infty + \lVert b_{f_n}^{(3)} \rVert_\infty \leq (2R^3+R^2+R)N^6,  \forall n\in\mbb N.
		\ees
		Finally,
		\be
		|f_n(\mu)|=|c_{f_n}\cdot h_{f_n}^{(3)}(\mu)|\leq\|c_{f_n}\|_{1}\|h_{f_n}^{(3)}(\mu)\|_{\infty}\leq(2N+3)(2R^4+R^3+R^2)N^7,\forall n\in\mbb N.
		\ee
		From the above process, we have in fact shown that $\sup_{\mu\in\huaP(\Omega)}|f_n(\mu)|\leq(2N+3)(2R^4+R^3+R^2)N^7$ holds for all $n\in\mbb N$. Hence the equi-boundedness of $\{f_n\}$ holds at any point of $\huaP(\Omega)$. Combining the above arguments and using the fact that a subset of a metric space is compact if and only if it is a sequentially compact set, we obtain that $\mathcal{H}_{(2,3),R,N}$ is compact.
	\end{proof}

	\subsection{Covering number of space $\mathcal{H}_{(2,3),R,N}$}\label{cnestimate}
	The generalization (estimation) error bound can be derived by a bias and variance trade-off method. The bias corresponds to the approximation rate of our proposed hypothesis space for approximating the target function, which is shown in Theorem \ref{thmpoly}, while we also need to demonstrate the approximation ability of our proposed hypothesis space by showing that the parameters of our constructed FNN can actually be bounded as required in the hypothesis space. The variance term can be measured by the complexity of the hypothesis space, such as pseudo dimension, Rademacher complexity and covering number, and we utilize the covering number as the tool of estimating the complexity. We first bound the parameters in our constructed FNN structure in the following lemma.
	
	\blm \label{boundpara}
	Let $Q$ be a polynomial of degree $q$ on $\Omega$, $g \in C^{0,1}[-B_Q,B_Q]$, $f \in C^{0,\beta}[-B_G,B_G]$, for some $0<\beta \leq 1$. Then for the FNN constructed in the proof of Theorem $\ref{thmpoly}$, there exists a constant $R=R_{d,Q,f,g}$ depending on $d,Q,f,g$ such that for $j=1,2,3$,
	\bes
	\lVert F^{(j)} \rVert_\infty \leq RN^2, \quad \lVert b^{(j)} \rVert_\infty \leq R, \quad \lVert c \rVert_\infty \leq RN.
	\ees
	\elm
	
	\begin{proof}
		For the first hidden layer, since $\left|\xi \right|=1$, and $\left| t_j\right| =1$, for all $j=1,2,\dots,2N+3$, we have that $\lVert F^{(1)} \rVert_\infty \leq \sqrt{d}$ and $\lVert b^{(1)}\rVert_\infty \leq 2$.
		
		For the second hidden layer, since $\lVert v^{(\ell)}\rVert_\infty \leq 4$, for all $\ell=1,2,\dots,q$, and $\lVert Q \rVert_{C(\Omega)}=\wh{B}_Q \leq B_Q$, we have that $\lVert F^{(2)} \rVert_\infty \leq 4\lVert \gamma_{Q} \rVert_1 N(2N+3)$ and $\lVert b^{(2)}\rVert_\infty \leq 3B_Q$.
		
		For the third hidden layer, since $\lVert g \rVert_{C([-B_Q,B_Q])}=B_g \leq B_G$, we have that $\lVert F^{(3)} \rVert_\infty \leq \frac{4B_G}{B_Q}N(2N+3)$ and $\lVert b^{(3)}\rVert_\infty \leq 2B_G$.
		
		For the coefficient vector $c$, we have that $\lVert c \rVert_\infty \leq \frac{4 \lVert f\rVert_\infty}{B_G}N$.
		
		Thus we finish the proof by choosing
		\bes
		R=\max \left\{2\sqrt{d},20 \lVert \gamma_Q \rVert_1, 3B_Q, \frac{20B_G}{B_Q}, 2B_G, \frac{4 \lVert f\rVert_\infty}{B_G} \right\}.
		\ees
	\end{proof}
	
	In the following lemma, we bound the covering number of our hypothesis space $\mathcal{H}_{(2,3),R,N}$, which is then used to derive an upper bound of the estimation error for the ERM algorithm.
	
	\blm \label{coveringnumber}
	For $R\geq 1, N \in \mathbb{N}$, and $\wh{R}=3R^4(10n_q+d+18)$, there exist two constants $T_1, T_2$ depending on $d,q$ such that
	\bes
	\log \mathcal{N}\left(\mathcal{H}_{(2,3),R,N},\epsilon,\lVert \cdot \rVert_\infty\right) \leq T_1N \log{\frac{\wh{R}}{\epsilon}}+ T_2N \log{N}.
	\ees
	\elm
	
	\begin{proof}
		Denote $h^{(1)}(x):= \sigma\left(F^{(1)}x-b^{(1)} \right)$, then
		\bes
		\left\Vert h^{(1)}(x) \right\Vert_\infty \leq \lVert F^{(1)} \rVert_\infty \lVert x \rVert_\infty + \lVert b^{(1)} \rVert_\infty \leq RN^2+R \leq 2RN^2.
		\ees
		For the second layer, we have
		\bes
		\left\Vert h^{(2)}(\mu) \right\Vert_\infty=\left\Vert\int_{\Omega}\sigma\left(F^{(2)}h^{(1)}(x)-b^{(2)}\right)d\mu\right\Vert_{\infty}\leq \sup_{\mu\in\huaP(\Omega)}\int_{\Omega}\left\Vert\sigma\left(F^{(2)}h^{(1)}(x)-b^{(2)}\right)\right\Vert_{\infty} d\mu  \\
		\leq\lVert F^{(2)} \rVert_\infty \left\Vert h^{(1)}(x) \right\Vert_\infty + \lVert b^{(2)} \rVert_\infty \leq 2R^2N^4+R \leq 3R^2N^4.
		\ees
		Then for the third hidden layer,
		\bes
		\left\Vert h^{(3)}(\mu) \right\Vert_\infty \leq \lVert F^{(3)} \rVert_\infty \left\Vert h^{(2)}(\mu) \right\Vert_\infty + \lVert b^{(3)} \rVert_\infty \leq 3R^3N^6+R \leq 4R^3N^6.
		\ees
		
		If we choose another functional $\wh{c} \cdot \wh{h}^{(3)}(\mu)$ in the hypothesis space $\mathcal{H}_{(2,3),R,N}$ induced by $\wh{F}^{(j)}$, $\wh{b}^{(j)}$ and $\wh{c}$, satisfying the restriction that,
		\bes
		\left\vert F^{(j)}_{ik}- \wh{F}^{(j)}_{ik} \right\vert \leq \epsilon, \ \left\Vert b^{(j)}- \wh{b}^{(j)}\right\Vert_\infty \leq \epsilon, \ \hbox{and} \left\Vert c- \wh{c}\right\Vert_\infty \leq \epsilon,
		\ees
		then by the Lipschitz property of ReLU,
		\bes
		\left\Vert h^{(1)}(x) - \wh{h}^{(1)}(x) \right\Vert_\infty \leq \left\lVert F^{(1)}-\wh{F}^{(1)} \right\Vert_\infty \left\Vert x \right\Vert_\infty + \left\lVert b^{(1)}-\wh{b}^{(1)} \right\Vert_\infty \leq (d+1)\epsilon.
		\ees
		Consider the random variable $\xi$, since $\left\vert \mathbb{E}(\xi) \right\vert \leq \mathbb{E} \left(\left\vert \xi \right\vert \right)$, and $\mathbb{E}(\xi)$ is bounded by the maximum value of $\xi$, utilize the Lipschitz property of ReLU,
		\bes
		\begin{aligned}
			&\left\lVert h^{(2)}(\mu)-\wh{h}^{(2)}(\mu) \right\Vert_\infty \leq \left\lVert \int_{\Omega}\sigma\left(F^{(2)}h^{(1)}(x)-b^{(2)}\right)d\mu - \int_{\Omega}\sigma\left(\wh{F}^{(2)}\wh{h}^{(1)}(x)-\wh{b}^{(2)}\right)d\mu \right\Vert_\infty \\
			\leq &  \sup_{\mu\in\huaP(\Omega)} \int_{\Omega}\left\lVert\sigma\left(F^{(2)}h^{(1)}(x)-b^{(2)}\right) - \sigma\left(\wh{F}^{(2)}\wh{h}^{(1)}(x)-\wh{b}^{(2)}\right)\right\Vert_\infty d\mu \\
			\leq & \left\Vert F^{(2)}\left(h^{(1)}(x)-\wh{h}^{(1)}(x)\right) \right\Vert_\infty + \left\Vert \left(F^{(2)}-\wh{F}^{(2)}\right) \wh{h}^{(1)}(x) \right\Vert_\infty + \left\lVert b^{(2)}-\wh{b}^{(2)} \right\Vert_\infty \\
			\leq & RN^2(d+1)\epsilon+ 2RN^2n_q(2N+3)\epsilon+\epsilon \leq (10n_q+d+2)RN^3\epsilon,
		\end{aligned}
		\ees
		where $\|\cdot\|_{\infty}$ in above inequalities obeys the notations at the end of the introduction, specifically, $\|\cdot\|_{\infty}$ on the right hand side of the first inequality is taken w.r.t. vector of functions with variables in $\huaP(\Omega)$. Similarly, for the third hidden layer,
		\bes
		\begin{aligned}
			& \left\lVert h^{(3)}(\mu)-\wh{h}^{(3)}(\mu) \right\Vert_\infty \leq \left\Vert F^{(3)}\left(h^{(2)}(x)-\wh{h}^{(2)}(x)\right) \right\Vert_\infty + \left\Vert \left(F^{(3)}-\wh{F}^{(3)}\right) \wh{h}^{(2)}(x) \right\Vert_\infty \\
			& + \left\lVert b^{(3)}-\wh{b}^{(3)} \right\Vert_\infty  \leq RN^2(10n_q+d+2)RN^3\epsilon+3R^2N^4(2N+3)\epsilon+\epsilon \\
			& \leq (10n_q+d+18)R^2N^5\epsilon.
		\end{aligned}
		\ees
		Finally, the output of these two functionals satisfy
		\be
		\begin{aligned}
			& \left\lVert c \cdot h^{(3)}(\mu)- \wh{c} \cdot \wh{h}^{(3)}(\mu) \right\Vert_\infty \leq \left\lVert c \cdot \left(h^{(3)}(\mu)- \wh{h}^{(3)}(\mu)\right) \right\Vert_\infty + \left\lVert \left(c-\wh{c}\right) \cdot \wh{h}^{(3)}(\mu) \right\Vert_\infty \\
			& \leq (2N+3)RN(10n_q+d+18)R^2N^5\epsilon+(2N+3)4R^3N^6\epsilon \leq 5(10n_q+d+22)R^3N^7\epsilon =:\ww{\epsilon}.
		\end{aligned}
		\ee
		Therefore, by taking a $\epsilon$-net of each free parameter in $F^{(j)}$, $b^{(j)}$ and $c$, the $\ww{\epsilon}$-covering number of $\mathcal{H}_{(2,3),R,N}$ can be bounded by
		\be
		\begin{aligned}
			& \mathcal{N}\left(\mathcal{H}_{(2,3),R,N},\ww{\epsilon},\lVert \cdot \rVert_\infty\right) \leq \left\lceil\frac{2RN^2}{\epsilon}\right\rceil^{n_qd+qn_q+q(2N+3)+2N+3} \left\lceil\frac{2R}{\epsilon}\right\rceil^{6N+9} \left\lceil\frac{2RN}{\epsilon}\right\rceil^{2N+3} \\
			\leq & \left(\frac{3R}{\epsilon}\right)^{(2q+10)N+(d+q)n_q+3q+15} N^{(4q+6)N+2(d+q)n_q+6q+9} \leq \left( \frac{\wh{R}}{\ww{\epsilon}}\right)^{T_1N} N^{T_2N},
		\end{aligned}
		\ee
		where $\wh{R}=15R^4(10n_q+d+18)$, $T_1=(d+q)n_q+5q+5$, and $T_2=9(d+q)n_q+45q+190$. Thus we finish the proof by taking the logarithm.
	\end{proof}
	
	\subsection{Proof of Theorem \ref{oracledr}}
	We first prove the two-stage error decomposition which is crucial for the proof of Theorem \ref{oracledr}.
	\begin{proof} [Proof of Proposition \ref{errordec}]
		With simple computations, we have
		\bes
		\begin{aligned}
			& \mathcal{E}\left(\pi_{M}f_{\hat D,R,N} \right)- \mathcal{E}\left(f_\rho \right)= \mathcal{E}\left(\pi_{M}f_{\hat D,R,N} \right)- \mathcal{E}_{D}\left(\pi_{M}f_{\hat D,R,N} \right)+ \mathcal{E}_D\left(\pi_{M}f_{\hat D,R,N} \right) \\
			& - \mathcal{E}_{\hat{D}}\left(\pi_{M}f_{\hat D,R,N} \right) +\mathcal{E}_{\hat{D}}\left(\pi_{M}f_{\hat D,R,N} \right) - \mathcal{E}_{\hat{D}}\left(h \right) + \mathcal{E}_{\hat{D}}\left(h \right)- \mathcal{E}_{D}\left(h \right)+ \mathcal{E}_{D}\left(h \right) - \mathcal{E}(h) \\
			& + \mathcal{E}(h)- \mathcal{E}(f_\rho).
		\end{aligned}
		\ees
		Then the first decomposition follows from the fact $\mathcal{E}_{\hat{D}}\left(\pi_{M}f_{\hat D,R,N} \right) \leq \mathcal{E}_{\hat{D}}\left(h \right)$. The second further decomposition follows immediately by inserting $\pm \huaE(f_{\rho})$, $\pm \huaE_D(f_{\rho})$ to the above terms.
	\end{proof}

	We  introduce several concentration inequalities based on Bernstein's inequality and Hoeffding's inequality which can be found in or derived from \cite{Cucker2002}, \cite{sz2004} and \cite{Cucker2007}.
	
	\begin{lem} [A-inequality]
		Let $\mathcal{G}$ be a set of continuous functions on a probability space $\mathcal{Z}$ such that, for some $B>0,c>0$, $\left\vert G-\mathbb E(G)\right\vert \leq B$ almost surely and $\mathbb E\left(G^2\right)\leq c\mathbb E(G)$ for all $f\in \mathcal{G}$, then for any $\epsilon>0$ and $0<\alpha\leq1$,
		\bes
		\textit{Prob}\left\{\sup_{G\in \mathcal{G}} \frac{\mathbb E(G)-\frac{1}{m} \sum_{i=1}^m G(z_i)}{\sqrt{\mathbb E(G)+\epsilon}}>4\alpha\sqrt{\epsilon} \right\} \leq \mathcal{N}\left(\mathcal{G},\alpha\epsilon,\Vert \cdot\Vert_\infty \right) \exp \left\{-\frac{\alpha^2m\epsilon}{2c+\frac{2B}{3}} \right\}.
		\ees
	\end{lem}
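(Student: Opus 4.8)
The plan is to reduce this uniform (supremum over $\mathcal{G}$) ratio-type tail bound to a single-function estimate obtained from Bernstein's inequality, and then to pay for the supremum with the covering number $\mathcal{N}(\mathcal{G},\alpha\epsilon,\|\cdot\|_\infty)$ through a union bound. The two hypotheses on $\mathcal{G}$ are used for different purposes: the boundedness $|G-\mathbb{E}(G)|\le B$ controls the Bernstein denominator, while the variance condition $\mathbb{E}(G^2)\le c\mathbb{E}(G)$ supplies the variance proxy $c\mathbb{E}(G)$ that matches the normalization by $\sqrt{\mathbb{E}(G)+\epsilon}$; note also that $\mathbb{E}(G^2)\ge 0$ together with $c>0$ forces $\mathbb{E}(G)\ge 0$, so the denominator is always $\ge\sqrt{\epsilon}$.

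First I would fix a single $G\in\mathcal{G}$ and apply Bernstein's inequality to the i.i.d.\ zero-mean variables $\mathbb{E}(G)-G(z_i)$, each bounded in absolute value by $B$ and with variance at most $\mathbb{E}(G^2)\le c\mathbb{E}(G)$, obtaining
\[
\mathrm{Prob}\left\{\mathbb{E}(G)-\tfrac1m\sum_{i=1}^m G(z_i)>t\right\}\le \exp\left\{-\frac{mt^2}{2\left(c\mathbb{E}(G)+\tfrac13 Bt\right)}\right\}.
\]
Then I would substitute $t=\alpha\sqrt{\epsilon}\,\sqrt{\mathbb{E}(G)+\epsilon}$. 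Using $\alpha\le 1$ and $\sqrt{\epsilon}\le\sqrt{\mathbb{E}(G)+\epsilon}$ yields $\tfrac13 Bt\le \tfrac13 B(\mathbb{E}(G)+\epsilon)$ and $c\mathbb{E}(G)\le c(\mathbb{E}(G)+\epsilon)$, so the denominator is at most $(2c+\tfrac{2B}{3})(\mathbb{E}(G)+\epsilon)$ while $t^2=\alpha^2\epsilon(\mathbb{E}(G)+\epsilon)$; the common factor $\mathbb{E}(G)+\epsilon$ cancels, giving the one-function bound
\[
\mathrm{Prob}\left\{\frac{\mathbb{E}(G)-\tfrac1m\sum_{i=1}^m G(z_i)}{\sqrt{\mathbb{E}(G)+\epsilon}}>\alpha\sqrt{\epsilon}\right\}\le \exp\left\{-\frac{\alpha^2 m\epsilon}{2c+\frac{2B}{3}}\right\}.
\]

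Next I would discretize. Let $\{G_1,\dots,G_K\}$ with $K=\mathcal{N}(\mathcal{G},\alpha\epsilon,\|\cdot\|_\infty)$ be centers of an $\alpha\epsilon$-net of $\mathcal{G}$, and for a given $G$ pick $G_j$ with $\|G-G_j\|_\infty\le\alpha\epsilon$. Both $|\mathbb{E}(G)-\mathbb{E}(G_j)|$ and $|\tfrac1m\sum G(z_i)-\tfrac1m\sum G_j(z_i)|$ are then at most $\alpha\epsilon$, so the two numerators differ by at most $2\alpha\epsilon$; dividing by $\sqrt{\mathbb{E}(G)+\epsilon}\ge\sqrt{\epsilon}$ bounds this contribution by $2\alpha\sqrt{\epsilon}$. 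Since $\alpha\le 1$ gives $|\mathbb{E}(G)-\mathbb{E}(G_j)|\le\epsilon$, the denominators satisfy $\tfrac1{\sqrt2}\sqrt{\mathbb{E}(G_j)+\epsilon}\le\sqrt{\mathbb{E}(G)+\epsilon}\le\sqrt2\,\sqrt{\mathbb{E}(G_j)+\epsilon}$. Hence if the ratio at $G$ exceeds $4\alpha\sqrt{\epsilon}$, the perturbation argument forces $(\mathbb{E}(G_j)-\tfrac1m\sum G_j(z_i))/\sqrt{\mathbb{E}(G)+\epsilon}>2\alpha\sqrt{\epsilon}$, and replacing the denominator by $\sqrt{\mathbb{E}(G_j)+\epsilon}$ (at a cost of $\sqrt2$) keeps the ratio at $G_j$, measured with its own denominator, above $\alpha\sqrt{\epsilon}$. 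Thus the supremum event is contained in the union over the $K$ centers, and a union bound with the one-function estimate produces the asserted factor $\mathcal{N}(\mathcal{G},\alpha\epsilon,\|\cdot\|_\infty)$.

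The Bernstein step is routine; the delicate part, which I would write most carefully, is passing from a single $G$ to the supremum through the \emph{ratio} $(\mathbb{E}(G)-\tfrac1m\sum G(z_i))/\sqrt{\mathbb{E}(G)+\epsilon}$. Because the denominator itself varies with $G$, one cannot merely perturb numerators: the crux is tracking how $\sqrt{\mathbb{E}(G)+\epsilon}$ changes along the net and checking that the constant $4$ on the right, combined with the choice of net radius $\alpha\epsilon$ (rather than $\epsilon$), simultaneously absorbs the $2\alpha\sqrt{\epsilon}$ numerator perturbation and the $\sqrt2$ denominator distortion. This is precisely the classical ratio probability inequality in the style of Cucker--Smale and Cucker--Zhou, and I would follow that template.
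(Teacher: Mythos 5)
Your proof is correct and complete: the one-sided Bernstein bound with $t=\alpha\sqrt{\epsilon}\,\sqrt{\mathbb{E}(G)+\epsilon}$, the cancellation of the common factor $\mathbb{E}(G)+\epsilon$ in the exponent, and the $\alpha\epsilon$-net argument (numerator perturbation of $2\alpha\sqrt{\epsilon}$ plus denominator distortion of $\sqrt{2}$, both absorbed by the constant $4$) all go through exactly as you describe. The paper itself gives no proof of this lemma --- it quotes it as a known concentration inequality from the cited references of Cucker--Smale, Smale--Zhou and Cucker--Zhou --- and your argument is precisely the classical ratio-inequality proof from those sources; the only detail worth making explicit is that the net centers $G_j$ should be taken inside $\mathcal{G}$ (as those references do), so that the hypotheses $\left\vert G_j-\mathbb{E}(G_j)\right\vert \leq B$ and $\mathbb{E}\left(G_j^2\right)\leq c\,\mathbb{E}(G_j)$ apply to the centers when you invoke the one-function bound in the union step.
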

	
	\begin{lem} [B-inequality]
		Let $\xi$ be a random variable on a probability space $\mathcal{Z}$ with mean $\mathbb E(\xi)$ and variance $\sigma^2(\xi)=\sigma^2$. If $\left\vert \xi(z)-\mathbb E(\xi)\right\vert \leq M_\xi$ for almost all $z\in \mathcal{Z}$, then for any $\epsilon>0$,
		\bes
		\textit{Prob}\left\{ \frac{1}{m} \sum_{i=1}^m \xi(z_i)-\mathbb E(\xi)>\epsilon\right\} \leq \exp \left\{-\frac{m\epsilon^2}{2\left(\sigma^2+\frac{1}{3}M_\xi\epsilon\right)}\right\}.
		\ees
	\end{lem}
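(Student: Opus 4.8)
The statement is the classical one-sided Bernstein inequality, so the plan is to follow the standard Chernoff-bounding strategy, with the particular moment-generating-function estimate that produces exactly the constants appearing in the bound. First I would reduce to the mean-zero case by setting $\eta = \xi - \mathbb{E}(\xi)$, so that $\mathbb{E}(\eta)=0$, $\mathbb{E}(\eta^2)=\sigma^2$, and $|\eta|\leq M_\xi$ almost surely; the event in question then becomes $\frac{1}{m}\sum_{i=1}^m \eta(z_i) > \epsilon$ for i.i.d. copies of $\eta$.

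The next step is the exponential Markov (Chernoff) inequality: for every $\lambda>0$,
\be
\textit{Prob}\left\{\frac{1}{m}\sum_{i=1}^m \eta(z_i) > \epsilon\right\} \leq e^{-\lambda m\epsilon}\left(\mathbb{E}\left[e^{\lambda\eta}\right]\right)^m,
\ee
using independence to factor the joint moment generating function (MGF). The heart of the argument is then to control the single-variable MGF. Expanding $e^{\lambda\eta}=\sum_{k\geq 0}\frac{(\lambda\eta)^k}{k!}$, using $\mathbb{E}(\eta)=0$ together with the moment bound $\mathbb{E}|\eta|^k \leq \sigma^2 M_\xi^{k-2}$ for $k\geq 2$ (which follows from $|\eta|^k=\eta^2|\eta|^{k-2}\leq \eta^2 M_\xi^{k-2}$) and the elementary factorial estimate $k!\geq 2\cdot 3^{k-2}$, I would obtain
\be
\mathbb{E}\left[e^{\lambda\eta}\right] \leq 1 + \frac{\sigma^2\lambda^2}{2}\sum_{k\geq 2}\left(\frac{\lambda M_\xi}{3}\right)^{k-2} = 1 + \frac{\sigma^2\lambda^2/2}{1-\lambda M_\xi/3} \leq \exp\left\{\frac{\sigma^2\lambda^2/2}{1-\lambda M_\xi/3}\right\},
\ee
valid for $0<\lambda M_\xi<3$, where the last step uses $1+u\leq e^u$.

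Finally I would substitute this MGF estimate into the Chernoff bound and optimize over $\lambda$. Choosing $\lambda = \frac{\epsilon}{\sigma^2 + M_\xi\epsilon/3}$ (which automatically satisfies $\lambda M_\xi<3$ since $\sigma^2>0$) makes the per-sample exponent $-\lambda\epsilon+\frac{\sigma^2\lambda^2/2}{1-\lambda M_\xi/3}$ collapse to $-\frac{\epsilon^2}{2(\sigma^2+\frac{1}{3}M_\xi\epsilon)}$, and raising to the $m$-th power yields precisely the claimed bound. The only genuinely delicate point is the MGF estimate: the factorial inequality $k!\geq 2\cdot 3^{k-2}$ and the rewriting of the tail series as a geometric series are exactly what force the $\frac{1}{3}M_\xi\epsilon$ correction in the denominator, so these constants must be tracked carefully, whereas the centering, the Chernoff step, and the final optimization are routine bookkeeping.
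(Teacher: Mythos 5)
Your proof is correct: the centering step, the Chernoff factorization, the moment bound $\mathbb{E}|\eta|^k \leq \sigma^2 M_\xi^{k-2}$, the factorial estimate $k! \geq 2\cdot 3^{k-2}$, and the choice $\lambda = \epsilon/(\sigma^2 + M_\xi\epsilon/3)$ all check out and collapse to exactly the stated exponent $-\frac{m\epsilon^2}{2(\sigma^2+\frac{1}{3}M_\xi\epsilon)}$ (the only unaddressed case, $\sigma^2=0$, is trivial since the probability is then zero). The paper itself gives no proof of this lemma --- it cites it as the classical one-sided Bernstein inequality from the learning-theory references (Cucker--Smale, Smale--Zhou, Cucker--Zhou), where the standard proof is precisely this moment-generating-function argument, so your approach coincides with the intended one.
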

	
	\begin{lem} [C-inequality]
		Let $\mathcal{H}_2$ be a set of continuous functions on a probability space $\mathcal{X}$ such that, for some $M_{\huaH_2}>0$, $\left\vert f-\mathbb E(f)\right\vert \leq M_{\huaH_2}$ almost surely for all $f\in \mathcal{H}_2$, then for any $\epsilon>0$,
		\bes
		\textit{Prob}\left\{\sup_{f\in \mathcal{H}_2} \left\vert \mathbb E(f)-\frac{1}{n} \sum_{i=1}^n f(x_i)\right\vert> \epsilon \right\} \leq 2\mathcal{N}\left(\mathcal{H}_2,\frac{\epsilon}{4},\Vert \cdot\Vert_\infty \right) \exp \left\{-\frac{n\epsilon^2}{8M_{\huaH_2}^2} \right\}.
		\ees
	\end{lem}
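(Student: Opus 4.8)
The plan is to establish this uniform deviation bound by the standard two-ingredient recipe used in statistical learning theory (in the spirit of \cite{Cucker2002,sz2004,Cucker2007}): a pointwise exponential tail estimate for each fixed $f$, upgraded to a supremum over $\mathcal{H}_2$ through a covering-number union bound. First I would fix a single $f \in \mathcal{H}_2$. Since the sample points $x_i$ are i.i.d. and the hypothesis $|f-\mathbb{E}(f)|\le M_{\mathcal{H}_2}$ holds almost surely, the summands $f(x_i)$ are independent and confined to an interval of length at most $2M_{\mathcal{H}_2}$. Hoeffding's inequality then delivers the pointwise bound
\[
\textit{Prob}\left\{\left|\mathbb{E}(f) - \frac{1}{n}\sum_{i=1}^n f(x_i)\right| > \frac{\epsilon}{2}\right\} \leq 2\exp\left\{-\frac{n\epsilon^2}{8M_{\mathcal{H}_2}^2}\right\}.
\]

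Next I would discretize $\mathcal{H}_2$ with an $(\epsilon/4)$-net. Let $\{f_1,\dots,f_L\}$ be the centers of a minimal $(\epsilon/4)$-covering of $\mathcal{H}_2$ in $\|\cdot\|_\infty$, so that $L=\mathcal{N}(\mathcal{H}_2,\epsilon/4,\|\cdot\|_\infty)$. For an arbitrary $f\in\mathcal{H}_2$, pick a center $f_j$ with $\|f-f_j\|_\infty\le \epsilon/4$; then both $|\mathbb{E}(f)-\mathbb{E}(f_j)|$ and $|\frac{1}{n}\sum_{i=1}^n(f(x_i)-f_j(x_i))|$ are at most $\epsilon/4$, and the triangle inequality gives
\[
\left|\mathbb{E}(f) - \frac{1}{n}\sum_{i=1}^n f(x_i)\right| \leq \frac{\epsilon}{2} + \left|\mathbb{E}(f_j) - \frac{1}{n}\sum_{i=1}^n f_j(x_i)\right|.
\]
Consequently, on the event that the supremum exceeds $\epsilon$ there must be at least one center $f_j$ for which $|\mathbb{E}(f_j)-\frac{1}{n}\sum_{i=1}^n f_j(x_i)|>\epsilon/2$, so this event is contained in the union of the $L$ corresponding single-function events.

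Finally, a union bound over the $L$ centers, each controlled by the pointwise Hoeffding estimate above, yields
\[
\textit{Prob}\left\{\sup_{f \in \mathcal{H}_2}\left|\mathbb{E}(f) - \frac{1}{n}\sum_{i=1}^n f(x_i)\right| > \epsilon\right\} \leq 2L \exp\left\{-\frac{n\epsilon^2}{8M_{\mathcal{H}_2}^2}\right\},
\]
which is exactly the asserted bound after substituting $L=\mathcal{N}(\mathcal{H}_2,\epsilon/4,\|\cdot\|_\infty)$. There is no genuine obstacle here beyond careful bookkeeping of constants: the covering radius must be taken to be $\epsilon/4$ precisely so that the two approximation errors together consume only $\epsilon/2$, leaving room to apply Hoeffding at level $\epsilon/2$; and the range $2M_{\mathcal{H}_2}$ is what produces the exponent $n\epsilon^2/(8M_{\mathcal{H}_2}^2)$ rather than some other multiple. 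I expect aligning these constants to be the only point requiring attention, the rest being routine.
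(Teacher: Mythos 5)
Your proof is correct and is exactly the standard argument (a pointwise Hoeffding bound at level $\epsilon/2$, an $\epsilon/4$-net in $\Vert \cdot \Vert_\infty$, and a union bound over the net) that the paper itself relies on: the paper states this C-inequality without proof, citing \cite{Cucker2002}, \cite{sz2004} and \cite{Cucker2007}, where it is established in precisely this way. The one point you should make explicit is that the net's centers must be taken inside $\mathcal{H}_2$ (a proper cover, which is the convention consistent with the paper's own covering-number constructions in its Lemmas on $\mathcal{H}_{(2,3),R,N}$ and $\mathcal{H}_2$), so that each center $f_j$ inherits the almost-sure bound $\left\vert f_j-\mathbb{E}(f_j)\right\vert \leq M_{\mathcal{H}_2}$ required to apply Hoeffding's inequality to it.
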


	Now we prove the oracle inequality for the distribution regression framework based on the  two-stage error decomposition method shown in Proposition \ref{errordec} and these concentration inequalities.
	\begin{proof} [Proof of Theorem \ref{oracledr}]
		We denote $\huaH=\mathcal{H}_{(2,3),R,N}$, the two-stage error decomposition and the basic fact $\left\Vert \pi_{M}f_{\hat D,R,N}-f_\rho \right\Vert_\rho^2= \mathcal{E}\left(\pi_{M}f_{\hat D,R,N} \right)- \mathcal{E}\left(f_\rho \right)$ imply that
		$\left\Vert \pi_{M}f_{\hat D,R,N}-f_\rho \right\Vert_\rho^2$ can be bounded by the sum of  $I_1(D,\mathcal{H})$, $I_2(D,\mathcal{H})$, $\left\vert I_3(\hat{D},\mathcal{H})\right\vert$, $\left\vert I_4(\hat{D},\mathcal{H})\right\vert$ and $R(\mathcal{H})$.
		
		We first derive a bound for $I_1(D,\mathcal{H})$ in a probability form. Consider the functional class
		\bes
		\mathcal{G}:= \left\{G= \left(\pi_{M}f(\mu)-y\right)^2-\left(f_\rho(\mu)-y\right)^2 : f\in\mathcal{H}_{(2,3),R,N} \right\}.
		\ees
		For any fixed $G\in \mathcal{G}$, there exists a $f\in \mathcal{H}_{(2,3),R,N}$ such that $G(z)=\left(\pi_{M}f(\mu)-y\right)^2-\left(f_\rho(\mu)-y\right)^2$, and
		$$\mathbb{E}(G)=\mathcal{E}\left(\pi_{M}f\right)-\mathcal{E}\left(f_\rho\right)=\left\Vert \pi_{M}f-f_\rho\right\Vert_\rho^2,$$
		$$\frac{1}{m} \sum_{i=1}^m G(z_i)=\mathcal{E}_{D}\left(\pi_{M}f\right)-\mathcal{E}_{D}\left(f_\rho\right).$$
		Furthermore, since $\left\vert \pi_{M}f(\mu)\right\vert \leq M$, $\left\vert f_\rho(\mu)\right\vert \leq M$ and $\vert y\vert \leq M$ almost surely, we have
		\bes
		\vert G(z)\vert= \left\vert\left(\pi_{M}f(\mu)-f_\rho(\mu)\right)\left(\pi_{M}f(\mu)+f_\rho(\mu)-2y\right)\right\vert \leq 8M^2.
		\ees
		Thus $\left\vert G(z)-\mathbb{E}(G) \right\vert\leq 16M^2$ and $\mathbb{E}\left(G^2\right)\leq 16M^2 \left\Vert \pi_{M}f-f_\rho\right\Vert_\rho^2=16M^2 \mathbb{E}(G)$. Moreover, since for any $f_1,f_2 \in \mathcal{H}_{(2,3),R,N}$,
		\bes
		\left\vert\left(\pi_{M}f_1(\mu)-y\right)^2-\left(\pi_{M}f_2(\mu)-y\right)^2\right\vert \leq 4M\left\vert\pi_{M}f_1(\mu)-\pi_{M}f_2(\mu)\right\vert \leq 4M\left\vert f_1(\mu)-f_2(\mu)\right\vert,
		\ees
		an $\frac{\epsilon}{4M}$-covering of $\mathcal{H}_{(2,3),R,N}$ results in an $\epsilon$-covering of $\mathcal{G}$, we have
		\bes
		\mathcal{N}\left(\mathcal{G},\epsilon,\Vert \cdot\Vert_\infty \right) \leq \mathcal{N}\left(\mathcal{H}_{(2,3),R,N},\frac{\epsilon}{4M},\Vert \cdot\Vert_\infty \right).
		\ees
		Therefore, by applying A-inequality to $\mathcal{G}$ with $B=c=16M^2$ and $\alpha=\frac{1}{4}$, we have that
		\bes
		\begin{aligned}
			& \textit{Prob}\left\{\sup_{f\in \mathcal{H}_{(2,3),R,N}} \frac{\mathcal{E}\left(\pi_{M}f\right)-\mathcal{E}\left(f_\rho\right)-\left(\mathcal{E}_{D}\left(\pi_{M}f\right)-\mathcal{E}_{D}\left(f_\rho\right) \right)}{\sqrt{\mathcal{E}\left(\pi_{M}f\right)-\mathcal{E}\left(f_\rho\right)+\epsilon}} >\sqrt{\epsilon} \right\} \\
			& \leq  \mathcal{N}\left(\mathcal{G},\frac{\epsilon}{4},\Vert \cdot\Vert_\infty \right) \exp\left\{-\frac{3m\epsilon}{2048M^2} \right\}
			\leq \mathcal{N}\left(\mathcal{H}_{(2,3),R,N},\frac{\epsilon}{16M},\Vert \cdot\Vert_\infty \right) \exp\left\{-\frac{3m\epsilon}{2048M^2} \right\}.
		\end{aligned}
		\ees
		Since $\sqrt{\epsilon\left( \mathcal{E}\left(\pi_{M}f\right)-\mathcal{E}\left(f_\rho\right)+\epsilon\right)}\leq \frac{1}{2}\left(\mathcal{E}\left(\pi_{M}f\right)-\mathcal{E}\left(f_\rho\right)\right)+\epsilon$, by choosing $f=f_{\hat D,R,N}$, we conclude that
		\be \label{I1error}
		\begin{aligned}
			& \textit{Prob}\left\{I_1(D,\mathcal{H})> \frac{1}{2}\left(\mathcal{E}\left(\pi_M f_{\hat D,R,N}\right)-\mathcal{E}\left(f_\rho\right)\right)+\epsilon \right\} \\
			& \leq \mathcal{N}\left(\mathcal{H}_{(2,3),R,N},\frac{\epsilon}{16M},\Vert \cdot\Vert_\infty \right) \exp\left\{-\frac{3m\epsilon}{2048M^2} \right\}.
		\end{aligned}
		\ee
		
		Next we derive a bound for $I_2(D,\mathcal{H})$ in a probability form. Consider the random variable $\xi$ on $\mathcal{Z}$ defined by
		$$\xi(z)=\left(h(\mu)-y\right)^2-\left(f_\rho(\mu)-y\right)^2.$$
		Since $\vert f_\rho(\mu)\vert \leq M$ and $\vert y \vert \leq M$ almost surely, we have
		$$\left\vert \xi(z)\right\vert \leq \left(3M+ \Vert h\Vert_\infty \right)^2, \quad \left\vert \xi-\mathbb{E}(\xi)\right\vert\leq 2\left(3M+ \Vert h\Vert_\infty \right)^2,$$
		$$\sigma^2 \leq \mathbb{E}(\xi^2)\leq \left(3M+ \Vert h\Vert_\infty \right)^2 R(\huaH). $$
		Hence, by applying B-inequality with $M_\xi=2\left(3M+ \Vert h\Vert_\infty \right)^2$, we conclude that
		\be \label{I2error}
		\textit{Prob}\left\{I_2(D,\mathcal{H})>\epsilon \right\} \leq  \exp \left\{-\frac{m\epsilon^2}{2\left(\sigma^2+\frac{1}{3}M_\xi\epsilon\right)}\right\}
		\leq  \exp \left\{-\frac{m\epsilon^2}{2\left(3M+ \Vert h\Vert_\infty \right)^2 \left(R(\mathcal{H})+\frac{2}{3}\epsilon \right)}\right\}.
		\ee
		
		Then we derive a bound for $\left\vert I_3(\hat{D},\mathcal{H})\right\vert$ in a probability form. Since $\left\Vert \pi_{M}f_{\hat D,R,N} \right\Vert_\infty \leq M$ and $\vert y_i \vert \leq M$,
		\bes
		\begin{aligned}
			\left\vert I_3(\hat{D},\mathcal{H})\right\vert= & \left\vert \frac{1}{m} \sum_{i=1}^m \left(\pi_{M}f_{\hat D,R,N}\left(\hat{\mu}_i^n\right)-y_i\right)^2 - \left(\pi_{M}f_{\hat D,R,N}\left(\mu_i\right)-y_i\right)^2 \right\vert \\
			\leq & \frac{1}{m} \sum_{i=1}^m 4M \left\vert f_{\hat D,R,N}\left(\hat{\mu}_i^n\right)- f_{\hat D,R,N}\left(\mu_i\right)\right\vert,
		\end{aligned}
		\ees
		Then we only need to bound $\left\vert f_{\hat D,R,N}\left(\hat{\mu}_i^n\right)- f_{\hat D,R,N}\left(\mu_i\right)\right\vert$. Recall that the structure of the FNN in our hypothesis space $\mathcal{H}_{(2,3),R,N}$ includes the integration w.r.t. the input distribution $\mu$ after the second layer, that is, for any $f\in\mathcal{H}_{(2,3),R,N}$, there always exists a vector of functions
		\be
		H_f^{(2)}(x)=\sigma\left(F_f^{(2)}\sigma(F_f^{(1)}x-b_f^{(1)})-b_f^{(2)}\right)
		\ee
		and   $c_f$, $F_f^{(3)}$, $b_f^{(3)}$ satisfying $\|F_f^{(j)}\|_{\infty}\leq RN^2$, $\|b_f^{(j)}\|_{\infty}\leq R$, $\|c_f\|_{\infty}\leq RN$, $j=1,2,3$ such that
		\bes
		f(\mu)=c_f \cdot \sigma\left(F_f^{(3)} h_f^{(2)}(\mu)-b_f^{(3)}\right)= c_f \cdot \sigma\left(F_f^{(3)} \int_\Omega H_f^{(2)}(x) d\mu(x)-b_f^{(3)}\right).
		\ees
		If we denote a  function class  $\mathcal{H}_2$ by
		\be \label{H_2}
		\begin{aligned}
			\mathcal{H}_{2}= \bigg\{ & H_2(x)= \left(\sigma\left(F^{(2)}\sigma\left(F^{(1)}x-b^{(1)}\right)-b^{(2)}\right)\right)_1:\\
			& \|F^{(j)}\|_{\infty}\leq RN^2, \|b^{(j)}\|_{\infty}\leq R, j=1,2 \bigg\},
		\end{aligned}
		\ee
		which is just the first element of the output a second-layer network in the part of the hypothesis space $\mathcal{H}_{(2,3),R,N}$.
		Therefore, for any $f \in \mathcal{H}_{(2,3),R,N}$, there exists a function $H_{f} \in \mathcal{H}_2$, with
		$$\left(h_f^{(2)}(\mu_i)\right)_1= \mathbb E_{x_{i} \sim\mu_i}\left(H_f \left(x_i\right)\right),\quad i=1,2,...,m,$$ and
		$$\left(h_f^{(2)}(\hat{\mu}_i^n)\right)_1= \frac{1}{n}\sum_{j=1}^n H_f (x_{i,j}), \quad i=1,2,...,m.$$
		
		From the proof of Lemma \ref{coveringnumber} or Lemma \ref{coveringnumberH2} (in Appendix), we know that $\sup_{H_2 \in\huaH_2}\left\vert H_2 \right\vert \leq 3R^2N^4$, thereby $\sup_{H_2\in\huaH_2}\left\vert H_2-\mathbb{E} (H_2) \right\vert \leq 6R^2N^4$. Hence, by applying C-inequality to $\mathcal{H}_2$ with $M_{\huaH_2}= 6R^2N^4$ and using the above notations, we conclude that
		\bes
		\begin{aligned}
			& Prob\left\{\sup_{f \in \mathcal{H}_{(2,3),R,N}} \left| \left(h_f^{(2)}(\mu_i)\right)_1- \left(h_f^{(2)}(\hat{\mu}_i^n)\right)_1 \right|> \epsilon \right\} \\
			= & Prob\left\{\sup_{f \in\mathcal{H}_{(2,3),R,N}} \left| \mathbb E_{x_{i} \sim\mu_i}\left(H_f\left(x_i\right)\right)-\frac{1}{n}\sum_{j=1}^n H_f (x_{i,j}) \right| >\epsilon \right\} \\
			\leq & Prob\left\{\sup_{H_2 \in\huaH_2} \left| \mathbb E_{x_{i} \sim\mu_i}\left(H_2\left(x_i\right)\right)-\frac{1}{n}\sum_{j=1}^n H_2 (x_{i,j}) \right| >\epsilon \right\} \\
			\leq & 2\mathcal{N}\left(\mathcal{H}_2,\frac{\epsilon}{4},\Vert \cdot\Vert_{C(\Omega)} \right) \exp\left\{-\frac{n\epsilon^2}{288R^4N^8}\right\}.
		\end{aligned}
		\ees
		
		From the structure of the hypothesis space $\mathcal{H}_{(2,3),R,N}$, since each row of the connection matrix $F^{(j)}$ and bias $b^{(j)}$ satisfies the same symmetric restriction, we observe from the following explicit form
		\be
		\begin{aligned}
			\nono\sup_{\left\|F^{(j)}\right\|_{\infty}\leq RN^2 \atop \left\|b^{(j) }\right\|_{\infty}\leq R, j=1,2}\Bigg\vert
			& \int_{\Omega}\left(\sigma\left(F^{(2)}\sigma(F^{(1)}x-b^{(1)})-b^{(2)}\right)\right)_sd\mu_i(x) \\
			& -\f{1}{n}\sum_{j=1}^n\left(\sigma\left(F^{(2)}\sigma(F^{(1)}x_{ij}-b^{(1)})-b^{(2)}\right)\right)_s\Bigg\vert
		\end{aligned}
		\ee
		that for any fixed $\mu_i$ and $\hat{\mu}_i=\f{1}{n}\sum_{j=1}^n\delta_{x_{ij}}$, the sup operation w.r.t. $F^{(j)}$ and $b^{(j)}$ contribute equally to any $s$-th row of the $2N+3$ rows. Then we know that
		$$\sup_{f \in \mathcal{H}_{(2,3),R,N}} \left| \left(h_f^{(2)}(\mu_i)\right)_1- \left(h_f^{(2)}(\hat{\mu}_i^n)\right)_1 \right|\leq\epsilon$$
		implies that
		$$\sup_{f \in \mathcal{H}_{(2,3),R,N}} \left| \left(h_f^{(2)}(\mu_i)\right)_s- \left(h_f^{(2)}(\hat{\mu}_i^n)\right)_s \right|\leq\epsilon, \quad \forall s=1,2,\dots,2N+3,$$
		which then implies that $\sup_{f \in \mathcal{H}_{(2,3),R,N}} \left\Vert h_f^{(2)}(\mu_i)-h_f^{(2)}(\hat{\mu}_i^n) \right\Vert_\infty \leq \epsilon$. Then it follows that
		\bes
		\begin{aligned}
			& \sup_{f \in \mathcal{H}_{(2,3),R,N}} \left\vert f(\mu_i)-f(\hat{\mu}_i^n) \right\vert \\
			\leq &  \sup_{f \in \mathcal{H}_{(2,3),R,N}}\Vert c_f \Vert_1 \left\Vert \sigma\left(F_f^{(3)}h_f^{(2)}(\mu_i)-b_f^{(3)}\right) -\sigma\left(F_f^{(3)}h_f^{(2)}(\hat{\mu}_i^n)-b_f^{(3)}\right) \right\Vert_\infty \\
			\leq & (2N+3) \sup_{f \in \mathcal{H}_{(2,3),R,N}}\Vert c_f \Vert_\infty \left\Vert \sigma\left(F_f^{(3)}h_f^{(2)}(\mu_i)-b_f^{(3)}\right) -\sigma\left(F_f^{(3)}h_f^{(2)}(\hat{\mu}_i^n)-b_f^{(3)}\right) \right\Vert_\infty \\
			\leq & (2N+3)RN \Vert F_f^{(3)} \Vert_\infty \sup_{f \in \mathcal{H}_{(2,3),R,N}} \left\Vert h_f^{(2)}(\mu_i)-h_f^{(2)}(\hat{\mu}_i^n) \right\Vert_\infty \leq 5R^2N^4\epsilon.
		\end{aligned}
		\ees
		Thus we have that, for $i=1,2,...,m$,
		\bes
		\textit{Prob}\left\{\sup_{f \in \mathcal{H}_{(2,3),R,N}} \left\vert f(\mu_i)-f(\hat{\mu}_i^n) \right\vert>5R^2N^4\epsilon \right\} \leq 2\mathcal{N}\left(\huaH_2,\frac{\epsilon}{4},\Vert \cdot\Vert_{C(\Omega)}  \right) \exp\left\{-\frac{n\epsilon^2}{288R^4N^8}\right\}.
		\ees
		Now combine all the $m$ terms with $i=1,2,\dots,m$, we have
		\bes
		\begin{aligned}
			& \textit{Prob}\left\{\left\vert I_3(\hat{D},\mathcal{H})\right\vert> 20MR^2N^4\epsilon \right\} \\
			\leq & \textit{Prob}\left\{ \frac{1}{m} \sum_{i=1}^m 4M \left\vert f_{\hat D,R,N}\left(\mu_i\right)- f_{\hat D,R,N}\left(\hat{\mu}_i^n\right)\right\vert > 20MR^2N^4\epsilon \right\} \\
			\leq &  \sum_{i=1}^m\textit{Prob}\left\{ \left\vert f_{\hat D,R,N}(\mu_i)-f_{\hat D,R,N}(\hat{\mu}_i^n) \right\vert>5R^2N^4\epsilon \right\}\\
			\leq & \sum_{i=1}^m\textit{Prob}\left\{\sup_{f \in \mathcal{H}_{(2,3),R,N}} \left\vert f(\mu_i)-f(\hat{\mu}_i^n) \right\vert>5R^2N^4\epsilon \right\} \\
			\leq & 2m \mathcal{N}\left(\huaH_2,\frac{\epsilon}{4},\Vert \cdot\Vert_{C(\Omega)}  \right) \exp\left\{-\frac{n\epsilon^2}{288R^4N^8}\right\}.
		\end{aligned}
		\ees
		Finally, replacing $\epsilon$ by $\frac{\epsilon}{20MR^2N^4}$, we get
		\be \label{I3error}
		\textit{Prob}\left\{\left\vert I_3(\hat{D},\mathcal{H})\right\vert> \epsilon \right\} \leq 2m \mathcal{N}\left(\huaH_2,\frac{\epsilon}{80MR^2N^4},\Vert \cdot\Vert_{C(\Omega)} \right) \exp\left\{-\frac{n\epsilon^2}{115200M^2R^8N^{16}}\right\}.
		\ee
		Similarly, we have
		\be \label{I4error}
		\begin{aligned}
			\textit{Prob}\left\{\left\vert I_4(\hat{D},\mathcal{H})\right\vert> \epsilon \right\} \leq & 2m \mathcal{N}\left(\huaH_2,\frac{\epsilon}{80MR^2N^4},\Vert \cdot\Vert_{C(\Omega)} \right) \\
			& \exp\left\{-\frac{n\epsilon^2}{28800\left(\left\Vert h \right\Vert_{\infty}  +M\right)^2R^8N^{16}}\right\}.
		\end{aligned}
		\ee
		Finally, combining (\ref{I1error}), (\ref{I2error}), (\ref{I3error}) and (\ref{I4error}), we have
		\bes
		\begin{aligned}
			& \textit{Prob}\left\{\Vert \pi_{M}f_{\hat D,R,N}-f_\rho\Vert_\rho^2>2\left\Vert h-f_\rho\right\Vert_\rho^2+8\epsilon \right\} \\
			\leq & \mathcal{N}\left(\mathcal{H}_{(2,3),R,N},\frac{\epsilon}{16M},\Vert \cdot\Vert_\infty \right) \exp\left\{-\frac{3m\epsilon}{2048M^2} \right\} \\
			+& \exp \left\{-\frac{m\epsilon^2}{2\left(3M+ \Vert h\Vert_\infty \right)^2 \left(\left\Vert h-f_\rho\right\Vert_\rho^2+\frac{2}{3}\epsilon \right)}\right\} \\
			+& 4m \mathcal{N}\left(\huaH_2,\frac{\epsilon}{80MR^2N^4},\Vert \cdot\Vert_{C(\Omega)} \right) \exp\left\{-\frac{n\epsilon^2}{115200\max \{\left\Vert h \right\Vert_\infty^2 ,M^2\} R^8N^{16}}\right\}.
		\end{aligned}
		\ees
		Thus we complete the proof by utilizing the covering number bound of the set $\mathcal{H}_{(2,3),R,N}$ and $\huaH_2$ from Lemma \ref{coveringnumber} and Lemma \ref{coveringnumberH2}.
	\end{proof}
	
	\subsection{Proof of Theorem \ref{thmestimation}}
	Based on the oracle inequality for distribution regression in Theorem \ref{oracledr}, we are now ready to give the proof of Theorem \ref{thmestimation}.
	
	\begin{proof} [Proof of Theorem \ref{thmestimation}]
		According to Theorem \ref{thmpoly}, there exists $h^* \in \mathcal{H}_{(2,3),R,N}$, such that
		\bes
		\left\Vert h^*-f_\rho \right\Vert_\rho \leq \sup_{\mu\in\huaP(\Omega)}\left| h^*(\mu)-f_\rho(\mu) \right| \leq C_1 N^{-\beta},
		\ees
		where $C_1= 2B_G^\beta \left| f \right|_{C^{0,\beta}}+(3B_Q \lvert g \rvert_{C^{0,1}})^\beta\left| f \right|_{C^{0,\beta}}$, which also implies that
		\bes
		\left\Vert h^* \right\Vert_\infty \leq \sup_{\mu\in\huaP(\Omega)}\left| h^*(\mu)-f_\rho(\mu) \right|+ \sup_{\mu\in\huaP(\Omega)}\left| f_\rho(\mu) \right| \leq C_1+M.
		\ees
		Then utilizing Theorem \ref{oracledr} by taking $h=h^*$,  it follows that
		\bes
		\begin{aligned}
			& \textit{Prob}\left\{\Vert \pi_{M}f_{\hat D,R,N}-f_\rho\Vert_\rho^2>2C_1^2N^{-2\beta}+8\epsilon \right\} \\
			\leq & \exp\left\{T_1N\log{\frac{16M\wh{R}}{\epsilon}}+T_2N\log{N}-\frac{3m\epsilon}{2048M^2} \right\} \\
			+& \exp \left\{-\frac{m\epsilon^2}{2\left(4M+ C_1 \right)^2 \left(C_1^2 N^{-2\beta}+\frac{2}{3}\epsilon \right)}\right\} \\
			+& \exp\left\{\log{4m}+ T_1N\log{\frac{80M \wh{R} R^2 N^4}{\epsilon}}+T_2N\log{N} -\frac{n\epsilon^2}{115200 \left(M+C_1\right)^2 R^8N^{16}}\right\}.
		\end{aligned}
		\ees
		If we restrict $\epsilon \geq 2C_1^2N^{-2\beta}\log{N}$, we have
		\bes
		\begin{aligned}
			& \textit{Prob}\left\{\Vert \pi_{M}f_{\hat D,R,N}-f_\rho\Vert_\rho^2>9\epsilon \right\} \leq \textit{Prob}\left\{\Vert \pi_{M}f_{\hat D,R,N}-f_\rho\Vert_\rho^2>2C_1^2N^{-2\beta}+8\epsilon \right\} \\
			\leq &  \exp\left\{T_1N\log{\frac{8M\wh{R}N^{2\beta}}{C_1^2}}+T_2N\log{N}-\frac{3m\epsilon}{2048M^2} \right\}
			+ \exp \left\{-\frac{3m\epsilon}{8\left(4M+ C_1 \right)^2}\right\} \\
			+ & \exp\left\{\log{4m}+ T_1N\log{\frac{40M\wh{R} R^2 N^{2\beta+4}}{C_1^2}}+T_2N\log{N} -\frac{n\epsilon^2}{115200 \left(M+C_1\right)^2 R^8N^{16}}\right\} \\
			\leq & \exp\left\{A_1N\log{N}-\frac{3m\epsilon}{2048M^2} \right\}
			+ \exp \left\{-\frac{3m\epsilon}{8\left(4M+ C_1 \right)^2}\right\} \\
			+ & \exp\left\{\log{4m}+ A_2N\log{N} -\frac{n\epsilon^2}{A_3N^{16}}\right\},
		\end{aligned}
		\ees
		where
		\bea
		\nono&&A_1=T_1\left(\log{\frac{8M\wh{R}}{C_1^2}}+2\beta\right)+T_2,\\
		\nono&&A_2=T_1\left(\log{\frac{40M\wh{R} R^2}{C_1^2}}+2\beta+4 \right)+T_2,\\
		\nono&&A_3=115200 \left(M+C_1\right)^2 R^8.
		\eea
		If we choose the neural network parameter $$N=\left[  A_4 m^{\frac{1}{2\beta+1}} \right],$$ with $A_4=\left( \min\left\{ \frac{3C_1^2}{2048M^2A_1},\frac{3C_1^2}{4096M^2A_2}\right\}\right)^{\frac{1}{2\beta+1}}$, and $[x]$ denotes the largest integer that is smaller than or equal to $x$. Moreover, we choose the second stage sample size $$n\geq \left\lceil A_5m^{\frac{4\beta+17}{2\beta+1}} \right\rceil,$$ with $A_5= \frac{3A_3A_4^{2\beta+16}}{4096M^2C_1^2}$, and $\lceil x \rceil$ denotes the smallest integer that is greater than $x$. Then when $m$ satisfying the restriction that
		\be
		\log{4m}\leq \frac{3C_1^2 A_4^{-2\beta}}{4096M^2}m^{\frac{1}{2\beta+1}}=A_6 m^{\frac{1}{2\beta+1}},
		\ee
		we conclude that
		\bes
		\begin{aligned}
			& \textit{Prob}\left\{\Vert \pi_{M}f_{\hat D,R,N}-f_\rho\Vert_\rho^2>9\epsilon \right\} \leq \exp\left\{\frac{3m\epsilon}{4096M^2}-\frac{3m\epsilon}{2048M^2} \right\}
			+ \exp \left\{-\frac{3m\epsilon}{8\left(4M+ C_1 \right)^2}\right\} \\
			& + \exp\left\{\frac{3m\epsilon}{8192M^2}+ \frac{3m\epsilon}{8192M^2} -\frac{3m\epsilon}{2048M^2}\right\} \leq 3\exp \left\{-\frac{3m\epsilon}{256\left(4M+ C_1 \right)^2}\right\},
		\end{aligned}
		\ees
		take $t=9\epsilon$, then when $t\geq 18C_1^2N^{-2\beta}\log N$, we have
		\be \label{esterrprob}
		\textit{Prob}\left\{\Vert \pi_{M}f_{\hat D,R,N}-f_\rho\Vert_\rho^2>t \right\} \leq 3\exp \left\{-\frac{m^{\frac{2\beta}{2\beta+1}}t}{768\left(4M+ C_1 \right)^2}\right\}.
		\ee
		Then using the property that for the non-negative random variable $\xi$, $\mathbb{E}(\xi)=\int_0^\infty P(\xi> t)dt$, with $\xi=\left\Vert \pi_{M} f_{\hat D,R,N}-f_\rho \right\Vert_\rho^2=\mathcal{E}\left(\pi_{M}f_{\hat D,R,N} \right)- \mathcal{E}\left(f_\rho \right)$, from (\ref{esterrprob}), we yield
		\bes
		\begin{aligned}
			&\mathbb{E}\left\{ \mathcal{E}\left(\pi_{M}f_{\hat D,R,N} \right)- \mathcal{E}\left(f_\rho \right) \right\} = \int_{0}^\infty Prob\left\{ \left\Vert \pi_{M} f_{\hat D,R,N}-f_\rho \right\Vert_\rho^2 > t \right\} dt \\
			= & \left(\int_0^{18C_1^2N^{-2\beta}\log N}+ \int_{18C_1^2N^{-2\beta}\log N}^\infty \right) Prob\left\{ \left\Vert \pi_{M} f_{\hat D,R,N}-f_\rho \right\Vert_\rho^2 > t\right\} dt \\
			\leq & 18C_1^2N^{-2\beta}\log N+ \int_{18C_1^2N^{-2\beta}\log N}^\infty 3\exp \left\{-\frac{m^{\frac{2\beta}{2\beta+1}}t}{768\left(4M+ C_1 \right)^2}\right\} \\
			\leq & 18C_1^2 2^{2\beta}A_4^{-2\beta}\left(\log{A_4}+\frac{1}{2\beta+1} \right) m^{-\frac{2\beta}{2\beta+1}} \log m + \int_{0}^\infty 3\exp \left\{-\frac{m^{\frac{2\beta}{2\beta+1}}t}{768\left(4M+ C_1 \right)^2}\right\} \\
			\leq & 18C_1^2 2^{2\beta}A_4^{-2\beta}\left(\log{A_4}+\frac{1}{2\beta+1} \right) m^{-\frac{2\beta}{2\beta+1}} \log m+ 2304\left(4M+ C_1 \right)^2 m^{-\frac{2\beta}{2\beta+1}} \\
			\leq & A_7 m^{-\frac{2\beta}{2\beta+1}} \log m,
		\end{aligned}
		\ees
		where $A_7= 18C_1^2 2^{2\beta}A_4^{-2\beta}\left(\log{A_4}+\frac{1}{2\beta+1} \right)+ 2304\left(4M+ C_1 \right)^2$. Thus we complete the proof of Theorem \ref{thmestimation}.
	\end{proof}
	
	\section*{Acknowledgments}
	The work described in this paper is supported partially by the NSFC/RGC Joint Research Scheme [RGC Project No. N-CityU102/20 and NSFC Project No. 12061160462],
	Germany/Hong Kong Joint Research Scheme [Project No. G-CityU101/20],
	Laboratory for AI-Powered Financial Technologies, Hong Kong Institute for Data Science,
	and the CityU Strategic Interdisciplinary Research Grant [Project No. 7020010].
	
	
	\appendix
	
	\section*{Appendix}
	This appendix provides the proof of the continuity of $f\circ L_G^Q$ and the bound for the covering number of $\huaH_2$.
	\begin{proof} [Proof of Proposition \ref{continuous}]
		It suffices to prove that $L_{G}^Q(\cdot)$ is continuous. We use again the Kantorovich-Rubinstein distance
		$$
		W_{1}(\mu, \nu)=\sup _{\psi:\|\psi\|_{C^{0,1}} \leq 1}\left\{\int_{\Omega} \psi d \mu-\int_{\Omega} \psi d \nu\right\}.
		$$
		Now return to the definition of the functional $L_{G}^Q(\cdot)$ which is defined as
		\be
		\nono L_G^Q(\mu)=\int_{\Omega}g(Q(x))d\mu.
		\ee
		For any polynomial $Q$ on $\Omega$, it is easy to see that for any $x,y\in\Omega$,
		$|Q(x)-Q(y)|\leq \|\nabla Q\|_{C(\Omega)}|x-y|$. Then it follows that $\|g\circ Q\|_{C^{0,1}}\leq |g|_{C^{0,1}}\|\nabla Q\|_{C(\Omega)}$. Combining the above equations yields that
		\be
		\nono \left|L_G^Q(\mu)-L_G^Q(\nu)\right|\leq|g|_{C^{0,1}}\|\nabla Q\|_{C(\Omega)}W_1(\mu,\nu), \forall \mu,\nu\in\huaP(\Omega).
		\ee
		Hölder's inequality shows that, when $p \leq q$, there holds, $W_{p}(\mu,\nu) \leq W_{q}(\mu,\nu),\forall \mu,\nu \in\huaP(\Omega)$. Finally there holds
		\be
		\nono \left|L_G^Q(\mu)-L_G^Q(\nu)\right|\leq|g|_{C^{0,1}}\|\nabla Q\|_{C(\Omega)}W_p(\mu,\nu), \forall \mu,\nu\in\huaP(\Omega),
		\ee
		which shows the continuity of $L_G(\cdot)$.
	\end{proof}
	
	Now we give a bound for the covering number $\mathcal{N}\left(\mathcal{H}_{2},\epsilon,\lVert \cdot \rVert_{C(\Omega)}\right)$
	\blm \label{coveringnumberH2}
	For $R\geq 1, N \in \mathbb{N}$, and $\wh{R}=3R^4(10n_q+d+18)$, there exists two constants $T_1, T_2$ depending on $d,q$ such that
	\bes
	\log \mathcal{N}\left(\mathcal{H}_{2},\epsilon,\lVert \cdot \rVert_{C(\Omega)}\right) \leq T_1N \log{\frac{\wh{R}}{\epsilon}}+ T_2N \log{N}.
	\ees
	\elm
	
	\begin{proof}
		Since $\mathcal{H}_2$ is just the component of a classical two-layer neural network, for the first layer,
		\bes
		\left\Vert \sigma\left(F^{(1)}x-b^{(1)}\right) \right\Vert_\infty \leq \lVert F^{(1)} \rVert_\infty \lVert x \rVert_\infty + \lVert b^{(1)} \rVert_\infty \leq RN^2+R \leq 2RN^2,
		\ees
		for the second layer, we have
		\bes
		\left\Vert H_2(x) \right\Vert_{C(\Omega)} \leq\lVert F^{(2)} \rVert_\infty \left\Vert \sigma\left(F^{(1)}x-b^{(1)}\right) \right\Vert_\infty + \lVert b^{(2)} \rVert_\infty \leq 2R^2N^4+R \leq 3R^2N^4.
		\ees
		If we choose another function $\wh{H}_2 $ in the hypothesis space $\mathcal{H}_{2}$ induced by $\wh{F}^{(j)}$ and $\wh{b}^{(j)}$, satisfying the restriction that,
		\bes
		\left\vert F^{(j)}_{ik}- \wh{F}^{(j)}_{ik} \right\vert \leq \epsilon, \ \left\Vert b^{(j)}- \wh{b}^{(j)}\right\Vert_\infty \leq \epsilon,
		\ees
		then by the Lipschitz property of ReLU,
		\bes
		\begin{aligned}
			\left\Vert \sigma\left(F^{(1)}x-b^{(1)}\right) - \sigma\left(\wh F^{(1)}x-\wh b^{(1)}\right) \right\Vert_\infty \leq & \left\lVert F^{(1)}-\wh{F}^{(1)} \right\Vert_\infty \left\Vert x \right\Vert_\infty + \left\lVert b^{(1)}-\wh{b}^{(1)} \right\Vert_\infty \\
			\leq & (d+1)\epsilon.
		\end{aligned}
		\ees
		It follows that
		\bes
		\begin{aligned}
			\left\lVert H_2(x)-\wh{H}_2(x) \right\Vert_{C(\Omega)}
			\leq & \left\Vert F^{(2)}\left(\sigma\left(F^{(1)}x-b^{(1)}\right)-\sigma\left(\wh F^{(1)}x-\wh b^{(1)}\right)\right) \right\Vert_\infty \\
			+ & \left\Vert \left(F^{(2)}-\wh{F}^{(2)}\right) \sigma\left(\wh F^{(1)}x-\wh b^{(1)}\right) \right\Vert_\infty + \left\lVert b^{(2)}-\wh{b}^{(2)} \right\Vert_\infty \\
			\leq & RN^2(d+1)\epsilon+ 2RN^2n_q(2N+3)\epsilon+\epsilon \leq (10n_q+d+2)RN^3\epsilon =: \tilde \epsilon,
		\end{aligned}
		\ees
		Therefore, by taking a $\epsilon$-net of each free parameter in $F^{(j)}$ and $b^{(j)}$, the $\ww{\epsilon}$-covering number of $\mathcal{H}_{2}$ can be bounded by
		\be
		\begin{aligned}
			\mathcal{N}\left(\mathcal{H}_{2},\ww{\epsilon},\lVert \cdot \rVert_{C(\Omega)}\right) \leq \left\lceil\frac{2RN^2}{\epsilon}\right\rceil^{n_qd+qn_q+q(2N+3)} \left\lceil\frac{2R}{\epsilon}\right\rceil^{4N+6} \leq \left( \frac{\wh{R}}{\ww{\epsilon}}\right)^{T_1N} N^{T_2N},
		\end{aligned}
		\ee
		where $\wh{R}, T_1$ and $T_2$ are  defined in Lemma \ref{coveringnumber}. Thus we finish the proof by taking the logarithm.
	\end{proof}

	\vskip 0.2in
	
	\bibliographystyle{acm}
	\bibliography{biblist.bib}
	
\end{document}